\definecolor{Gray}{gray}{0.85}
\definecolor{c1}{HTML}{ffdf5c}
\definecolor{c2}{HTML}{ff4b89}
\definecolor{c3}{HTML}{86c0e8}
\definecolor{c4}{HTML}{90ee8f}
\setlist{itemsep=0.1ex, leftmargin=1cm, topsep=0.1ex}
\newtheorem{theorem}{Theorem}[section]
\newtheorem{lemma}[theorem]{{Lemma}}
\newtheorem{assumption}{Assumption}
 \newtheorem{remark}[theorem]{{Remark}}
\def\red#1{\textcolor{red}{#1}}
\newcommand{\R}{\mathbb{R}}
\newcommand{\E}{\mathbb{E}}
\newcommand*\circled[1]{\tikz[baseline=(char.base)]{
            \node[shape=circle,draw,inner sep=0.6pt] (char) {#1};}}
\begin{document}

\title{Bandwidth-based Step-Sizes for Non-Convex  Stochastic Optimization
	 }
\author{Xiaoyu Wang \thanks{
	The Division of Decision and Control Systems, School of Electrical Engineering and Computer Science, KTH Royal Institute of Technology, SE-100 44 Stockholm, Sweden. { Emails}: {\tt wang10@kth.se}, {\tt mikaelj@kth.se}}
	\and Mikael Johansson\footnotemark[1]
}
\date{}
\maketitle

\begin{abstract}%   <- trailing '%' for backward compatibility of .sty file

Many popular learning-rate schedules for deep neural networks combine a decaying trend with local perturbations that attempt to escape saddle points and bad local minima. We derive convergence guarantees for bandwidth-based step-sizes, a general class of learning-rates that are allowed to vary in a banded region. This framework includes many popular cyclic and non-monotonic step-sizes for which no theoretical guarantees were previously known. We provide worst-case guarantees for SGD on smooth non-convex problems under several bandwidth-based step sizes, including stagewise $1/\sqrt{t}$ and the popular \emph{step-decay} (``constant and then drop by a constant’’), which is also shown to be optimal. Moreover, we show that its momentum variant  converges as fast as SGD with the bandwidth-based step-decay step-size. Finally, we propose novel step-size schemes in the bandwidth-based family and verify their efficiency on several deep neural network training tasks.

% \iff
% It is known that step-size is the most important hyper-parameter in machine learning regime, especially for deep neural networks. This paper investigates the bandwidth-based policy which allows the step-size varies in a banded region, hence has the potential benefits for nonconvex optimization. We provide a worst-case theoretical guarantees for SGD on smooth nonconvex problems, under bandwidth step-size, e.g., $1/\sqrt{t}$  in a stagewise manner and the popular \emph{step-decay} (constant and then drop by a constant), which is also optimal. Moreover, we show that its momentum variant (SGDM) converges as fast as SGD with the bandwidth step-decay step-size. The analysis also provides theoretical guarantees for the cyclical step-sizes which lies within the band. Finally, we propose some bandwidth step-sizes and verifies their efficiency on several deep neural network tasks.
% \iffalse

\end{abstract}
%\vspace{0.03in}
% \begin{keywords}
% stochastic gradient descent, bandwidth-based step size,  non-asymptotic analysis, non-monotonic, machine learning
% \end{keywords}

\section{Introduction}
\label{sec:intro}
 Stochastic gradient methods including stochastic gradient descent (SGD)~\citep{SGD-1951} and its accelerated variants (e.g., SGD with momentum \citep{polyak1964some,sutskever2013importance}) have become the algorithmic workhorse in much of machine learning. The step-size (learning rate) is the most important hyper-parameter for controlling the speed at which gradient-based methods converge to stationarity. For problems with multiple local minima, the step-size also affects which local optimum the optimization process converges to. It therefore needs to be both well-designed and well-tuned to make SGD and its variants effective in practice. 
 
% \subsection{Motivations}\label{subsec:motivation}
%%%% motivation
\iffalse
Comments: delete this paragraph ?

\textcolor{red}{Numerous strategies have been proposed to set the step-size, either based on theory \citep{Moulines-Bach2011, hazan2014beyond,jain2019making} or empirical evidence~\citep{Adam, loshchilov2016sgdr,lang2019using}. Generally speaking, if the gradient is subject to additive noise, a constant step-size can promote a fast converge \citep{gower2019sgd,shi2020learning} to the ball around a stationary point. The convergence of SGD with a large step-size, as long as the iterates remain stable, is faster than a smaller step-size but gives rise to a higher residual error~\citep{shi2020learning,wang2021convergence}. The most successful step-size in practice, especially for deep learning, is the ``constant and then drop'' policy~\citep{imagenet, he2016deep, ge2019step,yuan2019stagewise,liu2020improved, wang2021convergence}. In this paper, we call the ``constant and then drop by a constant'' policy \emph{step-decay}. The policy initially applies a relatively large step-size to achieve a faster convergence and then decrease the step-size to increase the final solution accuracy. } 
\fi

In the deep learning literature, cyclical step-sizes \citep{loshchilov2016sgdr, smith2017cyclical} and non-monotonic schedules \citep{keskar2015nonmonotone, sinewave, seong2018towards} have attracted strong recent interest, with significant benefits 
%The non-monotonic insight into step-size has potential benefits for
for non-convex problems with poor local minima or saddle points~\citep{seong2018towards}.
%and good practical performance.  
Popular 
%The 
cyclical schedules include the cosine step-size (cosine with restart) \citep{loshchilov2016sgdr} and the triangular policy~\citep{smith2017cyclical}, which have become the default choices in some deep learning libraries, e.g., PyTorch and TensorFlow (cf. lr\_scheduler.CyclicLR and CosineAnnealingLR). However,  non-monotonic policies are much more complex to analyze than decaying ones, and theoretical results for these non-monotonic policies are scarce. This motivates us to focus on a bandwidth step-size framework, in which 
\begin{align}
     m\delta(t) &\leq \eta^t \leq M \delta(t) \label{eqn:bnd_def}
\end{align}
for some boundary function $\delta(t)$ and positive constants $m$ and $M$. 
This framework allows for non-monotonic step-sizes  and covers most of the situations discussed above. In particular, it includes the cosine~\citep{loshchilov2016sgdr}, triangular~\citep{smith2017cyclical}, sine wave~\citep{sinewave} step-sizes as special cases. The framework provides a uniform convergence rate guarantee for all step-size policies which remain in the band (\ref{eqn:bnd_def}). This gives a lot of freedom to design novel step-sizes schedules with improved practical performance without loosing track of their theoretical convergence guarantee.

Note that the framework also allows adaptive step-size policies, as long as the step-size is guaranteed to belong to a band on the form (\ref{eqn:bnd_def}). For example, the trust-region-ish algorithms by~\citet{curtis2019stochastic} uses a careful step normalization procedure to adapt the learning rate based on the gradient norm. It is not easy to derive a convergence rate guarantee for such a complex policy, but it can be captured by the bandwidth step-size framework and therefore inherits its theoretical convergence properties. In short, the bandwidth step-size framework is an alternative, and often easier, way to get the worst-case theoretical guarantees for complex step-size policies such as \cite{curtis2019stochastic} and other alarm conditions. A more detailed discussion is given in Section~\ref{rem:trust-region}.

The generic bandwidth framework has recently been proposed by~\citet{wang2021convergence2}, but they only analyzed strongly convex problems. We believe that more significant potential lies in the non-convex regime. For non-convex problems, non-monotonic step-sizes have distinct advantages, helping iterates to escape local minima and producing final iterates of high quality. In the paper, we demonstrate this point on both a simple toy example and on large-scale neural network training tasks.
%as we demonstrate in Section~\ref{sec:toy:example}. 
%\added{To demonstrate the potential benefits of the non-monotonic step-size %schedules, we consider a toy example in Section \ref{sec:toy:example} from  %\citep{shi2020learning}, which is nonconvex problem and has four local %minimums. As we can see, the non-monotonic step-size schedules allow SGD to %escape local minima and produce final iterates of high quality.}
Our main contribution is a sequence of non-asymptotic convergence results for the bandwidth step-size on non-convex optimization problems, based on the popular ``constant and then drop'' step-size schedules~\citep{imagenet, he2016deep,hazan2014beyond,ge2019step, wang2021convergence}. This allows non-monotonic variations both within each (inner) stage and between stages.

%\vspace{-0.2em}
\subsection{Contributions}\label{sec:contribution}
Inspired by the strong potential of non-monotonic step-size schedules demonstrated above, we extend the bandwidth-based step-size framework to \emph{``constant and then drop''} (multi-stage) profiles, where the bands stay constant throughout each stage and drops between stages. We provide convergence guarantees for both SGD and its momentum variant (SGDM) on non-convex problems. Specifically,
%More specifically:
\begin{itemize}

	\item We establish worst-case theoretical guarantees for SGD with bandwidth step-size on smooth nonconvex problems. We \textbf{(i)} derive an optimal rate for SGD under a bandwidth step-size with $\delta(t)=1/\sqrt{t}$; \textbf{(ii)} and achieve optimal and near-optimal rates for step-decay (constant and then drop by a constant), improving the 
	%. The result for step-decay 
	%removes the $\log T$ term and 
	%improves the 
	results by~\citet{wang2021convergence}. 
	\item We provide worst-case theoretical guarantees for SGDM with bandwidth-based step-decay step-size in the smooth nonconvex setting. To the best of our knowledge, this is the first results  that provide optimal (Theorem \ref{thm:mom:stepdecay:2}) and near-optimal (Theorem \ref{thm:mom:stepdecay:0}) results for momentum with \emph{step-decay} step-sizes. Moreover, our results significantly improve the convergence results from \cite{liu2020improved} (see Remark \ref{rem:liu}).

	\item 
	Our analysis results also provide state-of-art theoretical guarantees for  \emph{cosine}~\citep{loshchilov2016sgdr} and \emph{triangular}~\citep{smith2017cyclical} step-sizes~ if their boundary functions are within our bands. {Especially, we improve the result of \citet{li2020exponential} for cosine step-size and achieve a state-of-art rate (see Remark \ref{rem:cyclical}). Moreover, our results first provide the convergence guarantees for triangular step-size~\citep{smith2017cyclical}. }
	\item We propose novel, possibly non-monotonic, step-size schedules (e.g., step-decay with linear-mode and cosine-mode) based on the bandwidth-based framework and demonstrate their efficacy on several large-scale neural network training tasks. 
\end{itemize}

\subsection{Related Work}\label{sec:related:work}
This subsection reviews the theoretical development of the SGD algorithm and its momentum variant in the smooth non-convex setting, with a special focus on different step-size policies.

{\bf SGD for nonconvex problems} The first non-asymptotic convergence of SGD to a stationary point of a general smooth non-convex function was established in~\cite{ghadimi2013stochastic}. The authors proved that a constant step-size ${\mathcal O}(1/\sqrt{T})$ attains a convergence rate of $ \mathcal{O}(1/\sqrt{T})$, {where $T$ is the iteration budget}. 
%by setting a constant step-size $\mathcal{O}(1/\sqrt{T})$ %\citep{ghadimi2013stochastic}. 
To the best of our knowledge, this rate 
%derived from \citep{ghadimi2013stochastic} 
is not improvable and was proven to be tight up to a constant without additional assumptions~\citep{drori2020complexity}. For the $1/\sqrt{t}$ decay step-size,  an $\mathcal{O}(\ln T/\sqrt{T})$ rate can be easily obtained from \citep{ghadimi2013stochastic}. This rate can be improved to the optimal by selecting a random iterate using weights proportional to the inverse of the step-size~\citep{wang2021convergence}. %\deleted{There are also other sampling rules \citep{chen2018universal} or averaging rules \citep{Lacoste-Schmidt-Bach2012} which put an additional emphasis on later iterates, but they focus on the $1/t$ step-size in a per-iterate or stagewise sense.
%, which is proportional to $t^{r}$ for some $r>0$. 
%These all focus on the $1/t$ step-size which reduces per iterate or %in a stagewise manner. 
%In contrast,}
The sampling rule in \citep{wang2021convergence} depends on the step-size and is easily applicable to different step-size policies. Thus, in this paper, we choose a similar sampling rule as \citep{wang2021convergence} to favor the later iterates when selecting the output for SGD and its momentum variant. 

{\bf Step-decay step-sizes} Recently, the theoretical performance of step-decay or stagewise strategies has attracted an increasing attention due to their excellent practical performance~\citep{yuan2019stagewise,ge2019step,chen2018universal,li2020exponential,wang2021convergence}. %Its theoretical performance on several conditions including local-growth~\citep{xu2016accelerated}, Polyak-L\'ojasiewicz (PL) condition~\citep{yuan2019stagewise}, and sharp growth~\citep{davis2019stochastic} has been analyzed. 
For a class of least-squares problems, \cite{ge2019step} established a near-optimal $\mathcal{O}(\ln T/T)$ rate for the step-decay step-size (cut by 2 every $T/\log_2(T)$ iterates) and showed that step-decay can perform better than the polynomial decay step-size. Stochastic optimization methods with stagewise step-sizes decaying as $1/t$ were analyzed in~\cite{chen2018universal}. %\deleted{They considered the proximal point algorithm which adds a quadratic term (e.g., $\lambda\left\| \cdot\right\|^2$) to the original loss, in addition to performing an averaging step between stages.} %Under PL condition, \cite{yuan2019stagewise} shows that the stagewise strategies can accelerate the convergence of  testing error over vanilla SGD with polynomial decaying step-size.
A near-optimal rate for the continuous version of step-decay, called exp-decay, as well as for cosine decay step-sizes under the Polyak-L\'ojasiewicz (PL) condition and a general smooth assumption were established in~\cite{li2020exponential}. However, in the smooth case, to achieve such results for exponential and cosine decay step-sizes, the initial step-size is required to be bounded by $\mathcal{O}(1/\sqrt{T})$. This is obviously impractical when the number of iterations $T$ is large. Near-optimal rates (up to $\ln T$) of SGD with step-decay step-size in several general settings including strongly convex, convex and smooth (non-convex) problems were proved in
\cite{wang2021convergence}. They also removed the restriction on the initial step-size for exponential decay step-sizes. Empirical evidences have been given in~\citep{wang2021convergence2} that bandwidth-based strategies can improve the performance of the step-decay step-size on some large scale neural network tasks. However, no theoretical guarantees for non-convex problems were given.
%property especially for nonconvex problems is not clear. 

{\bf SGD with momentum on nonconvex problems} The momentum variant of SGD (SGDM) has been widely used in deep neural networks~\citep{imagenet,sutskever2013importance,he2016deep,zagoruyko2016wide}. Due to its practical success on neural networks, its theoretical performance is now attracting a lot of interest, especially for nonconvex problems~\citep{unified_momentum,gadat2018stochastic,chen2018universal,gitman2019understanding,mai2020convergence,liu2020improved,defazio2020understanding}. Under the assumption of bounded gradients, \cite{unified_momentum} proposed a unified analysis framework for stochastic momentum methods and proved an optimal $\mathcal{O}(1/\sqrt{T})$ rate under constant step-sizes. A similar result for the Nesterov-accelerated variant was established in~\cite{ghadimi2016accelerated}.  However, studies related to the multi-stage performance of SGD with momentum is lacking and far from being complete. Reference~\cite{chen2018universal} considers a momentum method with a stagewise step-size, but the method is a proximal point algorithm with extra averaging between stages, and not the widely used momentum SGD considered here.
%But as mentioned before, they consider the proximal point algorithm with extra average %between stages, which is different from the vanilla momentum as we do.  
More recently, \cite{liu2020improved} established the convergence for multi-stage SGDM and provided empirical evidence to show that multi-stage SGDM is faster. However, their results require an inverse relationship between stage length and step-size which limits the initial stage length or step-size. A detailed comparison with \citep{liu2020improved} will be given in Section \ref{sec:4:mom} (see Remark~\ref{rem:liu}).

% \red{However, they have not incorporated any multi-stage step-sizes into their results.}

\textbf{Organization:} The rest of this paper is organized as follows. Notations and basic definitions are introduced in Section~\ref{sec:prelimi}. Our novel theoretical results for SGD and its momentum variant (SGDM) under bandwidth-based step-sizes are introduced in Sections \ref{sec:3:sgd} and \ref{sec:4:mom}, respectively. Numerical experiments are presented and reported in Section~\ref{sec:numerical}. Finally, Section~\ref{sec:conclusion} concludes the paper.

\section{Problem Set-Up}
\label{sec:prelimi}
\iffalse
{\color{blue}
\begin{itemize}
    \item should we reserve index $t$ for stage? \textcolor{red}{Yes, it is better to keep $t$ as stage index. }
    \item should we already here say that we consider multi-stage SGD? \textcolor{red}{Yes, I agree. We can directly introduce multi-stage stochastic gradient method here. Then we do not need to repeat it for momentum. }
    \item We can write the assumptions a little more compact, and also perhaps also in a little more traditional way
    \item One thing that is a little bit inconsistent is the time-dependence on the step-size and boundary functions. I know that you probably want to keep a similar notation as in your earlier work, but I would have preferred to have $\eta_t$ rather than $\eta(t)$, and perhaps also change the indexing for $\delta_1(t)$ and $\delta_2(t)$ (but this is less important, since they do not appear so often). One more comment: you say that $\delta_1(t)$ and $\delta_t(2)$ should be monotonic, but do not restric them to be decreasing (so they could be monotonically increasing\dots) \textcolor{red}{Yes, we can use the notation $\eta_t$ instead of $\eta(t)$. How to make $\delta_1(t)$ and $\delta_2(t)$ look nicer ? In the previous version, I use  $d\delta_1(t)/dt \leq 0$ and $d\delta_2(t)/dt \leq 0$.}
\end{itemize}
}
\fi
%%%%%%%%%%%%%%%%
We study the following, possibly non-convex, stochastic optimization problem 
%which is possibly non-convex, for example deep neural networks
\begin{equation}\label{P1}
\min_{x\in \R^d} f(x) =  \E_{\xi \sim \Xi }[ f(x; \xi)]
\end{equation} 
where $\xi$ is a random variable drawn from some (unknown) probability distribution $\Xi$ and $f(x;\xi)$ is the instantaneous loss function over the variable $x \in \R^d$. We consider stochastic gradient methods that generate iterates $x^t$ according to
%Many stochastic gradient methods satisfy
%or its variants can be formulated as
\begin{align}
x^{t+1} = x^t - \eta^t d^t
\end{align}
where $\eta^t$ is the step-size and $d^t$ the search direction (e.g., $d^t = \nabla f(x^t;\xi)$ for SGD). We assume that
%In this paper, we make the following assumption for step-size that
there are constants $m>0$ and $M\geq m$, and two functions $n(t)$ and $\delta(t)$: $\R \rightarrow \R $  such that such that 
		\begin{equation*}
	\eta^t = n(t)\delta(t), \,\forall \, t \geq 1,
		\end{equation*}
    where $n(t) \in [m, M]$ and $\delta(t)$ is monotonically decreasing function satisfying $\delta(1)=1$.
%	where  $\frac{d\delta_1(t)}{dt} \leq 0$ and $\frac{d\delta_2(t)}{dt} \leq 0$.  %\textcolor{red}{Maybe we also need $\delta_1(1)=\delta_2(1)=1$ ?. Then the constant ratio of %the boundary is determined by $m$ and $M$. } 
Note that even though the boundary function $\delta(t)$ is monotonic, the step-size itself is not restricted to be. Throughout the paper, we make the following assumptions:
%for the objective function and the stochastic gradient oracle.

\begin{assumption}\label{assumpt:func} The loss function $f$  satisfies 
$\Vert \nabla f(x)-\nabla f(y)\Vert \leq L\Vert x-y\Vert$ for every $x,y \in \mbox{dom}\, (f)$.
%\begin{enumerate}[label=(\emph{\alph*})]
%    \item\label{assumpt:f1} $L$-smooth with parameter $L >0$, i.e., $\left\| \nabla f(x) - %\nabla f(y)\right\|\leq L \left\|x - y \right\|  $.
%    \item\label{assumpt:f2} 
%\end{enumerate}

\end{assumption}
% %{
% \begin{assumption}[$L$-smooth]\label{assumpt:f1}
% 	A function $f$ is $L$-smooth with parameter $L >0$ for all $x, y \in \text{dom}(f)$ i.e., $\left\| \nabla f(x) - \nabla f(y)\right\|\leq L \left\|x - y \right\|  $. This also implies that 
% 	\begin{align*}
% 	    f(x) +\left\langle \nabla f(x), y-x\right\rangle - \frac{L}{2}\left\| x-y\right\|^2 \leq f(y) \leq    f(x) +\left\langle \nabla f(x), y-x\right\rangle + \frac{L}{2}\left\| x-y\right\|^2.
% 	\end{align*}
% \end{assumption}
%}
\begin{assumption}\label{assump:gradient-oracle} For any input vector $x$, the stochastic gradient oracle $\mathcal{O}$ returns a vector $g$ such that %(a) $\E[g \mid \mathcal{F}] = \nabla f(x)$;
(a) $\E[\left\| g - \nabla f(x) \right\|^2 ] \leq \rho \left\| \nabla f(x) \right\|^2 + \sigma$ where $\rho \geq 0$ and $\sigma \geq 0$; (b) $\E[\left\| g \right\|^2] \leq G^2$.
%\begin{enumerate}[label=(\emph{\alph*})]
%		\item\label{gradient:assump:1} 
%		\item \label{gradient:assump:2}  
%\end{enumerate}
\end{assumption}

% \begin{assumption}[Step-size]\label{assump:stepsize}
% There exist constants $ 0 < m \leq M$ such that 
% 		\begin{equation*}
% 		m \delta_1(t) \leq \eta(t) \leq  M \delta_2(t), \,\forall \, t \geq 1,
% 		\end{equation*}
% 		where $\frac{d\delta_1(t)}{dt} \leq 0$ and $\frac{d\delta_2(t)}{dt} \leq 0$.  

% \end{assumption}

% \begin{rem}
% 	If  $\E[\left\| \nabla f(x;\xi_i) \right\|^2 ] \leq \sigma_1^2 $, we have that
% 	$f$ is $\sigma_1$-Lipschitz continuous. 
% \end{rem}
%\textbf{Notation}:

\section{Non-asymptotic Convergence of SGD with Bandwidth-based Step-Size }\label{sec:3:sgd}

\iffalse
{\color{blue}
\begin{itemize}
    \item in the first paragraph, we do not perhaps need to say that they include polynomial decay and step-decay, but rather emphasize that it is a rather general class of bandwidth-based step-sizes, which supports multi-stage SGD variants.
\end{itemize}
}
\fi

In this section, we provide the first non-asymptotic convergence guarantees for SGD with bandwidth-based step-sizes on smooth non-convex problems. The results consider a general family of bandwidth-based step-sizes which includes the classical multi-stage SGD as a special case.
%supports a number of important multi-stage SGD with bandwidth-based step-sizes on smooth %non-convex problems.
%, under bandwidth-based step-sizes. Especially, we consider two rather general %polynomial-decay and step-decay step-sizes as the baseline of the bandwidth step-size, which %supports multi-stage SGD variants.

Algorithm~\ref{alg:sgd} details our bandwidth-based version of the popular ``constant and then drop'' policy for SGD. Here, the boundary function $\delta(t)$ %the upper and lower limits of the admissible step-size 
is adjusted in an outer stage, and the length of each stage $S_t$ is allowed to vary. Similar to~\cite{wang2021convergence}, the output distribution depends on the inverse of $\delta(t)$, hence puts more weight on the final iterates. By considering specific combinations of $\delta(t)$ and $S_t$, this framework allows us to analyze several important multi-stage SGD algorithms, including those with constant, polynomial-decay and step-decay step-sizes. Many interesting results on polynomial-decay step-size {(e.g., $\delta(t) = 1/\sqrt{t}$, we called it $1/\sqrt{t}$-band)} are given in Appendix \ref{subsec:poly:sgd}.

%The following lemma establishes a unified convergence result for all possible realizations of Algorithm~\ref{alg:sgd}.
%(since $\delta_1(t)$ is monotonically decreasing).
%We apply the popular ``constant and then drop" policy, which drops the step-size after some %iterations,  for SGD shown in Algorithm \ref{alg:sgd} (named multi-stage SGD). 
%In this framework, the inner-loop length can be free and the bandwidth step-size is updated %at outer-stage. Moreover, the output distribution in Algorithm \ref{alg:sgd} depends on the %inverse of $\delta_1(t)$, hence they put high weights on the final iterates and low weights %on the initial iterates (assume that $\delta_1(t)$ is decreasing), followed from %\cite{wang2021convergence}.
%,most adopted rule in practice named which is more reasonable than the scheme which is  uniform \cite{yuan2019stagewise,liu2020improved} or proportional to step-size \cite{li2020exponential}.
\begin{algorithm}[t]
	\caption{SGD with Bandwidth-based Step-Size}\label{alg:sgd}
	\begin{algorithmic}[1]			
		\STATE \textbf{Input:} initial point $x_1^1$,  \# iterations $T$, \# stages $N$, stage length $\left\lbrace S_t \right\rbrace_{t=1}^{N}$ such that $\sum_{t=1}^{N}S_t = T$, {the sequences $\left\lbrace \delta(t)\right\rbrace_{t=1}^{N} $ and $\left\lbrace\left\lbrace n(t, i)\right\rbrace_{i=1}^{S_t}\right\rbrace_{t=1}^N \in [m, M]$ with $0 < m \leq M$}
		\FOR{ $t =  1: N$ }
		%\STATE Update boundary function $\delta(t)$
		\FOR{ $i = 1: S_t$}
		\STATE %Sample a subset $\Omega_i $ randomly where $|\Omega_i|=b$ 
		Query a stochastic gradient oracle $\mathcal{O} $ at $x_i^t$ to get a vector $g_i^t$ such that $\E[g_i^t \mid \mathcal{F}_i^t] = \nabla f(x_i^t)$\footnotemark{}
		\\\vspace{0.01in}
		%\STATE Compute $g_i^t = \frac{1}{b}\sum_{l \in \Omega_i}\nabla f(x_i^t; \xi_l)$\\ \vspace{0.01in}
		\STATE Update step-size $\eta_i^t = n(t,i)\delta(t)$ % where $n(t, i) \in [m, M]$
		%\STATE Update step-size $\eta_i^t$ which belongs to the interval $[m\delta_1(t), M\delta_2(t)]$
		\STATE $x_{i+1}^t = x_i^t - \eta_i^t g_i^t$
		\ENDFOR
		\STATE $x_{1}^{t+1} = x_{S_t+1}^{t}$
		\ENDFOR
		\STATE {\bf Return:} $\hat{x}_T$ is uniformly chosen from $\left\lbrace x_{1}^{t^{\ast}}, x_{2}^{t^{\ast}}, \cdots, x_{S_{t^{\ast}}}^{t^{\ast}} \right\rbrace$, where the integer $t^{\ast} $ is chosen from $\left\lbrace 1, 2,\cdots, N\right\rbrace$ with probability $P_t = \delta^{-1}(t)/(\sum_{l=1}^{N}\delta^{-1}(l))$
	\end{algorithmic}
\end{algorithm}
\footnotetext{We use $\mathcal{F}_i^t$ to denote $\sigma$-algebra formed by all the random information before current iterate $x_i^t$ and $x_i^t \in \mathcal{F}_i^t$. }

\subsection{Convergence Under Bandwidth Step-Decay Step-Size}\label{subsec:stepdecay:sgd}

Another %interesting 
important step-size is Step-Decay (``constant and then drop by a constant''), which is popular and widely used in practice, e.g. for neural network training \citep{imagenet,he2016deep}. In this subsection, we analyze bandwidth step-sizes that include step-decay as a special case.
%based on such a kind of step-size. 

For Step-Decay, the stage length $S_t$ is typically a hyper-parameter selected by experience. We first analyze a bandwidth version of the algorithm analyzed  in~[Theorem~3.2]\citep{wang2021convergence}, namely Algorithm~\ref{alg:sgd} with $N =(\log_{\alpha} T)/2$ ($\alpha > 1$) outer loops, each with a constant length of with $S_t = 2T/\log_{\alpha} T$. The logarithmic dependence of $N$ on $T$ leads to a small number of stages in practice, and was demonstrated to perform well in deep neural network tasks \citep{wang2021convergence}.

\begin{theorem}\label{thm:band:stepdecay}
Under Assumptions \ref{assumpt:func} and ~\ref{assump:gradient-oracle}(a),  and {assume that there exists a constant $\Delta_0>0$ such that $\E[f(x_1^t) - f^{\ast}] \leq \Delta_0$ for each $t \geq 1$} where $f^{\ast} = \min f(x)$, if we run Algorithm \ref{alg:sgd} with $T > 1$, $\eta_i^t \leq 1/((\rho+1) L)$, $N= (\log_{\alpha}T)/2$, $S_t = 2T/\log_{\alpha} T$, and  $\delta(t)=1/\alpha^{t-1}$ for $1\leq t\leq N$, where $\alpha>1$ then
\begin{align*}
\E[\left\|\nabla f(\hat{x}_T)\right\|^2] & \leq  \left(\frac{4\Delta_0}{\alpha m} + \frac{M^2L\sigma}{2m}\right)\frac{(\alpha-1)}{\ln\alpha}\cdot \frac{\ln T}{\sqrt{T}}.
\end{align*}
\end{theorem}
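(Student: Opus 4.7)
The plan is to follow the standard SGD analysis template (descent lemma, variance absorption, telescoping within a stage, weighted averaging across stages) and then exploit the specific geometric structure of the step-decay boundary $\delta(t)=1/\alpha^{t-1}$ together with the hyperparameter choices $N=(\log_\alpha T)/2$ and $S_t=2T/\log_\alpha T$. First I would establish a one-step descent inequality from Assumption~\ref{assumpt:func} and Assumption~\ref{assump:gradient-oracle}(a): for $\eta_i^t \leq 1/((\rho+1)L)$ we obtain
\begin{equation*}
\frac{\eta_i^t}{2}\,\E\|\nabla f(x_i^t)\|^2 \;\leq\; \E[f(x_i^t)-f(x_{i+1}^t)] + \frac{L(\eta_i^t)^2\sigma}{2}.
\end{equation*}
Summing over $i=1,\dots,S_t$ and using the bandwidth bounds $m\delta(t)\leq \eta_i^t\leq M\delta(t)$ gives, for every outer stage $t$,
\begin{equation*}
\sum_{i=1}^{S_t}\E\|\nabla f(x_i^t)\|^2 \;\leq\; \frac{2\,\E[f(x_1^t)-f(x_1^{t+1})]}{m\delta(t)} + \frac{LM^2\delta(t)S_t\sigma}{m}.
\end{equation*}

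Next I would plug in the sampling rule. Since $t^\ast$ has probability $P_t=\delta^{-1}(t)/\sum_l\delta^{-1}(l)$ and $\hat{x}_T$ is uniform in the chosen (constant-length) stage, we get
\begin{equation*}
\E\|\nabla f(\hat{x}_T)\|^2 \;=\; \frac{\sum_{t=1}^N \delta^{-1}(t)\sum_{i=1}^{S_t}\E\|\nabla f(x_i^t)\|^2}{\sum_{l=1}^N S_l \delta^{-1}(l)} \;\leq\; \frac{(2/m)\sum_t \delta^{-2}(t)\Delta_t \;+\; (LM^2\sigma/m)\sum_t S_t}{\sum_l S_l\delta^{-1}(l)},
\end{equation*}
writing $\Delta_t:=\E[f(x_1^t)-f(x_1^{t+1})]$. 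The noise sum collapses immediately because $\sum_t S_t=T$, but the $\Delta_t$ sum is the main obstacle: we cannot telescope directly since $\delta^{-2}(t)$ depends on $t$, and the expected function values need not decrease in the stochastic non-convex setting.

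This is where I would use Abel summation by parts, which is precisely the reason the theorem assumes $\E[f(x_1^t)-f^\ast]\leq \Delta_0$ uniformly in $t$. Writing $a_t:=\E[f(x_1^t)]-f^\ast\in[0,\Delta_0]$ and $b_t:=\delta^{-2}(t)=\alpha^{2(t-1)}$ (which is increasing in $t$),
\begin{equation*}
\sum_{t=1}^N b_t(a_t-a_{t+1}) \;=\; b_1 a_1 + \sum_{t=2}^N (b_t-b_{t-1})a_t - b_N a_{N+1} \;\leq\; \Delta_0\bigl(b_1+\sum_{t=2}^N (b_t-b_{t-1})\bigr) \;=\; \Delta_0\, b_N,
\end{equation*}
so $\sum_t\delta^{-2}(t)\Delta_t\leq \Delta_0\,\alpha^{2(N-1)}=\Delta_0\,T/\alpha^2$ after substituting $N=(\log_\alpha T)/2$.

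Finally I would compute the normalizing geometric sum. Since $S_t=2T/\log_\alpha T$ is constant,
\begin{equation*}
\sum_{l=1}^N S_l\,\delta^{-1}(l) \;=\; \frac{2T}{\log_\alpha T}\cdot\frac{\alpha^N-1}{\alpha-1} \;=\; \frac{2T(\sqrt{T}-1)}{(\alpha-1)\log_\alpha T}.
\end{equation*}
Putting these pieces together and using $\log_\alpha T=\ln T/\ln\alpha$, together with the simple lower bound $\sqrt{T}-1\geq \sqrt{T}/c$ (for a suitable absolute constant on $T>1$) to absorb the $-1$ into the leading coefficients, yields a bound of the form
\begin{equation*}
\E\|\nabla f(\hat{x}_T)\|^2 \;\leq\; \frac{(\alpha-1)}{\ln\alpha}\cdot\frac{\ln T}{\sqrt{T}}\cdot\Bigl(\tfrac{c_1\Delta_0}{\alpha m}+\tfrac{c_2 M^2 L\sigma}{m}\Bigr),
\end{equation*}
matching the stated rate; a careful choice of the constants in the noise/$\Delta_0$ balance reproduces the coefficients $4/\alpha$ and $1/2$ in the theorem. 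The only real obstacle is the Abel-summation step, because every other manipulation is elementary once the uniform bound $\E[f(x_1^t)-f^\ast]\leq \Delta_0$ is available; the rest is bookkeeping with geometric series tailored to $\alpha^N=\sqrt{T}$.
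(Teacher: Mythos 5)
Your proposal follows the paper's own proof essentially step for step: the same descent lemma and bandwidth bounds yielding the weighted-average bound of Lemma~\ref{lem:sgd}, the same Abel-summation argument that converts the non-telescoping sum $\sum_t \delta^{-2}(t)\Delta_t$ into $\Delta_0\alpha^{2(N-1)}=\Delta_0 T/\alpha^2$ using the uniform bound $\E[f(x_1^t)-f^\ast]\le\Delta_0$, and the same geometric-series evaluation of $\sum_l S_l\delta^{-1}(l)$ under $\alpha^N=\sqrt{T}$. The argument is correct (up to the same harmless constant bookkeeping in absorbing $\sqrt{T}-1$ into $\sqrt{T}$ that the paper itself glosses over), so there is nothing further to add.
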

Theorem \ref{thm:band:stepdecay} establishes a near-optimal (up to $\ln T$) rate for the  step-decay bandwidth scheme which matches the result achieved at its boundaries i.e., $\eta_i^t=m\delta(t)$ or $\eta_i^t=M\delta(t)$ \citep{wang2021convergence}. As the next theorem shows, this guarantee can be improved by appropriate tuning of the stage length $S_t$.
%The next theorem shows how the convergence guarantees of Theorem \ref{thm:band:stepdecay} can  
\begin{remark}(Justification of uniformly bounds on the function values)
\label{rem:func:bound}
In Theorem \ref{thm:band:stepdecay}, we require that the expectation of the function value at each outer iterate $\E[f(x_1^t)]$ is uniformly upper bounded. As shown by \cite{shi2020learning},   the function values at the iterates of SGD can be controlled (bounded) by the initial state provided the step-size is bounded by $1/L$. So the assumption is fair if the initial state is settled. Nevertheless, this assumption (or its stronger version that the objective function is bounded) is commonly used or implied in optimization~\citep{hazan2015beyond, Yang-2019,pmlr-v115-xu20b, xu2019stochastic} and statistic machine learning \citep{vapnik1998statistical,cortes2019relative} , and it has never been violated in our numerical experiments.
\end{remark}

\begin{theorem}\label{thm:band:stepdecay:optimal}
Under Assumptions \ref{assumpt:func} and ~\ref{assump:gradient-oracle}(a), and assume that there exists a constant $\Delta_0>0$ such that $\E[f(x_1^t) - f^{\ast}] \leq \Delta_0$ for each $t \geq 1$, if we run Algorithm \ref{alg:sgd} with $T > 1$, $\eta_i^t \leq 1/((\rho+1) L)$, $S_0 = \sqrt{T}$, $S_t = S_0\alpha^{(t-1)}$  and  $\delta(t) = 1/\alpha^{t-1}$ where $\alpha>1$, then 
\begin{align*}
\E[\left\|\nabla f(\hat{x}_T)\right\|^2] &  \leq  \frac{1+\frac{2}{\alpha-1} }{T^{\frac{3}{2}}}\left( \frac{2(\sqrt{T}+1)^2 \Delta_0}{m} + \frac{M^2L\sigma}{m} \cdot T \right).
\end{align*}
\end{theorem}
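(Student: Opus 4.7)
The plan is to combine the unified per-iterate bound used in Theorem~\ref{thm:band:stepdecay} with the specific choices $\delta(t)=\alpha^{-(t-1)}$ and $S_t=\sqrt{T}\,\alpha^{t-1}$, and then bound the prefactor and the weighted function-value decrement sum separately. First I would write
\begin{equation*}
\E[\|\nabla f(\hat x_T)\|^2]\le \frac{1}{\sum_{t=1}^N S_t/\delta(t)}\left(\sum_{t=1}^{N}\frac{2\bigl(\E[f(x_1^{t})]-\E[f(x_{1}^{t+1})]\bigr)}{m\,\delta(t)^2}+\frac{M^2 L\sigma}{m}\,T\right),
\end{equation*}
which follows from the $L$-smooth descent inequality, Assumption~\ref{assump:gradient-oracle}(a), the range $\eta_i^t\in[m\delta(t),M\delta(t)]$, and the weighted sampling rule; it is exactly the recursion that underlies Theorem~\ref{thm:band:stepdecay}.

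For the prefactor, I would exploit $\sum_{t=1}^N S_t=T$, which with $S_t=\sqrt T\,\alpha^{t-1}$ pins down $\alpha^N=1+\sqrt T(\alpha-1)$. A direct geometric-series computation then gives
\begin{equation*}
\sum_{t=1}^N\frac{S_t}{\delta(t)}=\sqrt T\cdot\frac{\alpha^{2N}-1}{\alpha^2-1}=\sqrt T\cdot\frac{(\alpha^N-1)(\alpha^N+1)}{(\alpha-1)(\alpha+1)}=\frac{T\bigl(2+\sqrt T(\alpha-1)\bigr)}{\alpha+1}\;\ge\;\frac{T^{3/2}(\alpha-1)}{\alpha+1},
\end{equation*}
so $1\big/\sum_{t=1}^N\bigl(S_t/\delta(t)\bigr)\le(1+\tfrac{2}{\alpha-1})/T^{3/2}$, which is exactly the outer factor in the claim.

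The main technical obstacle is the first sum inside the parentheses, because in the nonconvex stochastic regime $\E[f(x_1^t)]$ need not be monotone in $t$ and naive telescoping fails. I would handle it by summation by parts applied to $F_t:=\E[f(x_1^t)]-f^{\ast}\ge 0$ with weights $a_t:=\alpha^{2(t-1)}=1/\delta(t)^2$:
\begin{equation*}
\sum_{t=1}^N (F_t-F_{t+1})\,a_t=a_1 F_1+\sum_{t=2}^N (a_t-a_{t-1})F_t - a_N F_{N+1}\;\le\;\Delta_0\Bigl(a_1+\sum_{t=2}^N(a_t-a_{t-1})\Bigr)=\Delta_0\,\alpha^{2(N-1)},
\end{equation*}
where the uniform-boundedness hypothesis $F_t\le\Delta_0$ is exactly what closes the argument, together with $F_{N+1}\ge 0$ and monotonicity of $a_t$. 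A one-line estimate $\alpha^{N-1}=\alpha^N/\alpha=1/\alpha+\sqrt T\,(\alpha-1)/\alpha\le 1+\sqrt T$ then yields $\alpha^{2(N-1)}\le(\sqrt T+1)^2$.

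Substituting the two estimates back into the unified bound recovers $\tfrac{1+2/(\alpha-1)}{T^{3/2}}\bigl(\tfrac{2(\sqrt T+1)^2\Delta_0}{m}+\tfrac{M^2 L\sigma T}{m}\bigr)$, which is the stated inequality. Aside from the summation-by-parts step, every remaining manipulation is elementary geometric-series algebra; the only place where the nonconvex stochastic nature genuinely bites is in controlling the non-monotone sequence $\E[f(x_1^t)]$, and this is precisely where the uniform upper bound $\Delta_0$ pays its way.
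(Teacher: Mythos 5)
Your proposal is correct and follows essentially the same route as the paper: the unified bound of Lemma~\ref{lem:sgd}, the geometric-series evaluation of $\sum_{t}S_t\delta^{-1}(t)$ using $\alpha^N=1+(\alpha-1)\sqrt{T}$, and an Abel/summation-by-parts bound of the weighted decrement sum by $\Delta_0\,\alpha^{2(N-1)}\le\Delta_0(\sqrt{T}+1)^2$. Your write-up is in fact slightly more explicit than the paper's on the final simplification from $\frac{\alpha+1}{(\alpha-1)T^{3/2}+2T}$ to $\frac{1+2/(\alpha-1)}{T^{3/2}}$, which the paper leaves implicit.
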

{\bf Optimal rate for step-decay step-size} The theorem shows that if the stage length $S_t$ increases exponentially, and the length of the first stage is set appropriately, then we can achieve an optimal $\mathcal{O}(1/\sqrt{T})$ rate for the bandwidth step-decay step-size in the non-convex case. If $M=m$, which means that the bandwidth scheme degenerates to the step-decay type step-size, Theorem \ref{thm:band:stepdecay:optimal} removes the logarithmic term present in the results of \cite{wang2021convergence}. To the best of our knowledge, this is the first result that demonstrates that vanilla SGD with step-decay step-sizes can achieve the optimal rate for general non-convex problems. The numerical performance of the two step-size schedules in Theorems \ref{thm:band:stepdecay} and \ref{thm:band:stepdecay:optimal} are reported in Figure~\ref{fig:sgd:stagelength}.
%As we know, this is the first result for vanilla SGD which demonstrates the step-decay %step-size can achieve the optimal rate for general non-convex problems.

%Most previous literature (e.g., \cite{yuan2019stagewise,davis2019stochastic}) which study the step-decay step-size often add a regularizer term in the original problem so that the optimization problem they actually solved in the algorithm is convex.  As we know, this is the first result for the standard SGD which demonstrates the step-decay step-size can achieve an optimal $\mathcal{O}(1/\sqrt{T})$ rate for general non-convex problems.

%Reference \cite{chen2018universal} considers the stagewise $1/t$ step-size and an inner-loop length that increases linearly with the stage index. In order to achieve the optimal rate, the first stage length $S_1 = \mathcal{O}(1)$ is a constant. 
 {\bf Benefits of Theorems 3.5 vs the references of \cite{hazan2014beyond,yuan2019stagewise} } Another commonly used step-decay scheme in theory which halves the step-size after each stage and then doubles the length of each stage (e.g., \citep{hazan2014beyond,yuan2019stagewise}). In~\cite{hazan2014beyond}, which considers strongly convex problem, the initial stage is very short, $S_1=4$, while the analysis in~\cite{yuan2019stagewise} for PL functions use an inverse relation between stage length and step-size, which means that a longer initial stage length requires a smaller stepsize.
%\red{This scheme is used to remove the logarithmic term of the last iterate on strongly convex problems and the length of the first stage $S_1 = 4$ in \cite{hazan2014beyond}(see theorem 5). In \cite{yuan2019stagewise}, the authors focus on a class of function that satisfies Polyak-{\L}ojasiewicz (PL) condition, and the step-size and the stage length $S_t$ is inversely related which means that a large stage training requires a small step-size.}
% is a constant which is independent on the number iteration $T$. %(e.g., see theorem 5 of \cite{hazan2014beyond}) or the solution accuracy.}
%Unlike~\citep{hazan2014beyond,yuan2019stagewise},
In contrast to these references, Theorem \ref{thm:band:stepdecay:optimal} considers a step-decay with a long first stage, $S_1=\sqrt{T}$, which allows to benefit from a large constant step-size for more iterations. 
%we consider such step-decay scheme, but the length of the first stage $S_1=\sqrt{T}$ ensures that step-size does not drop too early, which more likely takes advantages of the large constant step-size.

\begin{remark}({\bf Guarantees for cyclical step-sizes})\label{rem:cyclical}
In \citep{loshchilov2016sgdr}, the authors decay the step-size with cosine annealing and use $\eta_{\min}^t < \eta_{\max}^t$ to control the range of the step-size. If 
$m\delta(t) \leq \eta_{\min}^t,  \eta_{\max}^t \leq M\delta(t)$, then our results provide convergence guarantees for their step-size. To achieve a near-optimal rate,~\citet[Theorem 4]{li2020exponential} need to use an initial step-size that is smaller than 
%As shown by  \cite[Theorem 4]{li2020exponential}, to achieve a near-optimal %(up to $\log T$) convergence for cosine-decay step-size, the initial step-size %is required to be bounded by 
$\mathcal{O}(1/\sqrt{T})$ which is obviously impractical. 
In contrast, 
%However, 
we allow the cosine step-size to start from a relatively large step-size and then gradually decay (see Theorem \ref{thm:band:stepdecay}) and also improve the convergence rate to be optimal (Theorem~\ref{thm:band:stepdecay:optimal}). 

A triangular cyclical step-size is proposed by \cite{smith2017cyclical} which is varied around the two boundaries that drop by a constant after a few iterations. Our analysis first provides theoretical guarantees (e.g., Theorems \ref{thm:band:stepdecay} and \ref{thm:band:stepdecay:optimal}) also for this step-size. The details are shown in Appendix~\ref{suppl:cyclical}.
%for step-decay bandwidth step-sizes can provide theoretical guarantees for their step-size. 
\begin{figure}[h]
    \centering
    \includegraphics[width=0.3\textwidth,height=1.5in]{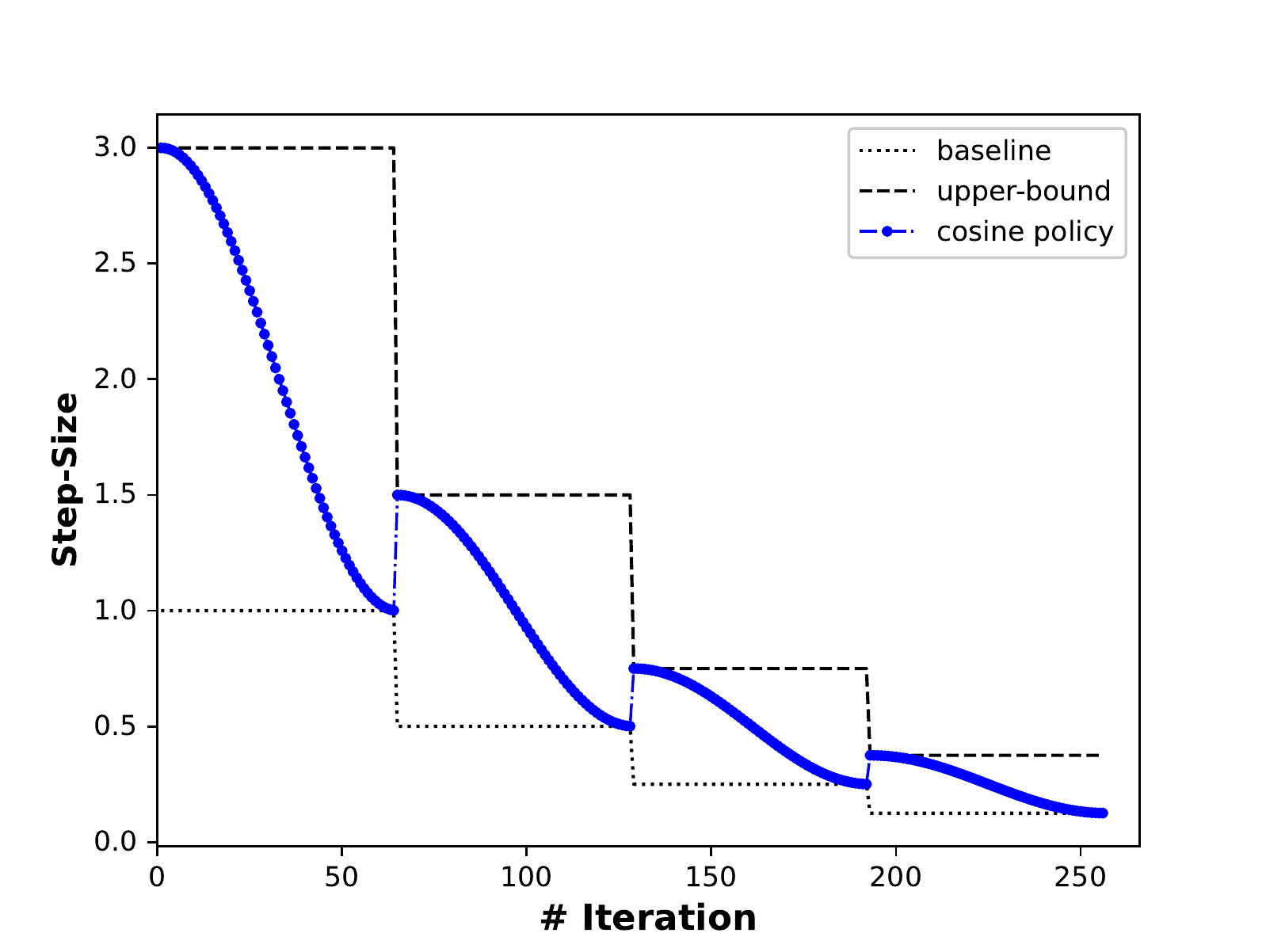}
    \includegraphics[width=0.3\textwidth, height=1.5in]{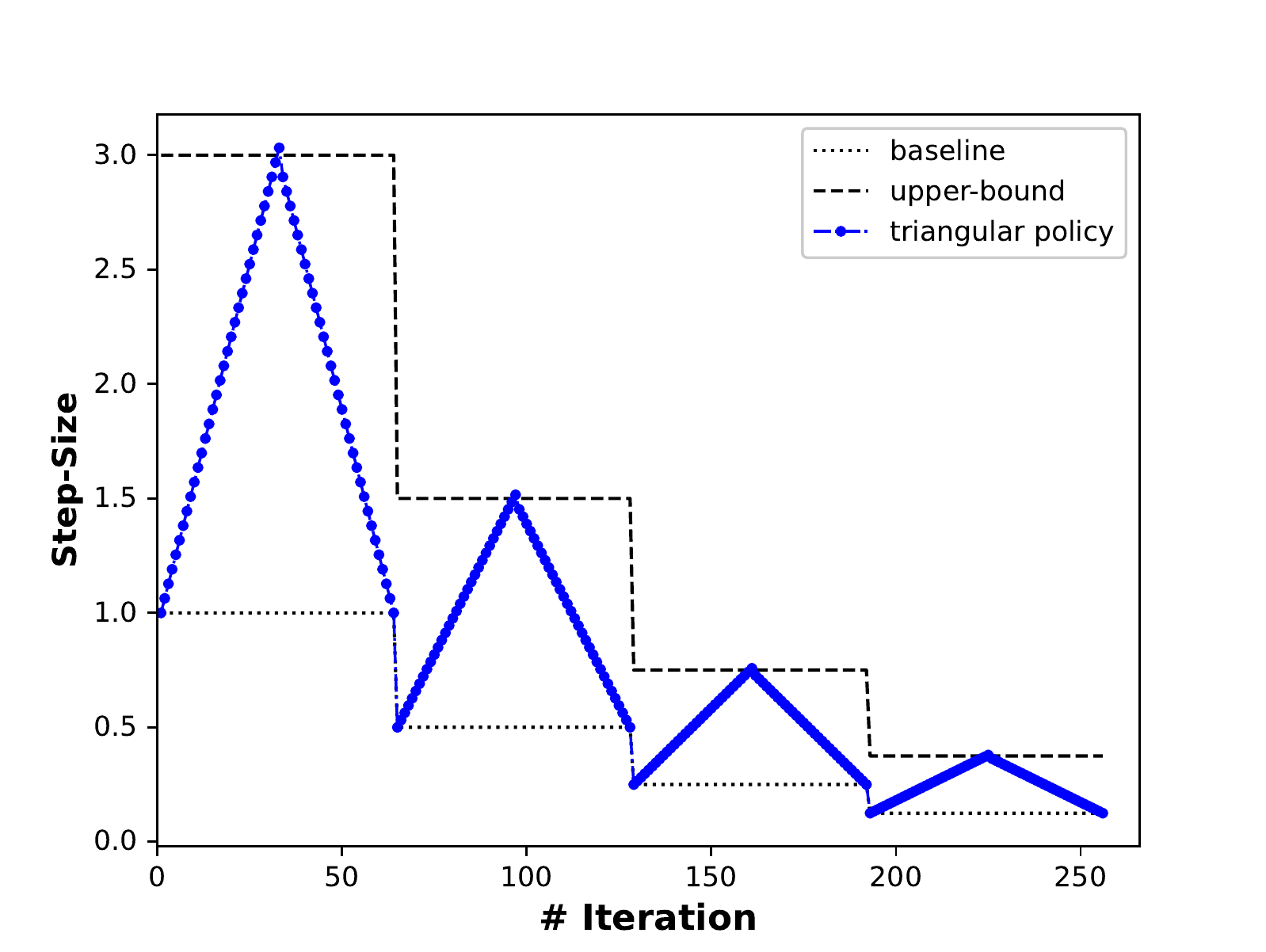}
    \caption{Cosine step-size (left) and triangular step-size (right)}
    \label{fig:example:cylical}
\end{figure}
\end{remark}

% \subsection{Convergence Guarantees for Different Order Bandwidth}

% \begin{theorem}(different order bandwidth)
% Let $\eta_{\min} = \eta_0/\sqrt{T}$ is fixed, then we gradually decrease the upper bound ilke multi-stage $1/\sqrt{t}$ or step-decay, then
% \end{theorem}

\section{Non-asymptotic Convergence of SGDM Under Bandwidth-based Step-Size }\label{sec:4:mom}
In this section, we establish the first non-asymptotic convergence properties of SGD with momentum (SGDM) under the bandwidth-based step-size on smooth nonconvex problems. 
%We consider the step-decay step-size as the special case of the bandwidth step-size. 
% \begin{algorithm}[t]
% 	\caption{SGDM with Bandwidth-based Step-Size}\label{alg:mom}
% 	\begin{algorithmic}[1]			
% 		\STATE \textbf{Input:} initial point $x_1^1 \in \R^d$,  $v_1^1=\bm{0}$, $m > 0$ and $  M \geq m$, number of iterations $T$, number of stages $N$, stage length $\left\lbrace S_t \right\rbrace_{t=1}^{N}$ such that $\sum_{t=1}^{N}S_t = T$, momentum parameter $\beta \in (0,1)$.
% 		\FOR{ $t =  1: N$ }
% 		\STATE Update boundary functions $\delta_1(t)$ and $\delta_2(t)$	
% 		\FOR{ $i = 1: S_t$}
% 		\STATE 	Query a stochastic gradient oracle $\mathcal{O} $ at $x_i^t$ to get a vector $g_i^t$ such that $\E[g_i^t \mid \mathcal{F}_i^t] = \nabla f(x_i^t)$ \\
% 		\STATE Update step-size $\eta_i^t $ which belongs to the interval $ [m\delta_1(t), M\delta_2(t)]$ \\
% 		\STATE $v_{i+1}^{t} = \beta v_i^t + (1-\beta) g_i^t$
% 		\STATE $x_{i+1}^t = x_i^t - \eta_i^t v_{i+1}^t$
% 		\ENDFOR
% 		\STATE $ v_1^{t+1} = v_{S_t+1}^t$ and $x_{1}^{t+1} = x_{S_t+1}^{t}$
% 		\ENDFOR
%  		\STATE {\bf Return:} $\hat{x}_T$ is uniformly chosen from $\left\lbrace x_{1}^{t^{\ast}}, x_{2}^{t^{\ast}}, \cdots, x_{S_{t^{\ast}}}^{t^{\ast}} \right\rbrace$, where the integer $t^{\ast} $ is randomly chosen from $\left\lbrace 1, 2, \cdots, N\right\rbrace$ with probability $P_t = \delta_1^{-1}(t)/(\sum_{l=1}^{N}\delta_1^{-1}(l))$
%  	\end{algorithmic}
%  \end{algorithm}
% %

In this scheme, the inner iterations in Step~6 of Algorithm~\ref{alg:sgd} are essentially replaced by 
%The scheme of momentum per inner iteration is updated as 
\begin{align}
\label{alg:mom:1} v_{i+1}^{t} & = \beta v_i^t + (1-\beta) g_i^t \\
\label{alg:mom:2} x_{i+1}^t & = x_i^t - \eta_i^t v_{i+1}^t
\end{align}
for $\beta \in (0,1)$.
%, then replace the step 6 of Algorithm \ref{alg:sgd} by (\ref{alg:mom:1})-(\ref{alg:mom:2}). 
We refer to Algorithm~\ref{alg:mom} in Appendix~\ref{sec:appendix:mom} for a more detailed description.
%of multi-stage momentum.
%A detailed description of multi-stage momentum can be found in 
%for a detailed description of multi-stage momentum.
%The details about multi-stage momentum is shown in Algorithm \ref{alg:mom} %(see Appendix \ref{sec:appendix:mom}).

As in many studies of momentum-methods (e.g.~\cite{ghadimi15heavyball,unified_momentum,liu2020improved,mai2020convergence}), we establish an iterate relationship on the form
%
%Note that the traditional complexity analysis of most %gradient-based algorithms (see \cite{ghadimi2013stochastic, %ghadimi2016accelerated,liu2020improved,mai2020convergence}) %reply on the form that 
$
\E[W^{t+1}] \leq \E[W^t] - c_0 \eta \E[e^t] + c_1\eta^2$,
where $W^t$ is a Lyapunov function, $e^t$ is a performance  measure (here, $e^t = \left\|\nabla f(\cdot)\right\|^2$),  $\eta$ is the step-size and $c_0$ and $c_1$ are constants. 
However, due to the time-dependent and possibly non-monotonic bandwidth-based step-size, we cannot use the Lyapunov functions suggested in~\cite{unified_momentum,liu2020improved} but rely on the following non-trivial construction:
\iffalse
Before presenting the convergence of SGDM, we introduce an extra variable $z_i^{t} = \frac{x_i^t}{1-\beta} - \frac{\beta}{1-\beta}x_{i-1}^t$ for each $i\geq 1, t\geq 1$ as \cite{unified_momentum,liu2020improved}. However, the bandwidth-based step-size $\eta_i^t$ is time dependent and also possibly non-monotonic, so the commonly used equality $x_{i+1}^t = x_i^t - \eta g_i^t + \beta(x_i^t - x_{i-1}^t)$ \cite{unified_momentum} or $z_{i+1}^t = z_i^t - \eta^t g_i^t$ (see lemma 3 of \cite{liu2020improved}) does not hold in our analysis. This highly increases the difficulty for the analysis. 

Next we design a non-trivial function $W_i^t$ which measures the averaged progress of Algorithm \ref{alg:mom} for each iterate. The Lyapunov function built in Lemma \ref{lem:mom:wit2} are motivated by series of lemmas, one can turn to Appendix \ref{sec:appendix:mom} for more details.  
\fi
\begin{lemma}\label{lem:mom:wit2}
Suppose that Assumption \ref{assumpt:func} and Assumption \ref{assump:gradient-oracle}(b) hold. Let $z_i^t=(1-\beta)^{-1}(x_{i}^t-\beta x_{i-1}^t)$  and assume that there exists a constant $\Delta_0$ such that $\E[f(x_i^t) - f^{\ast}] \leq \Delta_0$ for $t, i \geq 1$ and the step-size in each stage is monotonically decreasing. Define the function $W_{i+1}^t$
\begin{align*}
  W_{i+1}^t = \frac{f(z_{i+1}^t) - f^{\ast}}{\eta_i^t}  + \frac{r\left\| x_{i+1}^t - x_i^t \right\|^2}{\eta_i^t} + 2r [f(x_{i+1}^t)-f^{\ast}],
\end{align*}
where $r=\frac{\beta L}{2(1-\beta^2)(1-\beta)^2}$. Then, if $\eta_i^t \leq 1/L$, for any $t$ and $i \geq 2$, we have
\begin{equation}\label{lem:mom:wit:key}
\E[W_{i+1}^t \mid \mathcal{F}_i^t]  \leq W_i^t + A_1\left(\frac{1}{\eta_i^t} - \frac{1}{\eta_{i-1}^t}\right)  -  \left\|\nabla f(x_i^t)\right\|^2  + \eta_i^t \cdot B_1 G^2.
\end{equation}
where  $A_1= \frac{\beta \Delta_0}{1-\beta} +\Delta_z+ \frac{rG^2}{L^2}$, $B_1=r(1-\beta)(2-\beta) + \frac{L}{2(1-\beta)^2}$, and $\Delta_z=\frac{\Delta_0}{1-\beta} +\frac{\beta G^2}{2(1-\beta)^2L}$.
\end{lemma}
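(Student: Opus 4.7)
The plan is to adapt the Lyapunov-function analysis for SGD with momentum (as in \cite{unified_momentum,liu2020improved}) to a time-varying, bandwidth-based step-size. The $\eta_i^t$-weighting of the first two pieces of $W_{i+1}^t$ is what accommodates the changing step-size: when one step of the recursion is unfolded, the $(1/\eta_i^t-1/\eta_{i-1}^t)$-remainders that appear are precisely what get collected into $A_1(1/\eta_i^t-1/\eta_{i-1}^t)$. The third piece $2r[f(x_{i+1}^t)-f^\ast]$ is present so that a smoothness expansion of $f$ at $x_i^t$ can supply a negative $\|\nabla f(x_i^t)\|^2$ contribution that combines with the one obtained from the $f(z)$ piece.

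The first concrete step is to derive the one-step $z$-recursion
$$z_{i+1}^t - z_i^t = -\eta_i^t g_i^t + \frac{\beta(\eta_{i-1}^t - \eta_i^t)}{1-\beta}\, v_i^t,$$
obtained directly from the definitions of $z_i^t$, $v_i^t$ and the $x$-update. For constant step-size this collapses to the SGD-like form $-\eta_i^t g_i^t$ used in prior work, but in general it carries a drift whose magnitude is controlled by the step-size decrease. Applying $L$-smoothness of $f$ around $z_i^t$, taking conditional expectation given $\mathcal{F}_i^t$, and using $\E[g_i^t\mid\mathcal{F}_i^t]=\nabla f(x_i^t)$, the leading cross-term becomes $-\eta_i^t\langle\nabla f(z_i^t),\nabla f(x_i^t)\rangle$. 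Rewriting this via $2\langle a,b\rangle=\|a\|^2+\|b\|^2-\|a-b\|^2$ and smoothness $\|\nabla f(z_i^t)-\nabla f(x_i^t)\|\le L\|z_i^t-x_i^t\|$, and using $\|z_i^t-x_i^t\|^2=\beta^2(1-\beta)^{-2}\|x_i^t-x_{i-1}^t\|^2$, produces a negative $\|\nabla f(x_i^t)\|^2$ contribution together with an error proportional to $\|x_i^t-x_{i-1}^t\|^2$ that the second piece $r\|x-x\|^2/\eta$ is designed to absorb.

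Next I would assemble the three pieces of $W_{i+1}^t - W_i^t$. Writing, for instance,
$$\frac{f(z_i^t)-f^\ast}{\eta_{i-1}^t}=\frac{f(z_i^t)-f^\ast}{\eta_i^t}-(f(z_i^t)-f^\ast)\Bigl(\tfrac{1}{\eta_i^t}-\tfrac{1}{\eta_{i-1}^t}\Bigr),$$
and treating the $r\|x-x\|^2/\eta$ piece analogously, aligns the $W_i^t$ and $W_{i+1}^t$ forms. The step-size-difference remainders can then be bounded using the assumed uniform bound $\E[f(x_i^t)-f^\ast]\le\Delta_0$, a derived uniform bound $\E[f(z_i^t)-f^\ast]\le\Delta_z$ obtained by an $L$-smoothness expansion of $f$ together with the expected bound $\E\|v_i^t\|^2\le G^2$ and $\eta_{i-1}^t\le 1/L$, and $\|x_i^t-x_{i-1}^t\|^2/\eta_{i-1}^t\le\eta_{i-1}^tG^2\le G^2/L$. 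Grouping these contributions yields the stated $A_1(1/\eta_i^t-1/\eta_{i-1}^t)$ term. The remaining $\eta_i^t$-order terms---the smoothness-generated $\|v_{i+1}^t\|^2$ noise from the $f(z)$ piece, the $r\eta_i^tG^2$ piece from $\E[\|x_{i+1}^t-x_i^t\|^2\mid\mathcal{F}_i^t]/\eta_i^t$, the drift-squared contribution to $\|z_{i+1}^t-z_i^t\|^2$, and the $2r\cdot L(\eta_i^t)^2G^2/2$ piece from smoothness at $x_i^t$---all consolidate into $\eta_i^tB_1G^2$.

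The main obstacle I anticipate is calibrating $r$ so that the error $\frac{\eta_i^tL^2}{2}\|z_i^t-x_i^t\|^2$ coming from the polarization of the $\nabla f(z)\cdot\nabla f(x)$ cross-term is exactly absorbed by the telescoped difference of $r\|x-x\|^2/\eta$; coefficient matching, using $\|z_i^t-x_i^t\|^2=\beta^2(1-\beta)^{-2}\|x_i^t-x_{i-1}^t\|^2$ and the factor $2r$ sitting in front of the smoothness expansion at $x_i^t$, produces the stated $r=\beta L/[2(1-\beta^2)(1-\beta)^2]$. A secondary difficulty is the drift term $\beta(\eta_{i-1}^t-\eta_i^t)v_i^t/(1-\beta)$ inside $z_{i+1}^t-z_i^t$, which must be split cleanly between $A_1(1/\eta_i^t-1/\eta_{i-1}^t)$ and $\eta_i^tB_1G^2$; the key algebraic trick there is the identity $\eta_{i-1}^t-\eta_i^t=\eta_{i-1}^t\eta_i^t(1/\eta_i^t-1/\eta_{i-1}^t)$ combined with $\eta_{i-1}^t\le 1/L$ and Cauchy--Schwarz against $\nabla f(z_i^t)$ and $v_i^t$, both bounded (in expectation) by $G$.
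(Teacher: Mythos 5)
Your overall architecture is sound, and your exact drift-corrected recursion $z_{i+1}^t-z_i^t=-\eta_i^t g_i^t+\frac{\beta(\eta_{i-1}^t-\eta_i^t)}{1-\beta}v_i^t$ is correct, but the route you take through it is genuinely different from the paper's and, as written, cannot deliver the inequality with the stated coefficient $-1$ on $\left\|\nabla f(x_i^t)\right\|^2$. Expanding smoothness around $z_i^t$ forces you to handle the cross term $-\eta_i^t\langle\nabla f(z_i^t),\nabla f(x_i^t)\rangle$; polarization, or any Young-type splitting of $\nabla f(z_i^t)=\nabla f(x_i^t)+e$ with $\left\|e\right\|\le L\left\|z_i^t-x_i^t\right\|$, gives at best $-(1-\tfrac{\epsilon}{2})\eta_i^t\left\|\nabla f(x_i^t)\right\|^2+\frac{\eta_i^tL^2}{2\epsilon}\left\|z_i^t-x_i^t\right\|^2$, so with $\epsilon=1$ you retain only $-\tfrac{1}{2}\left\|\nabla f(x_i^t)\right\|^2$ after dividing by $\eta_i^t$. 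The complementary negative contribution you hope to extract from the $2r[f(x_{i+1}^t)-f^{\ast}]$ piece is only $-2r\eta_i^t(1-\beta)\left\|\nabla f(x_i^t)\right\|^2$, i.e.\ of order $\eta_i^t$, so it cannot restore the missing constant $\tfrac{1}{2}$ uniformly over $\eta_i^t\le 1/L$. Your argument therefore proves a version of the lemma with $-\tfrac{1}{2}\left\|\nabla f(x_i^t)\right\|^2$ and with different constants $A_1$, $B_1$, $r$ (your coefficient matching yields $r=\beta^2L/[2(1-\beta^2)(1-\beta)^2]$ rather than the stated value), which is qualitatively equivalent and would still give the downstream rates up to a factor of two, but it is not the stated inequality.

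The paper sidesteps this by never differentiating $f$ at $z_i^t$: it expands $L$-smoothness at $x_i^t$ twice, using $z_{i+1}^t-x_i^t=\frac{1}{1-\beta}(x_{i+1}^t-x_i^t)$ for an upper bound on $\E[f(z_{i+1}^t)\mid\mathcal{F}_i^t]$ and $z_i^t-x_i^t=\frac{\beta}{1-\beta}(x_i^t-x_{i-1}^t)$ for a lower bound on $f(z_i^t)$. Pairing $\nabla f(x_i^t)$ directly with $-\eta_i^tg_i^t$ yields the full $-\eta_i^t\left\|\nabla f(x_i^t)\right\|^2$ in conditional expectation with no polarization loss; the leftover $-\frac{\eta_i^t\beta}{1-\beta}\langle\nabla f(x_i^t),v_i^t\rangle$ is rewritten via $v_i^t=(x_{i-1}^t-x_i^t)/\eta_{i-1}^t$, bounded by smoothness, and cancelled---together with a separate recursion for $\E[\left\|x_{i+1}^t-x_i^t\right\|^2\mid\mathcal{F}_i^t]$---against the $r\left\|x_{i+1}^t-x_i^t\right\|^2/\eta_i^t$ piece; that cancellation, $r(1-\beta^2)=\frac{\beta L}{2(1-\beta)^2}$, is what fixes $r$, not the polarization error you describe. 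The remainder of your plan (telescoping the $1/\eta$ weights, bounding the resulting $(1/\eta_i^t-1/\eta_{i-1}^t)$ remainders by $\Delta_0$, $\Delta_z$ and $G^2/L^2$, and collecting the $O(\eta_i^t)$ noise into $B_1G^2$) matches the paper and goes through once the smoothness expansions are re-centered at $x_i^t$.
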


Note that even though the step-size is assumed to be monotonically decreasing in each stage, it may be increased between stages, leading to a globally non-monotonic step-size. 
%In Lemma \ref{lem:mom:wit2}, even though we assume that the %step-size at each stage is monotonically decreasing, but in %the whole time horizon, it is possibly non-monotonic. 
The proposed bandwidth-based step-sizes (e.g., step-decay with linear or cosine modes) in the numerical experiments and the cosine annealing policy proposed in \citep{loshchilov2016sgdr} all satisfy this condition. Note that, unlike~\citep{mai2020convergence,liu2020improved}, the momentum parameter $\beta$ does not rely on the step-size, but can be chosen freely in the interval $(0,1)$. In particular, our analysis supports the common choice of $\beta = 0.9$ used as default in many deep learning libraries~\citep{imagenet,he2016deep}. Similar to Remark \ref{rem:func:bound}, the function value of the iterates for momentum can also  be controlled (bounded) by the initial state given $\eta_i^t \leq 1/L$; see~\citep{shi2021-momentum}. Therefore, we believe our assumptions are reasonable.
%it is fair to make this assumption.}
%On the negative side, we make the common assumption of bounded %gradients (cf.~\cite{}),  which is more restrictive than the %variance bound in~\cite{liu2020improved}.

%Therefore, the convergence of SGDM under these step-size policies is covered by our analysis.
%
%Also note that although we require $\beta>1/2$, this covers the typical range of values used in practice. 
%It is customary to choose $\beta$ close to one, and $\beta=0.9$ is the default value in many deep learning libraries~\citep{imagenet,he2016deep,huang2017densely}. 
%In some deep learning libraries, the default momentum %parameter $\beta$ is 0.9 (often close to 1) 
%In spite of the restriction on $\beta > 1/2$, there is still a flexible region to cover the most common choice. 
%Moreover, unlike~\citep{mai2020convergence,liu2020improved}, the momentum parameter $\beta$ does not rely on the step-size.

If we restrict the analysis to a single stage, $N=1$, the lemma allows to recover the optimal rate for SGDM under the step-size $\eta_i^t=\eta_0/\sqrt{T}$~\citep{unified_momentum, mai2020convergence, liu2020improved,defazio2020understanding} and to prove, for the first time, an optimal ${\mathcal O}(1/\sqrt{T})$ rate for SGDM under the $1/\sqrt{i}$ stepsize. These results are formalized in Appendix~\ref{suppl:mom:1sqrt}.

\subsection{Convergence of SGDM for Bandwidth Step-Decay Step-Size}

We now show the convergence complexity of SGDM with the bandwidth step-decay step-size. Here \emph{step-decay} means that the bandwidth limits are divided by a constant after some iterations.

We first consider the total number of iterations $T$ to be given, the stage length $S_t$ to be constant, and the number of stages $N$ as a hyper-parameter. 
%The stage-length is constant at $S=T/N$, which 
% This is close to the practical default setting in many deep learning libraries\deleted{If the total number of iteration $T$ is \added{fixed} a priori, we first consider the number of stages $N \geq 1$ is  a constant which is independent on $T$ and the stage length is a fixed constant $S = T/N$. This choice is closely relevant to the practical setting default in some deep learning libraries}~\citep{imagenet, he2016deep,huang2017densely}. We then have the following guarantee.
\begin{theorem}\label{thm:mom:stepdecay:0}
Assume the same setting as Lemma \ref{lem:mom:wit2}. If given the total number of iterations $T \geq 1$, $N \geq 1$, $S_t = S = T/N$, $\delta(t) = 1/\alpha^{t-1}$ for each $1\leq t \leq N$ and $\alpha >1$, then 
\begin{align}\label{eqn:mom:stepdecay:0}
\E[\left\|\nabla f(\hat{x}_T) \right\|^2] \leq  \frac{ W_1^1 \cdot N}{T\alpha^{N-1}} + \left(\alpha C_0 + C_2\right)\cdot \frac{N}{T} + \frac{\left(\Delta_z +  C_1\right)}{m }\cdot\frac{N\alpha^N}{T}  + M B_1 G^2 \cdot\frac{N }{\alpha^{N-1}}
\end{align}
where $C_0 = r(\frac{G^2}{L} + 2\Delta_0)$, $C_1 =A_1+\Delta_z+\frac{\Delta_0}{1-\beta} $, and $C_2 =C_0 + A_2G^2 $, $A_2 =1+\frac{\beta}{2(1-\beta)^2}$, and $W_1^1$, $A_1$, $B_1$, $\Delta_z$, and $r$ are defined in Lemma \ref{lem:mom:wit2}. Furthermore, if $N=(\log_{\alpha}T)/2$ and $S_t = 2T/\log_{\alpha} T$ for each $1\leq t \leq N$ where $\alpha > 1$, we have
\begin{align}\label{thm:mom:stepdecay}
\E[\left\| \nabla f(\hat{x}_T)\right\|^2] & \leq  \frac{\alpha W_1^1}{2\ln \alpha}\frac{\ln T}{T^{3/2}} +\frac{(\alpha C_0+C_2)}{2\ln \alpha}\frac{\ln T}{T} + \frac{(\Delta_z+C_1)}{2m\ln\alpha }\frac{\ln T}{\sqrt{T}} + \frac{\alpha M B_1G^2}{2\ln \alpha} \frac{\ln T }{\sqrt{T}}.
    \end{align}
\end{theorem}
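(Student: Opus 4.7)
The plan is to apply the per-step Lyapunov descent of Lemma~\ref{lem:mom:wit2} throughout each stage, handle the step-size jumps at stage boundaries separately, and combine using the output distribution of Algorithm~\ref{alg:sgd}. Within stage $t$, summing \eqref{lem:mom:wit:key} over $i=2,\ldots,S$ telescopes to
\[
\sum_{i=2}^{S}\E\|\nabla f(x_i^t)\|^2 \;\leq\; \E[W_2^t - W_{S+1}^t] + A_1\Bigl(\tfrac{1}{\eta_S^t}-\tfrac{1}{\eta_1^t}\Bigr) + B_1 G^2 \sum_{i=2}^{S}\eta_i^t.
\]
Because the step-size is monotone within each stage and $\eta_i^t\in[m\delta(t),M\delta(t)]$, the $A_1$ term is at most $A_1/(m\delta(t))$ and the noise sum is at most $SMB_1G^2\delta(t)$; the missing $i=1$ contribution is absorbed via $\|\nabla f(x_1^t)\|^2\leq\E\|g_1^t\|^2\leq G^2$.

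At the stage interface $x_1^{t+1}=x_{S+1}^t$, the bandwidth step-size may jump upward, so Lemma~\ref{lem:mom:wit2} does not directly couple $W_2^{t+1}$ to $W_{S+1}^t$. I would therefore bound $\E[W_2^t]$ for $t\geq 2$ directly from the definition of $W$: the assumption $\E[f(x_i^t)-f^*]\leq\Delta_0$ combined with $\|x_{i+1}^t-x_i^t\|^2=\eta_i^{t\,2}\|v_{i+1}^t\|^2\leq\eta_i^{t\,2}G^2$ yields $\E[f(z_i^t)-f^*]\leq\Delta_z$, and combining with $\eta_i^t\leq 1/L$ gives an interface bound $\E[W_2^t]\leq(\Delta_z+C_1)/(m\delta(t))+C_2$, with $C_0=r(G^2/L+2\Delta_0)$, $C_1=A_1+\Delta_z+\Delta_0/(1-\beta)$, $C_2=C_0+A_2G^2$ emerging exactly as in the statement.

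Applying the output rule of Algorithm~\ref{alg:sgd},
\[
\E\|\nabla f(\hat{x}_T)\|^2 \;=\; \frac{1}{ZS}\sum_{t=1}^{N}\alpha^{t-1}\sum_{i=1}^{S}\E\|\nabla f(x_i^t)\|^2,\qquad Z=\sum_{t=1}^{N}\alpha^{t-1}=\frac{\alpha^N-1}{\alpha-1},
\]
I would substitute the within-stage and interface bounds and evaluate elementary geometric sums. The $t=1$ initial Lyapunov $W_2^1$, bounded by $W_1^1$, yields the first term of \eqref{eqn:mom:stepdecay:0} after division by $ZS\sim T\alpha^{N-1}/(\alpha-1)$; the per-stage residuals $C_2+G^2$ weighted by $\alpha^{t-1}$ sum to a factor of $Z$ and produce the $(\alpha C_0+C_2)N/T$ term; the interface contribution $(\Delta_z+C_1)/(m\delta(t))$ weighted by $\alpha^{t-1}/S$ sums to $\sim \alpha^{2N}/(m(\alpha^2-1)S)$, giving the dominant $N\alpha^N/(mT)$ term; the per-stage noise $SMB_1G^2\delta(t)$ weighted by $\alpha^{t-1}$ sums to $SMB_1G^2 N$, which after normalisation yields $MB_1G^2 N/\alpha^{N-1}$. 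This delivers \eqref{eqn:mom:stepdecay:0}. Specialising to $N=(\log_\alpha T)/2$ and $S=2T/\log_\alpha T$ substitutes $\alpha^N=\sqrt{T}$, $\alpha^{N-1}=\sqrt{T}/\alpha$ and $N=\ln T/(2\ln\alpha)$ into each of the four terms, producing \eqref{thm:mom:stepdecay}.

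The principal obstacle is the interface bound of the second paragraph: since Lemma~\ref{lem:mom:wit2} explicitly requires monotone per-stage step-sizes, the globally non-monotonic bandwidth schedule breaks any naive cross-stage telescoping of $W$. Assembling the interface constants $C_0,C_1,C_2,\Delta_z$ correctly from the definition of $W$, using the uniform bounds on $\E[f-f^*]$ and $\E\|g\|^2$, is where the technical effort concentrates; the remaining geometric bookkeeping is elementary.
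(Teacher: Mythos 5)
Your proposal follows essentially the same route as the paper's proof: telescope the Lyapunov inequality of Lemma~\ref{lem:mom:wit2} within each stage starting from $i=2$ (since the upward step-size jump at stage boundaries breaks cross-stage telescoping), bound the interface quantity $\E[W_2^t]$ and the missing $\E\|\nabla f(x_1^t)\|^2$ term directly from the uniform bounds on $\E[f-f^*]$ and $\E\|g\|^2$, and then combine with the $\delta^{-1}(t)$-weighted output distribution and geometric sums — this is exactly the content of the paper's intermediate Lemma~\ref{lem:mom:xhat}. The only cosmetic differences are that you bound $\E[W_2^t]-\E[W_1^{t+1}]\leq\E[W_2^t]$ directly rather than splitting off a cross-stage Abel summation as the paper does, and you use Jensen for $\|\nabla f(x_1^t)\|^2\leq G^2$ where the paper derives $A_2G^2+\Delta_0/((1-\beta)m\delta(t))$ from Lemma~\ref{lem:mom:fit:xit}, so your constants would differ slightly in how $C_1$ is assembled, but the resulting rate is identical.
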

When $N=1$,  $m \leq \eta_i^t \leq M$ and the bound (\ref{eqn:mom:stepdecay:0}) reduces to $\E[\left\|\nabla f(\hat{x}_T) \right\|^2] \leq \mathcal{O}(\frac{1}{T} + \frac{1}{mT} + M)$. If, in particular, $m$ and $ M $ are of order  $\mathcal{O}(1/\sqrt{T})$, then we can derive the optimal convergence for constant bandwidth step-sizes, comparable to the literature for constant step-sizes~\citep{unified_momentum,liu2020improved,mai2020convergence,defazio2020understanding}. %Previous studies (e.g., \citep{gitman2019understanding, unified_momentum}) require that the true gradient is bounded. We remove this assumption and use a more general assumption (see Assumption \ref{assump:gradient-oracle}) on the gradient oracle, which is also challenging. 

It is not easy to explicitly minimize the right-hand-side of (\ref{eqn:mom:stepdecay:0}) with respect to $N$. However, $N=(\log_{\alpha} T)/2$ attempts to balance the last two terms and appears to be a good choice in practice. The theorem (see (\ref{thm:mom:stepdecay}) establishes an $\mathcal{O}(\ln T/\sqrt{T})$ rate under step-decay bandwidth step-size. If $M=m$, which means that the step-size follows the boundary functions, we get a near-optimal (upto $\ln T$) rate for stochastic momentum with a step-decay step-size on nonconvex problems. We believe that this is the first near-optimal rate for stochastic momentum with step-decay step-size. The next result shows how an exponentially increasing stage-length allow to sharpen this guarantee even further.

% \begin{theorem}\label{thm:mom:stepdecay}
% Assume the same setting as Lemma \ref{lem:mom:wit2}. If the function $\delta(t) =1/\alpha^{t-1}$,  $N=\log_{\alpha}T/2$ and $S_t = 2T/\log_{\alpha} T$ for each $1\leq t \leq N$ where $\alpha > 1$, we have
%     \begin{align*}
% \E[\left\| \nabla f(\hat{x}_T)\right\|^2] & \leq  \frac{\alpha W_1^1}{2\ln \alpha}\frac{\ln T}{T^{3/2}} +\frac{(\alpha C_0+C_2)}{2\ln \alpha}\frac{\ln T}{T} + \frac{(\Delta_0+C_1)}{2m\ln\alpha }\frac{\ln T}{\sqrt{T}} + \frac{\alpha M B_1G^2}{2\ln \alpha} \frac{\ln T }{\sqrt{T}}.
%     \end{align*}
% \end{theorem}

\begin{theorem}\label{thm:mom:stepdecay:2}
Suppose the same setting as Lemma \ref{lem:mom:wit2}. Consider Algorithm \ref{alg:mom}, if the functions $\delta(t)=1/\alpha^{t-1}$ with $\alpha > 1$, $S_t = S_0\alpha^{t-1}$ with $S_0=\sqrt{T}$, we have
\begin{align*}
\E[\left\| \nabla f(\hat{x}_T)\right\|^2] \leq \mathcal{O}\left(\frac{W_1^1}{T^{3/2}} +\frac{C_0}{T} + \frac{\Delta_z}{m\sqrt{T}} + \frac{C_1}{m\sqrt{T}} +  \frac{M B_1 G^2}{\sqrt{T}} \right).
\end{align*}
\end{theorem}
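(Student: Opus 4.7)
The proof follows the same skeleton as Theorem~\ref{thm:band:stepdecay:optimal}, but with the Lyapunov function $W_i^t$ from Lemma~\ref{lem:mom:wit2} replacing the direct descent on $f$. The plan is to telescope the one-step recursion of Lemma~\ref{lem:mom:wit2} across the inner iterations of each stage, then across stages, and finally to translate the accumulated bound on $\sum \E[\|\nabla f(x_i^t)\|^2]$ into a bound on $\E[\|\nabla f(\hat{x}_T)\|^2]$ via the sampling weights $P_t=\delta^{-1}(t)/\sum_l \delta^{-1}(l)$.

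First, I rearrange the lemma as $\E[\|\nabla f(x_i^t)\|^2]\leq W_i^t-\E[W_{i+1}^t\mid\mathcal{F}_i^t]+A_1(1/\eta_i^t-1/\eta_{i-1}^t)+\eta_i^t B_1G^2$ and sum over $i=2,\dots,S_t$ after taking total expectations. Because the step-size is monotone inside each stage, the $A_1$-term telescopes to $A_1(1/\eta_{S_t}^t-1/\eta_1^t)\leq A_1\alpha^{t-1}/m$. Using $\eta_i^t\leq M\delta(t)=M\alpha^{-(t-1)}$ together with $S_t=\sqrt{T}\alpha^{t-1}$, the noise contribution per stage is bounded by $MB_1G^2\sqrt{T}$. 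The iterate at $i=1$ of each stage, which is not covered by Lemma~\ref{lem:mom:wit2}, is handled as a boundary term by bounding $\E[\|\nabla f(x_1^t)\|^2]$ via $G^2$ using Assumption~\ref{assump:gradient-oracle}(b); since this costs only $N$ summands, it is negligible after dividing by $\sum_l\delta^{-1}(l)\sim\sqrt{T}$.

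Next, I telescope across stages. Because $x_1^{t+1}=x_{S_t+1}^t$, the function value, the auxiliary variable $z$, and the displacement $\|x-x_{-1}\|^2$ all agree at the transition, so $W_1^{t+1}$ and $W_{S_t+1}^t$ differ only through their $1/\eta$ denominators. Using $\E[f(z_i^t)-f^*]\leq\Delta_z$, $\E[f(x_i^t)-f^*]\leq\Delta_0$, and $\E\|x_{i+1}^t-x_i^t\|^2\leq (\eta_i^t)^2G^2$, the inter-stage gap is bounded by $(\Delta_z+C_1)\alpha^{t-1}/m$ up to absolute constants, which sums to $(\Delta_z+C_1)\alpha^{N}/(m(\alpha-1))$. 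The constraint $\sum_t S_t=T$ forces $\alpha^N=1+\sqrt{T}(\alpha-1)$, so this overall contribution is of order $(\Delta_z+C_1)\sqrt{T}/m$. Likewise, the boundary $C_0$-pieces (coming from the $r(G^2/L+2\Delta_0)$ terms inside $W$) collect into an $N\cdot C_0$ contribution.

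Finally, with $\delta^{-1}(t)=\alpha^{t-1}$ and $S_t=\sqrt{T}\alpha^{t-1}$, the ratio $\delta^{-1}(t)/S_t=1/\sqrt{T}$ is constant, so the weighted average becomes uniform over all iterates:
\begin{align*}
\E[\|\nabla f(\hat{x}_T)\|^2]=\frac{1}{\sum_l\delta^{-1}(l)}\sum_{t=1}^{N}\frac{\delta^{-1}(t)}{S_t}\sum_{i=1}^{S_t}\E[\|\nabla f(x_i^t)\|^2]=\frac{1}{\sqrt{T}\,(\alpha^N-1)/(\alpha-1)}\sum_{t,i}\E[\|\nabla f(x_i^t)\|^2].
\end{align*}
Dividing the accumulated upper bound by the normalizer $\sim T$, the five advertised terms emerge: $W_1^1/T^{3/2}$ from the telescoped Lyapunov at the very first stage, $C_0/T$ from the $r(G^2/L+2\Delta_0)$ pieces in the stage transitions, $\Delta_z/(m\sqrt{T})$ and $C_1/(m\sqrt{T})$ from the $A_1$-bandwidth jumps and the inter-stage gap, and $MB_1G^2/\sqrt{T}$ from the noise term $\eta_i^t B_1G^2$ summed across all iterations.

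The main obstacle I expect is the careful treatment of the stage transitions: since Lemma~\ref{lem:mom:wit2} requires $i\geq 2$ and $W_i^t$ depends on the reciprocal step-size, the $1/\eta$-mismatch between $W_{S_t+1}^t$ and $W_1^{t+1}$ could a priori blow up by a factor $\alpha$ at every stage. Controlling this jump by the uniform bounds $\Delta_0$, $\Delta_z$, and $G^2$ so that the boundary contribution stays of order $\alpha^{t-1}/m$ rather than $\alpha^{2(t-1)}/m$ is the delicate step; this, combined with the exponential growth of $S_t$ that cancels $\delta^{-1}(t)$ in the sampling weight, is what converts the near-optimal $\ln T/\sqrt{T}$ rate of Theorem~\ref{thm:mom:stepdecay:0} into the optimal $1/\sqrt{T}$ rate claimed here.
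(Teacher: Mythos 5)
Your skeleton — telescope Lemma~\ref{lem:mom:wit2} from $i=2$ to $S_t$ within each stage, treat $i=1$ and the stage transitions as boundary terms controlled by $\Delta_0$, $\Delta_z$, $G^2$, and use the exponential growth of $S_t$ to kill the logarithm — is exactly the paper's strategy (packaged there as Lemma~\ref{lem:mom:xhat} followed by a two-line substitution). Your Jensen bound $\E[\|\nabla f(x_1^t)\|^2]\le G^2$ for the $i=1$ iterates is even cleaner than the paper's estimate (\ref{inequ:mom:wit:1t}). But there is a genuine gap in the final step, and it sits precisely at the point where this theorem is supposed to improve on Theorem~\ref{thm:mom:stepdecay:0}: the sampling weights. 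You take the output distribution literally from the algorithm box (stage $t^\ast$ with probability $\propto\delta^{-1}(t)$, then uniform within the stage), which, because $S_t\propto\delta^{-1}(t)$ here, makes every iterate carry weight $1/T$. Under that uniform weighting the noise contribution is $\frac{1}{T}\sum_{t=1}^N\sum_{i=1}^{S_t}\eta_i^tB_1G^2\le\frac{1}{T}\sum_{t=1}^NMB_1G^2S_t\delta(t)=\frac{NMB_1G^2\sqrt T}{T}=\Theta\bigl(MB_1G^2\tfrac{\ln T}{\sqrt T}\bigr)$, since $N=\log_\alpha((\alpha-1)\sqrt T+1)=\Theta(\ln T)$. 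You assert this term is $MB_1G^2/\sqrt T$, silently dropping the factor $N$; with your weighting the logarithm does \emph{not} disappear, and you recover only the rate of Theorem~\ref{thm:mom:stepdecay:0}, not the claimed optimal one. (The same slip makes your leading term $W_1^1/T$ rather than $W_1^1/T^{3/2}$, though that one is harmless.)

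The paper's proofs use a different — and for varying $S_t$, genuinely non-equivalent — weighting: each individual iterate $x_i^t$ is weighted proportionally to $\delta^{-1}(t)$, so the normalizer is $\sum_{t}S_t\delta^{-1}(t)=\Theta(T^{3/2})$ and later stages are favoured per iterate (see the identity used in Lemmas~\ref{lem:sgd} and~\ref{lem:mom:xhat}). Under that weighting the noise sum is $\sum_t\delta^{-1}(t)\sum_i\eta_i^tB_1G^2\le B_1G^2M\sum_tS_t=B_1G^2MT$, and dividing by $\Theta(T^{3/2})$ gives the log-free $MB_1G^2/\sqrt T$; the remaining terms come out as $W_1^1/T^{3/2}$, $C_0\alpha^N/T^{3/2}=O(C_0/T)$, and $(\Delta_z+C_1)\alpha^{2N}/(mT^{3/2})=O((\Delta_z+C_1)/(m\sqrt T))$. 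To repair your argument you must adopt this per-iterate $\delta^{-1}(t)$ weighting (the paper's algorithm description and its proof formula are admittedly inconsistent on this point, but only the latter yields the stated bound); everything else in your outline then goes through.
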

The stage length $S_t$ in Theorem \ref{thm:mom:stepdecay:2} increases exponentially from $S_1=\sqrt{T}$ over $N=\log_{\alpha}((\alpha-1)\sqrt{T}+1)$ stages, resulting in an $\mathcal{O}(1/\sqrt{T})$ optimal rate for SGDM under the bandwidth-based step-decay scheme. This removes the $\ln T$ term of Theorem \ref{thm:mom:stepdecay}. To the best of our knowledge, this work is the first that is able to achieve an optimal rate for stochastic momentum with step-decay step-size in a general non-convex setting.

\begin{remark}\label{rem:liu}({\bf Better convergence than \cite{liu2020improved}})
We notice that reference \cite{liu2020improved} analyzes multi-stage momentum and obtains the bound
\begin{align}
 \E[\left\| \nabla f(\tilde{x})\right\|^2] \leq \mathcal{O}\left(\frac{f(x_1)-f^{\ast}}{N} + \frac{\sigma L\sum_{t=1}^{N} \eta^t}{N}\right). \label{eqn:liubound}
\end{align}
Here, $\tilde{x}$   
  is a uniformly sampled iterate (unlike our results, which favour later iterates) and $N$ is the number of stages. The result uses a time-varying momentum parameter, whose value is determined by the step-size $\eta^t$, and also assumes an inverse relationship between the step-size and stage-length, i.e. that $\eta^t S_t$ is constant. Hence, $N$ is of ${\mathcal O}(\log_{\alpha} T)$ and the convergence guarantee in (\ref{eqn:liubound}) is of ${\mathcal O}(1/\log_{\alpha} T)$, which is far worse than the rate of Theorem \ref{thm:mom:stepdecay:0} and the optimal rate of Theorem \ref{thm:mom:stepdecay:2}. 
 \end{remark}

\section{Numerical Experiments}\label{sec:numerical}
\iffalse
\red{Xiaoyu: I agree, it is hard to make the notations consistent for the two different kinds of bandwidth step-sizes. Maybe we can avoid to use $t$ or $i$ in the figures, but explain it clear in the  text ? We can change the names for the four modes like poly1 decay, poly2 decay, linear decay, cosine decay ? Or we also can remove ``decay" ? to make the name shorter.}
\fi

In this section, we design and evaluate several specific step-size policies that belong to the bandwidth-based family. We consider SGD with and without momentum, and compare their performances on neural network training tasks on the CIFAR10 and CIFAR100 datasets.

\subsection{Baselines and Parameter Selection for the Bandwidth Step-Sizes}
\label{num:description:stepsize}

The bandwidth framework allows for a unified and streamlined (worst-case) analysis of all step-size policies that lie in the corresponding band. Within this family, the band gives a lot of freedom in crating innovative step-size policies with additional advantages. In particular, we will design a number of step-size policies that add periodic perturbations to a baseline step-size, attempting to both avoid bad local minima and to improve the local convergence properties.

The step-decay bandwidth step-sizes divide the total number of iterations into a small number of stages, in which the boundary functions are constant. The width of the band are determined by the constants $m$ and $M$. We will explore step-sizes that add a decreasing perturbation within each stage, starting at the upper band at the beginning of the stage, ending at the lower bound at the end of the stage, and decaying as $1/i$, $1/\sqrt{i}$, linearly or according to a cosine function. As baseline, we consider the step-decay step-size that follows the lower boundary function $m\delta(t)$. To use the same maximum value for the bandwidth step-sizes, we do not add any perturbation in the first stage; cf. Figure~\ref{stepsize:band}.

For the $1/\sqrt{t}$-band, on the other hand, stages correspond to epochs 
 and perturbing the step-size within a stage would be too frequent and lead to bias. Rather, we choose to add similar perturbations as for the step-decay band, but adjust the perturbation between stages. In our experiments, the two step-size policies perform roughly the same number of periods of perturbations over the training set. As baseline, we consider the step-size $\eta_i^t=m/\sqrt{t}$. 
In all experiments, the hyper-parameters (e.g., $m$ and $M$) have been determined using grid search, see Section~\ref{cifar:suppl} for details.

\subsection{Bandwidth Schedule Helps to Avoid Local Minima }\label{sec:toy:example}
To demonstrate the potential benefits of bandwidth-based non-monotonic step-size schedules,  we consider the toy example (see Section \ref{toy:suppl} for details and further results) from~\citep{shi2020learning}, which is non-convex and has four local minima\footnote{Notation: $\circled{1}-\circled{3}$ denote the local minima at top left, top right and bottom left, respectively;  %top-left local minima, $\circled{2}$ denotes top-right local %minima, $\circled{3}$ denotes bottom-left local minima, 
and $\circled{4}$ denotes the global minimum at bottom right.}; see Figure~\ref{fig:contour:bandwidth}. We then compare the final iterates of SGD with constant step-sizes (both large and small), step-decay, and a bandwidth-based step-decay step size which we call \emph{linear-mode} (illustrated in Figure~\ref{stepsize:band}). 
%Based on the bandwidth framework, here we also design a non-monotonic schedule named \emph{step-decay with linear}: at each stage (except the first stage) of step-decay, we linearly decay the step-size and let the starting step-size is larger than the ending step-size of the former stage (see linear-mode in Figure \ref{stepdecay:band}). % unlike the sudden jump from a large constant to a smaller constant as step-decay policy,
%
As shown in Figure~\ref{fig:contour:bandwidth}, a large constant step-size more easily escapes the bad local minima to approach the global minimum at $(0.7, -0.7)$ 
%(marked as $\circled{4}$ in the table) 
than a small constant step-size. However, with a large constant step-size, the final iterates are scattered and end up far from the global minimum, which also has been observed in Figure 5 of \citep{shi2020learning}. Therefore, we have to reduce the step-size at some points to reduce the error. This is exactly the intuition of step-decay step-size. As shown in Figure \ref{fig:contour:bandwidth}, the scatter plots of SGD with \emph{step-decay} (red) and \emph{step-decay with linear-mode} (green) are more concentrated around the global minimum than the  constant step-sizes. 

To quantify the ability of different step-sizes to avoid the local minima, Table~\ref{tab:rates} reports the percentage of the final iterates under the different step-size policies that are close to each minima. 
We can see that the ability of the step-decay policy (named \emph{baseline}) to escape the local minima is slightly worse than the large constant step-size, but Figure~\ref{fig:contour:bandwidth} shows that the variance of the near-optimal iterates is reduced significantly. In a similar way, we can see that  \emph{linear-mode} not only improves the ability to escape the local minima, but also produces final iterates that are more concentrated around the global optimum. Hence, it appears (at least in this example) that non-monotonic step-size schedules allow SGD to escape local minima and produce final iterates of high quality.

\begin{figure}[t]
	\begin{minipage}{0.6\linewidth}
		\centering
		\captionsetup{width=0.9\linewidth}
	\includegraphics[width=0.48\textwidth]{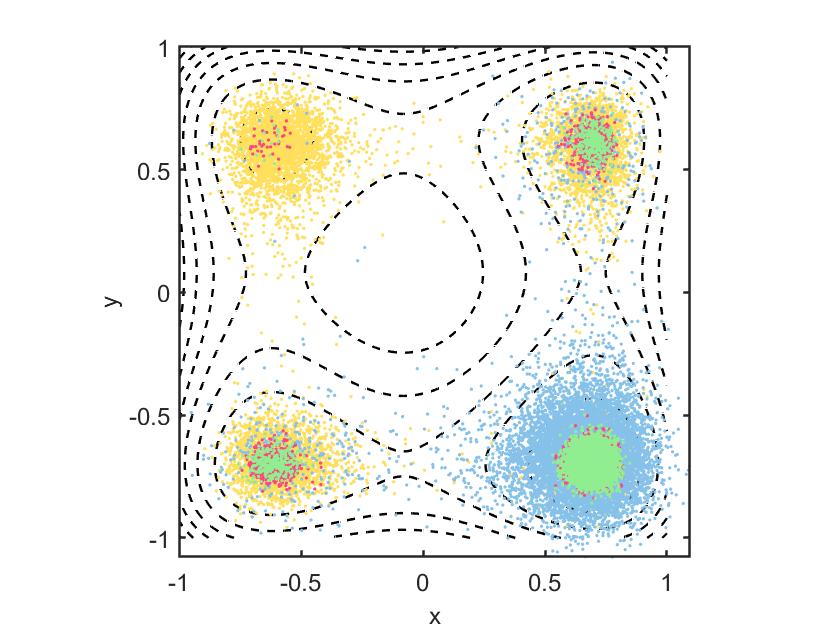} \hfill
		\includegraphics[width=0.48\textwidth]{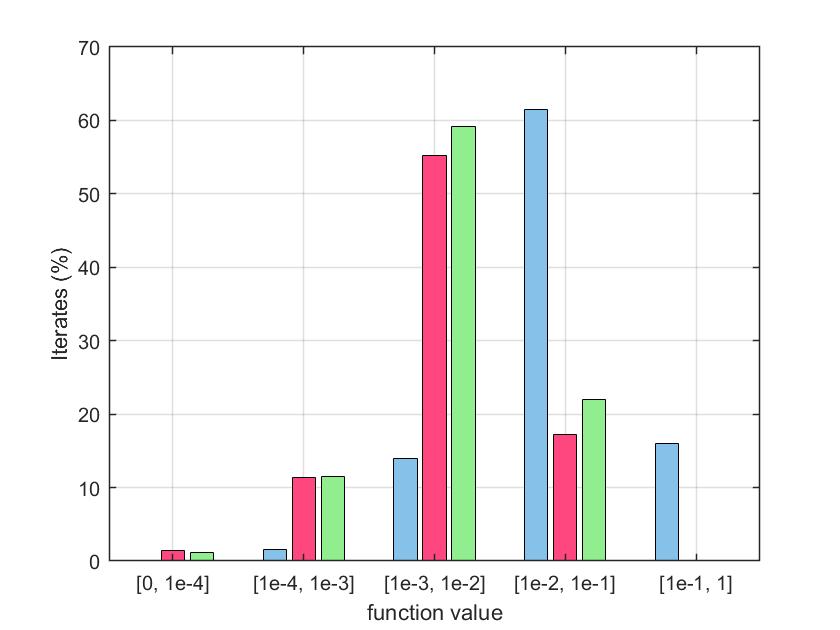}
	\captionof{figure}{The scatter plots (left); function value distribution around global minima (middle)} % of SGD} % with total 3000-iterations by repeating 10000-runs
		\label{fig:contour:bandwidth}
	\end{minipage}  \hfill 
	\begin{minipage}{0.35\linewidth}
		\captionof{table}{The percentage (\%) of the final iterates (10000 runs) close to each local minima}
	\label{tab:rates}
%	\vspace{1em}			
	\centering
		\resizebox{\textwidth}{!}{%
		\begin{tabular}[width=1\linewidth]{@{}ccccc@{}}
    \toprule
\multirow{2}{*}{}  & \multicolumn{2}{c}{constant} & \multicolumn{2}{c}{step-decay} \\ \cline{2-5}
&  small\textcolor{c1}{$\,\blacksquare$} & large\textcolor{c3}{$\,\blacksquare$}  & baseline\textcolor{c2}{$\,\blacksquare$} &  linear\textcolor{c4}{$\,\blacksquare$}  \\ \midrule
 $\circled{1}$ & $\bm{29.61}$ & 0.12 & 0.40 &  0.14\\
  $\circled{2}$ & 24.66  & 3.45 & 6.89 &  2.95 \\
  $\circled{3}$ & 25.13 & 3.28 & 7.55 & 2.99 \\
 $ \circled{4}$  & 20.60  & $\bm{93.15}$ & $\bm{85.16}$ & $\bm{93.92}$ \\
    \bottomrule
    \end{tabular}}
	\end{minipage}
\end{figure}

% \begin{itemize}
%     \item[(1)] traditional $1/\sqrt{k}$ step-size (let $S=1$ and $m=M =\eta_0$ in Theorem 3.2), where $1 \leq k \leq T$, then we have
%     \begin{align}
%      \E[\left\|\nabla f(\hat{x}_T) \right\|^2] \leq \frac{3\Delta_0}{\eta_0 \sqrt{T}} + \frac{3\eta_0 L \sigma}{2\sqrt{T}} 
%     \end{align}
%     If we apply the step-decay step-size which belongs to the interval $[m/\sqrt{k}, M/\sqrt{k}]$, set $t_1 = 1$, $\eta_1 = m$ then $ \eta_2 = \frac{\eta_1}{r}$ and $S_1 = r^2 -1$ ($r = M/m \geq 1$). By induction, we have $S_t = r^{2(t-1)} S_1$ and $\eta_t = \frac{\eta_1}{r^{t-1}}$. Using the relationship that $\sum_{t=1}^{N}S_t = T$, we have $r^{2N} = T+1$, then applying Lemma 3.1 and let $m=M$, and then applying this step-decay step-size, we have
%     \begin{align}
%     \E[\left\|\nabla f(\hat{x}_T) \right\|^2] & \leq \frac{1}{\sum_{t=1}^{N}S_t\eta_t^{-1}}\left( \sum_{t=1}^{N}\frac{f(x_t) - f(x_{t+1})}{\eta_t^2} + L\sigma\cdot T \right) \\
%     & \leq \frac{1}{\sum_{t=1}^{N}S_t\eta_t^{-1}} \left( \frac{2\Delta_0}{\eta_{N}^2} + L\sigma\cdot T \right) \\
%     & = \frac{2\Delta_0 r^{2N-2}\eta_1^{-2} + L\sigma T}{\eta_1^{-1}S_1 \sum_{t=1}^N r^{3(t-1)}} = \frac{2\Delta_0(T+1) r^{-2}\eta_1^{-2} + L\sigma T}{m^{-1}(r+1)\frac{(T+1)^{3/2}-1}{r^2 + r + 1}} \\
%     & = \frac{2\Delta_0(T+1) (r^2+r +1) }{m r^2(r+1)((T+1)^{3/2}-1)} + \frac{m L\sigma T (r^2 + r+ 1) }{(r+1)((T+1)^{3/2}-1)}
%     \end{align}
% \end{itemize}

\subsection{Numerical Results on CIFAR10 and CIFAR100}\label{numerical:cifar}

To illustrate the practical performance of the bandwidth-based step-sizes, we choose the well-known  CIFAR10 and CIFAR100~\citep{KrizWeb} image classification datasets. We consider the benchmark experiments of CIFAR10 on ResNet-18~\citep{he2016deep} and  CIFAR100 on a $28\times 10$ wide residual network (WRN-28-10)~\citep{zagoruyko2016wide}, respectively. All the experiments are repeated 5 times to eliminate the influence of randomness. 

% \begin{figure}
%     \centering
%     \begin{minipage}{.5\textwidth}
%     \subfigure%[Testing accuracy]
%       {\label{sgd:sqrt-band:cifar10:testaccu}
%       \includegraphics[width=0.4\textwidth,height=1.2in]{images/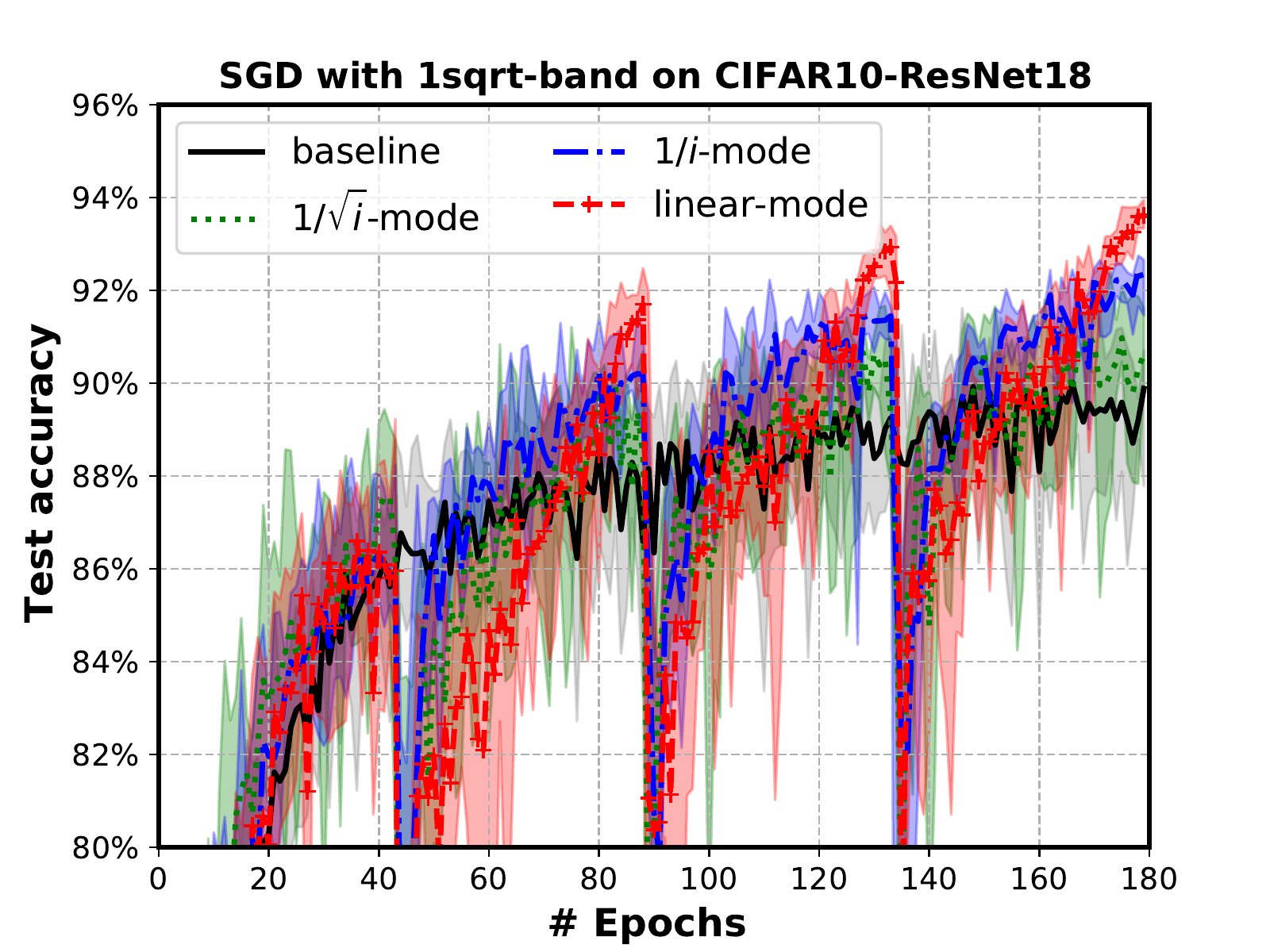}}
%     %   \caption{SGD - CIFAR10 - $1/\sqrt{t}$-band} 
%       \label{fig:sgd:1sqrt:cifar10}
%       \hfil
%       \subfigure%[Testing accuracy]
%       {\label{sgd:sqrt-band:cifar100:testaccu}
%       \includegraphics[width=0.4\textwidth,height=1.2in]{images/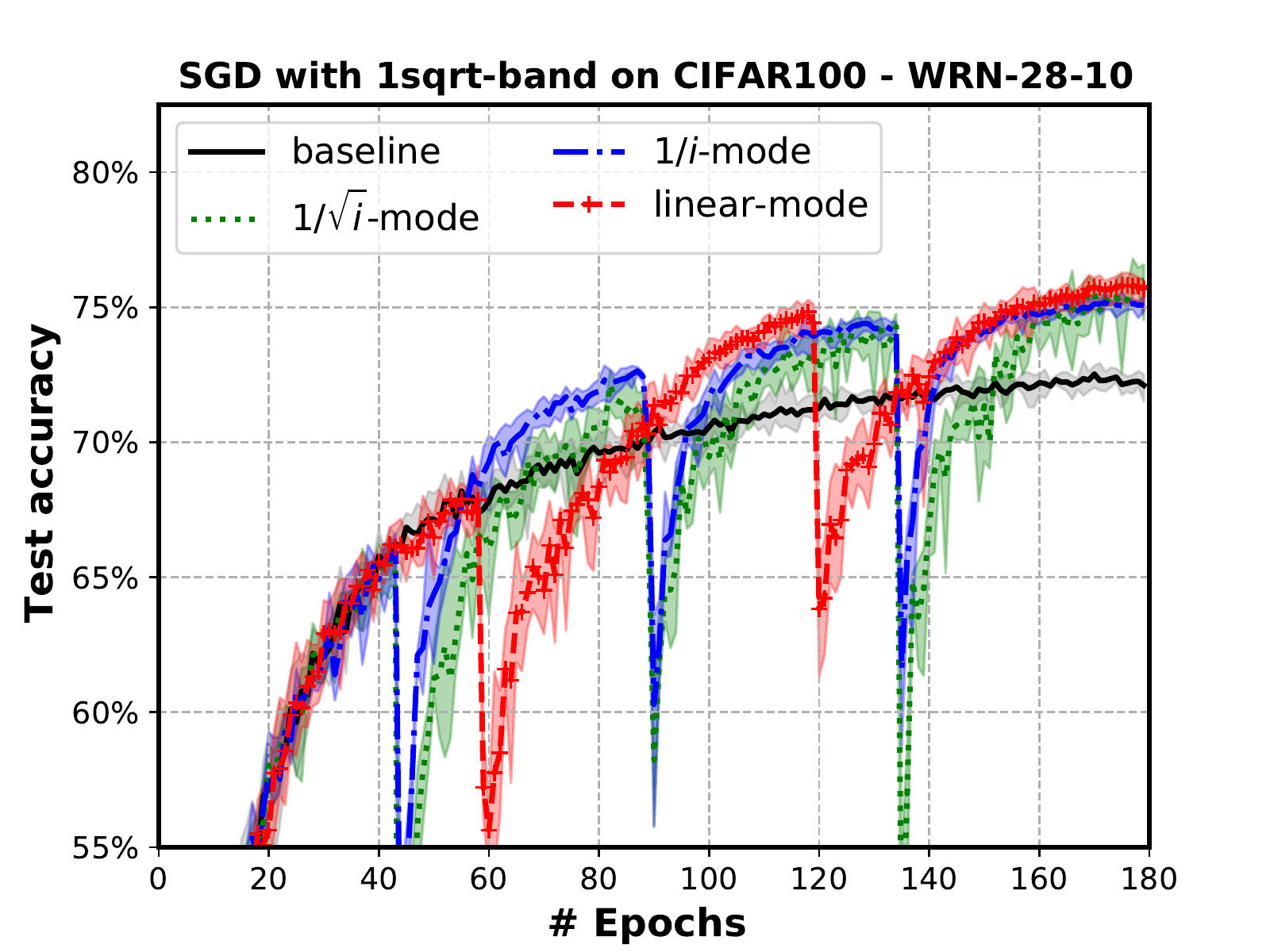}}
%       \caption{SGD - CIFAR10 and CIFAR100 - $1/\sqrt{t}$-band } 
%       \label{fig:sgd:1sqrt:cifar100}             
%     \end{minipage}
%     \begin{minipage}{.4\textwidth}
%         \subfigure%[Testing accuracy]
%       {\label{sgd:stepdecay-band:cifar100:testaccu} \includegraphics[width=0.9\textwidth,height=1.2in]{images/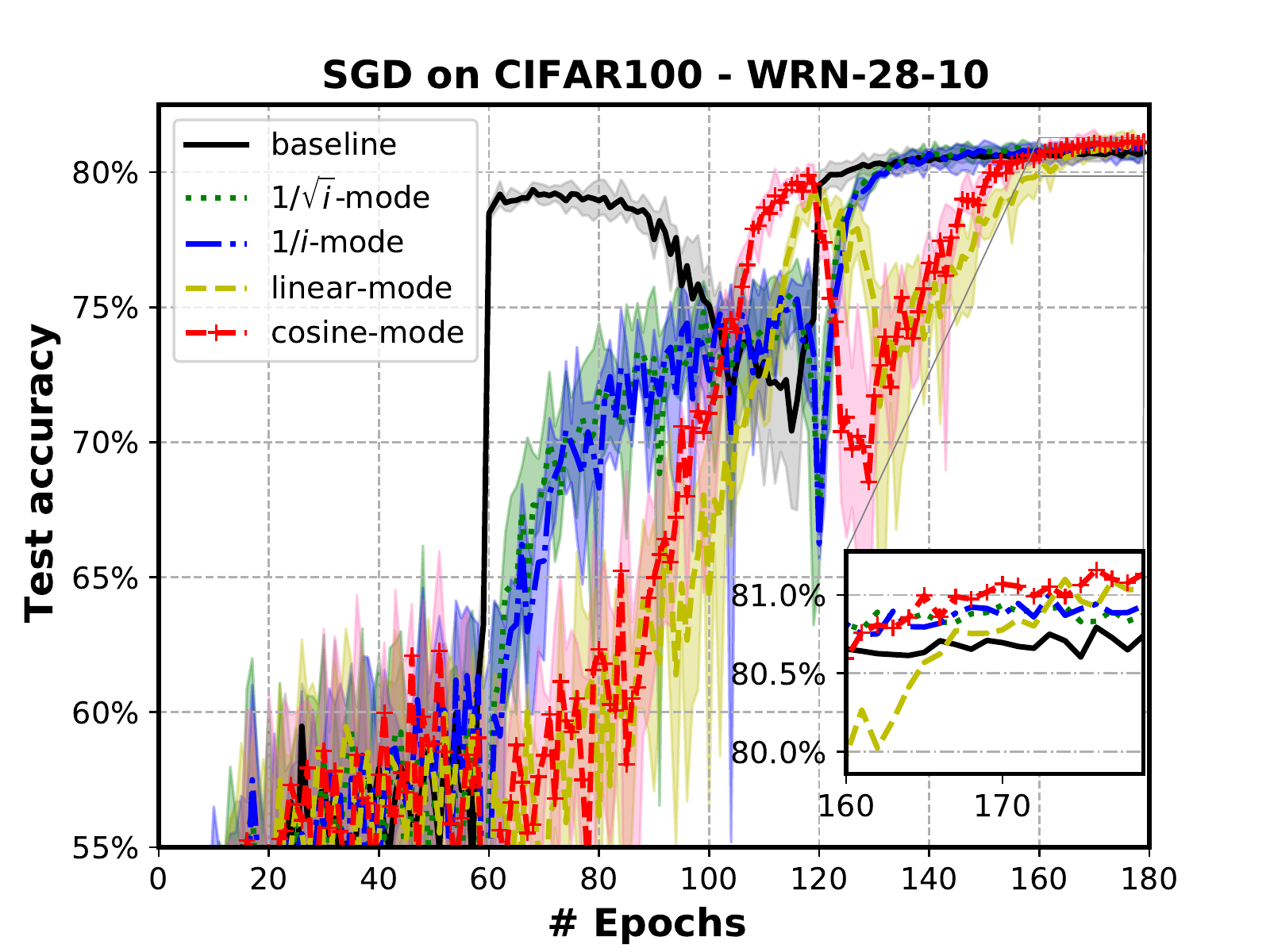}}
%       \caption{SGD - CIFAR100 - Step-decay-band} 
%     \end{minipage}

% \end{figure}

We begin by evaluating our step-sizes for SGD. The left column of Figure \ref{fig:sgd:mom}~%\ref{fig:sgd:1sqrt:cifar10} and \ref{fig:sgd:1sqrt:cifar100} 
present the results of the $1/\sqrt{t}$-band step-sizes on the two datasets.
% We test the vanilla SGD on the two datasets. We first present the results of %$1/\sqrt{t}$-band step-sizes in Figures \ref{fig:sgd:1sqrt:cifar10} and %\ref{fig:sgd:1sqrt:cifar100}. 
As shown in Figure \ref{stepsize:band}, these stepsizes
%three $1/\sqrt{t}$ bandwidth step-sizes %including $1/i$-mode, $1/\sqrt{i}$-mode and linear-mode%
 are all non-monotonic. The sudden increase in the step-size leads to a corresponding cliff-like reduction in accuracy followed by a recovery phase that consistently ends up at a better performance than in the previous stage.
% Thus, the suddenly increases in step-size  lead to a cliff-like reducing but finally the %performance (e.g., testing accuracy) gets better than the former stage. 
The three $1/\sqrt{t}$-band step-sizes achieve significant improvements compared to their baseline ($\eta_i^t=m/\sqrt{t}$), in terms of both test loss and test accuracy. Moreover, the linear-mode performs the best compared to other polynomial decaying modes. %$1/i$-mode and $1/\sqrt{i}$-mode.
 Then, the results of SGD with step-decay band (described in Section \ref{num:description:stepsize} or see Figure~\ref{stepsize:band} in Appendix) on CIFAR10 and CIFAR100 are given in Figure \ref{fig:sgd:mom} (middle column), respectively. 
 At the final stage, the bandwidth step-sizes improve both test loss (see Figure \ref{fig:sgd:mom:testloss} in Appendix) and test accuracy compared to the baseline. In particular, the cosine-mode performs the best on this problem. In the second stage, baseline methods have a sharp boost. Our guess is that the noise accumulates quickly under a relatively large constant step-size. But this phenomenon is only temporary. When we drop the step-size in the third stage, the performance improves.
 %compared to the other decaying modes.  %This also implies that through some clever designs, the bandwidth policy helps to improve the generalization ability of step-decay step-size (baseline). 

%We consider four decaying modes for step-decay band, described in Section \ref{num:description:stepsize} (or see Figure~\ref{stepdecay:band} in Appendix).

Next, we evaluate the performance of step-decay bandwidth step-sizes on SGDM.  The results are reported in Figure \ref{fig:sgd:mom} (right column). The first observation from Figure~\ref{fig:sgd:mom} (right column) is that the step-decay bandwidth step-sizes also work well for SGDM, and that again, the cosine-mode performs better than the others. Another interesting observation is that the performance of vanilla SGD with cosine-mode (red) in Figure~\ref{fig:sgd:mom} is comparable to (even better than) SGDM with the baseline step-decay step-size (black) in Figure~\ref{fig:sgd:mom}. A similar conclusion can also be made on CIFAR100.

\begin{figure}[t]
 \centering
\begin{minipage}{1\textwidth}
 \centering
\includegraphics[width=0.32\textwidth,height=1.8in]{1sqrt_band_cifar10_accuracy_2.pdf}
      \includegraphics[width=0.32\textwidth,height=1.8in]{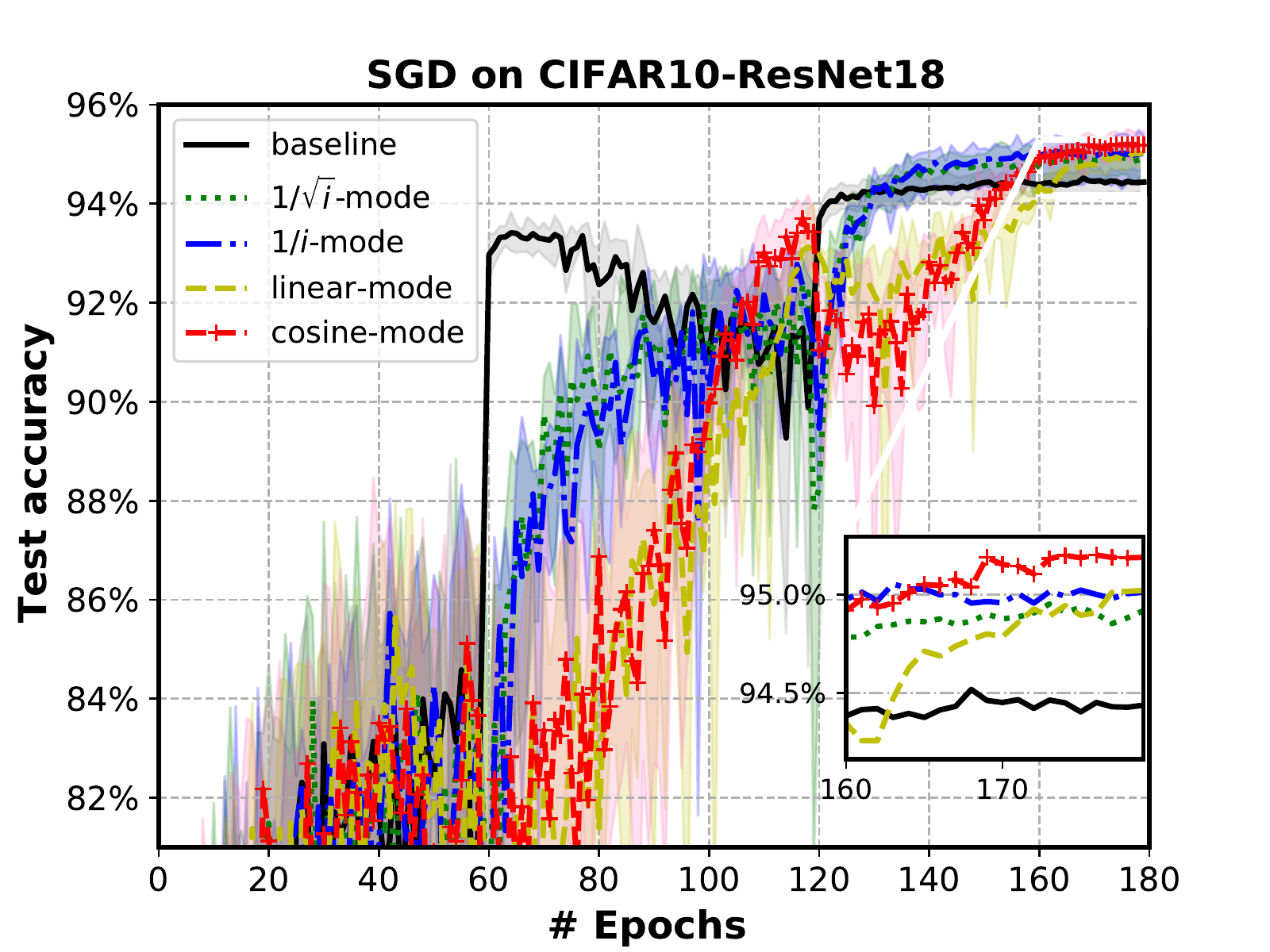}
         \includegraphics[width=0.32\textwidth,height=1.8in]{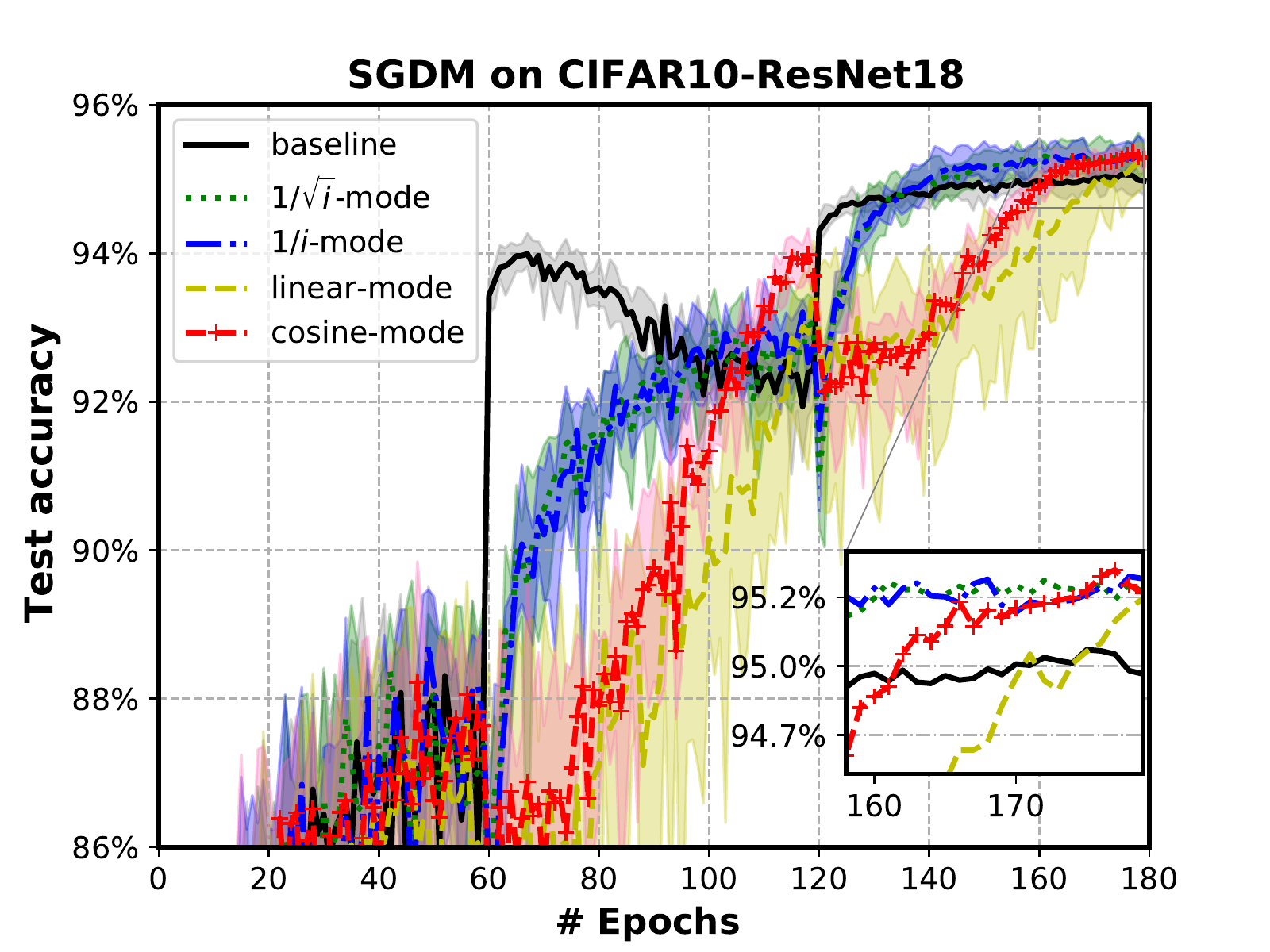}
%   \caption{SGD - CIFAR10 - Step-decay-band} 
    %   \label{fig:sgd:stepdecay:cifar10}
    %      \subfigure%[Testing accuracy]
    %   {\label{sgd:stepdecay-band:cifar100:testaccu} \includegraphics[width=0.45\textwidth,height=1.5in]{images/sgd_stepdecay_band_cifar100_accuracy_2.pdf}}
    %   \caption{SGD - CIFAR100 - Step-decay-band}  \label{fig:sgd:stepdecay:cifar100} 
\end{minipage}    
\\
\begin{minipage}{1\textwidth}
 \centering
\includegraphics[width=0.32\textwidth,height=1.8in]{1sqrt_band_cifar100_accuracy_2.pdf}
      \includegraphics[width=0.32\textwidth,height=1.8in]{sgd_stepdecay_band_cifar100_accuracy_2.pdf}
    %   \caption{SGDM - CIFAR10 - Step-decay-band}  \label{fig:mom:stepdecay:cifar10}
    % \subfigure%[Testing accuracy]
    %   {\label{mom:stepdecay-band:cifar100:testaccu}
    \includegraphics[width=0.32\textwidth,height=1.8in]{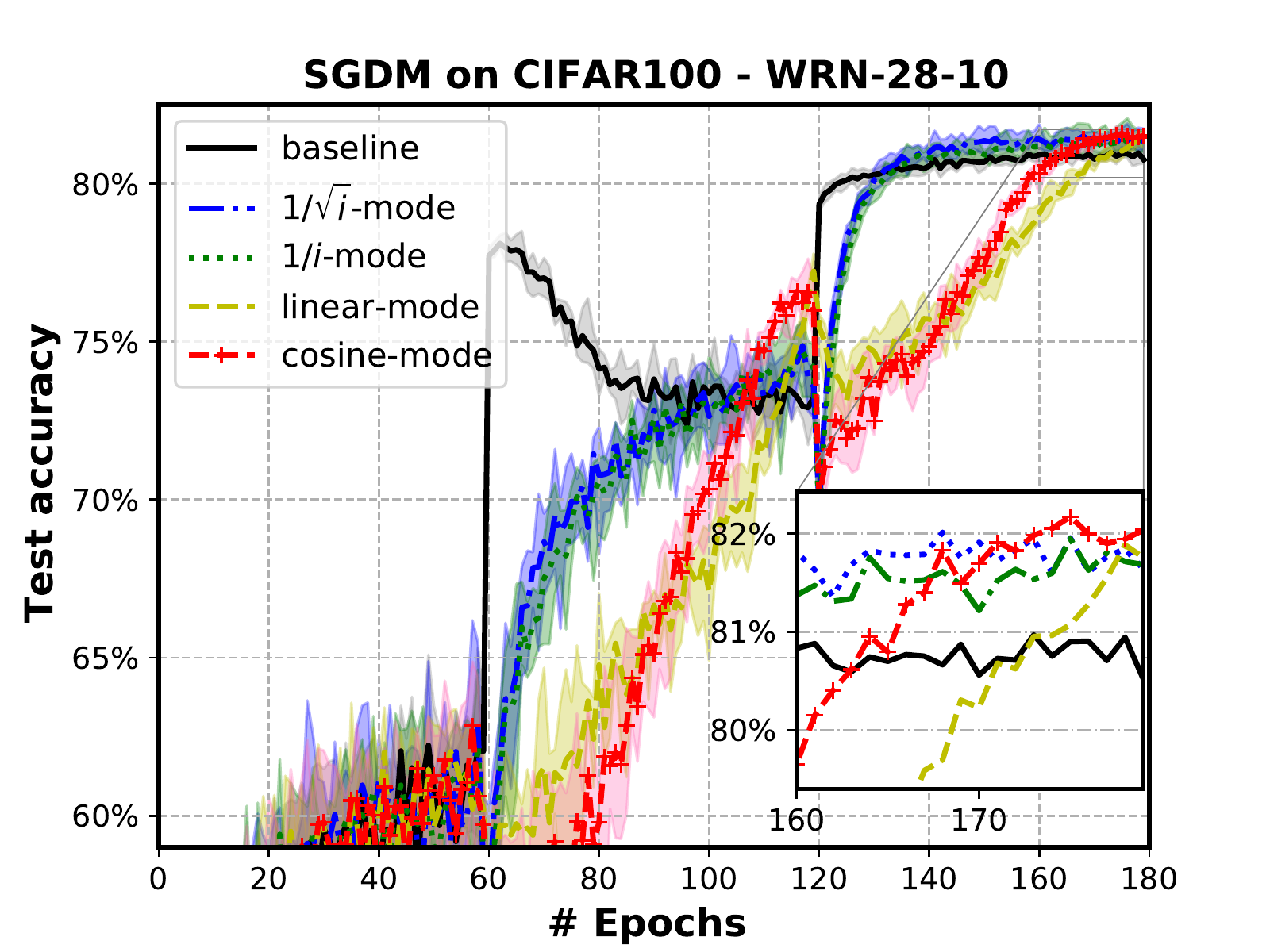}
\end{minipage}    
      \caption{The test accuracy  of $1/\sqrt{t}$-band (left column), Step-decay-band for SGD (middle  column), step-decay band for SGDM (right  column)}  
      \label{fig:sgd:mom}
\end{figure}

\section{Conclusion}\label{sec:conclusion}
We have studied a general family of bandwidth step-sizes for non-convex optimization. The family specifies a globally decaying band in which the actual step-size is allowed to vary, and includes both stage-wise and continuously decaying step-size policies as special cases. We have derived convergence rate guarantees for SGD and SGDM under all step-size policies in two important classes of bandwidth step-sizes ($1/\sqrt{t}$ and step-decay), some of which are optimal. Our results provide theoretical guarantees for several popular ``cyclical'' step-sizes~\citep{loshchilov2016sgdr,smith2017cyclical}, as long as they are tuned to lie within our bands. We have also designed a number of novel step-sizes that add periodic perturbations to the global trend in order to avoid bad local minima and to improve the local convergence properties. These step-sizes were shown to have superior practical performance in neural network training tasks on the CIFAR data set.
%global decay around which the step-size is allowed to vary within a prescribed band. 
%The family describes a gereal global trend

\iffalse
We establish the worst-case convergence guarantees for SGD and its momentum variant (SGDM) under the bandwidth-based step-size on smooth nonconvex problems, and some of the results are optimal. Our results provide theoretical guarantees for other popular ``cyclical step-size"~\citep{loshchilov2016sgdr,smith2017cyclical} if their boundaries lie within our bands. We proposed some monotonic step-sizes based on the $1/\sqrt{t}$-band and step-decay band, and illustrated their superior performance for training of large-scale deep neural networks. In Section \ref{sec:numerical}, 
\fi
%In these experiments, linear and cosine modes perform better than the others, but we do not yet have a clear theoretical explanation for this. We leave this as future work.
In the analysis of SGDM, we assume that the stochastic gradient is bounded (see Assumption \ref{assump:gradient-oracle}(b)).  It is interesting to see how to relax this assumption in some special cases, for example, when the step-size is constant throughout each stage. It would also be interesting to see if the bandwidth framework could be specialized to a more narrow class of step-sizes, for which we can provide even stronger convergence rates.
% The current analysis for SGDM is not applicable for the multi-stage $1/\sqrt{t}$ step-size
%Our analysis provides an optimal rate for SGD with bandwidth $1/\sqrt{t}$ step-sizes in a multi-stage profile. For bandwidth step-decay step-size, we prove the optimal and near-optimal rates for SGD and SGDM. 

% \bibliographystyle{plain}
\bibliography{bandwidth}

%%%%%%%%%%%%%%%%%%%%%%%%%%%%%%%%%%%%%%%%%%%%%%%%%%%%%%%%%%%%
\newpage
\appendix

\section{Convergence for Bandwidth Polynomial-Decay  Step-Sizes}\label{subsec:poly:sgd}

\iffalse
{\color{blue}
\begin{itemize}
    \item it is a little bit unclear what you mean by the same setting as Lemma 3.1. I think we could simply write ``Under Assumptions 1(a), 1(b) and 2(a), and with $\eta_0\leq 1/(\rho L)$, \dots'' \textcolor{red}{I will edit it}
    \item Also, if we decide to use $\eta_t$ instead of $\eta(t)$, then we need to update this section \textcolor{red}{Yes.}
    \item It would be good to discuss what you want to say with $S={\mathcal O}(1)$. Because $S$ is a constant, so the dependence is not clear.
    \item the assumption that $f$ is bounded is not mentioned very clearly until the comment, so we may try to rephrase this
\end{itemize}

}
\fi
Our first more specific result considers the $1/\sqrt{t}$ bandwidth step-size with fixed stage length.
%We begin with the bandwidth $1/\sqrt{t}$ step-size with the fixed inner-loop length $S_t$:

% \begin{theorem}\label{thm:band:const}
% Suppose that all the assumptions in Lemma \ref{lem:sgd} hold. If the length of the inner-loop $S_t = S \geq 1$ at each stage ($1 \leq t\leq N$) and the step-size $
% \eta(t)$ satisfies that $\frac{m}{\sqrt{N}}\leq \eta(t) \leq \frac{M}{\sqrt{N}}$ for each $1\leq t\leq N$, then  we have
% \begin{align*}
% \E[\left\|\nabla f(\hat{x}_T) \right\|^2] \leq \frac{2(f(x_1^1) - f^{\ast})}{m}\cdot \frac{1}{\sqrt{ST}} + \frac{M^2L\sigma }{m}\cdot \sqrt{\frac{{S}}{{T}}}.
% \end{align*}
% where $f^{\ast}$ denote the minimum value of function $f$, \emph{i.e.}, $f^{\ast} = \min_{x\in \R^d} f(x)$.
% \end{theorem} 

% Note that when $S_t = S = 1$, the step-size in Theorem \ref{thm:band:const} reduces to $\frac{m}{\sqrt{T}}\leq \eta(t) \leq \frac{M}{\sqrt{T}}$, then we can obtain an optimal $\mathcal{O}(1/\sqrt{T})$ rate which matches the result achieved at its boundary $\eta(t) = \eta_0/\sqrt{T}$ (constant step-size)~\citep{ghadimi2013stochastic, drori2020complexity}. 

%\red{Question: use $\frac{m}{\sqrt{T}}\leq \eta(t) \leq M/\sqrt{T}$ instead of $\frac{m}{\sqrt{N}}\leq \eta(t) \leq \frac{M}{\sqrt{N}}$ in Theorem \ref{thm:band:const} ? }

% Suppose that all the assumptions in Lemma \ref{lem:sgd} hold and the objective function $f$ is upper bounded by $f_{\max}$
\begin{theorem}\label{thm:band:1sqrt}
Under Assumptions \ref{assumpt:func} and ~\ref{assump:gradient-oracle}(a), and assume that there exists a constant $\Delta_0>0$ such that $\E[f(x_1^t) - f^{\ast}] \leq \Delta_0$ for each $t \geq 1$. If the step-size $\eta_i^t \leq 1/((\rho+1) L)$, the stage length $S_t = S \geq 1$, and the boundary function $\delta(t) = 1/\sqrt{t}$ for each $1\leq t\leq N$, we have 
\begin{align}\label{poly:sgd:band}
\E[\left\|\nabla f(\hat{x}_T)\right\|^2] \leq \frac{3\Delta_0}{m}\cdot \frac{1}{\sqrt{ST}} + \frac{3M^2L\sigma}{2m}\cdot \sqrt{\frac{S}{T}}.
\end{align}
\end{theorem}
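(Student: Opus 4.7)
The plan is to base the argument on the generic one-step descent bound that underlies Algorithm~\ref{alg:sgd}, specialize it to the choices $\delta(t)=1/\sqrt{t}$ and $S_t = S$, and then carefully handle the telescoping sum that results.

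First, I would derive the usual descent inequality from Assumption~\ref{assumpt:func}: for each inner step
\begin{equation*}
\E\bigl[f(x_{i+1}^t)\mid \mathcal{F}_i^t\bigr] \leq f(x_i^t) - \eta_i^t\bigl(1-\tfrac{(\rho+1)L\eta_i^t}{2}\bigr)\|\nabla f(x_i^t)\|^2 + \tfrac{L(\eta_i^t)^2\sigma}{2}.
\end{equation*}
The step-size restriction $\eta_i^t \leq 1/((\rho+1)L)$ makes the coefficient of $\|\nabla f(x_i^t)\|^2$ at least $\eta_i^t/2$. Rearranging, taking total expectation and summing $i=1,\ldots,S$ within stage $t$ (using $x_{S+1}^t = x_1^{t+1}$) gives
\begin{equation*}
\sum_{i=1}^{S}\eta_i^t\,\E\|\nabla f(x_i^t)\|^2 \;\leq\; 2\bigl(\E f(x_1^t)-\E f(x_1^{t+1})\bigr) + L\sigma\sum_{i=1}^{S}(\eta_i^t)^2.
\end{equation*}
Dividing by $m\delta(t)^2 = m/t$, using $\eta_i^t\geq m\delta(t)$ on the left to get a factor of $\delta^{-1}(t)=\sqrt t$ in front of each $\E\|\nabla f(x_i^t)\|^2$, and using $\eta_i^t\leq M\delta(t)$ on the right to bound $\sum_i(\eta_i^t)^2 \leq M^2 S\,\delta(t)^2$, yields the per-stage bound
\begin{equation*}
\sqrt{t}\sum_{i=1}^{S}\E\|\nabla f(x_i^t)\|^2 \;\leq\; \tfrac{2t}{m}\bigl(\E f(x_1^t)-\E f(x_1^{t+1})\bigr) + \tfrac{M^2L\sigma}{m}\,S.
\end{equation*}
Summing over $t=1,\ldots,N$ and dividing by $\sum_{t=1}^{N}\sqrt{t}\cdot S$ produces, by the sampling rule of Algorithm~\ref{alg:sgd}, an upper bound on $\E\|\nabla f(\hat x_T)\|^2$.

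The main obstacle is the telescoping term $\sum_{t=1}^{N}t\bigl(\E f(x_1^t)-\E f(x_1^{t+1})\bigr)$, which is not a clean telescope because the weights $t$ are increasing. I would handle this with Abel summation: subtracting $f^\ast$ from every $f$ (allowed since only differences appear) and rewriting
\begin{equation*}
\sum_{t=1}^{N}t\bigl(\E f(x_1^t)-\E f(x_1^{t+1})\bigr) \;=\; \sum_{t=1}^{N}\E[f(x_1^t)-f^\ast] - N\,\E[f(x_1^{N+1})-f^\ast] \;\leq\; N\Delta_0,
\end{equation*}
where the last inequality uses the hypothesis $\E[f(x_1^t)-f^\ast]\leq\Delta_0$ for all $t$ and the non-negativity of $f(x_1^{N+1})-f^\ast$. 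This is exactly where the bounded-trajectory assumption is essential.

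Finally, I would use the elementary inequality $\sum_{t=1}^{N}\sqrt t \geq \int_{0}^{N}\sqrt x\,dx = \tfrac{2}{3}N^{3/2}$ to lower bound the normalizing denominator, and combine the two estimates to obtain
\begin{equation*}
\E\|\nabla f(\hat x_T)\|^2 \;\leq\; \frac{1}{\tfrac{2}{3}N^{3/2}S}\Bigl(\tfrac{2N\Delta_0}{m}+\tfrac{M^2L\sigma}{m}NS\Bigr) \;=\; \frac{3\Delta_0}{m\sqrt{N}\,S} + \frac{3M^2L\sigma}{2m\sqrt{N}}.
\end{equation*}
Substituting $N = T/S$ turns $1/(\sqrt{N}\,S)$ into $1/\sqrt{ST}$ and $1/\sqrt{N}$ into $\sqrt{S/T}$, giving the two claimed terms $\tfrac{3\Delta_0}{m\sqrt{ST}}$ and $\tfrac{3M^2L\sigma}{2m}\sqrt{S/T}$ in (\ref{poly:sgd:band}).
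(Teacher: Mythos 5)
Your proposal is correct and follows essentially the same route as the paper: the descent inequality you derive and the subsequent division by $m\delta(t)^2$ is exactly the paper's Lemma~\ref{lem:sgd}, the Abel-summation bound $\sum_{t=1}^{N}t\bigl(\E[f(x_1^t)-f^\ast]-\E[f(x_1^{t+1})-f^\ast]\bigr)\leq N\Delta_0$ is the paper's inequality (\ref{inequ:1sqrt:1}), and the lower bound $\sum_{t=1}^N\sqrt{t}\geq\tfrac{2}{3}N^{3/2}$ with the substitution $N=T/S$ matches the paper's final step. No gaps.
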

Theorem \ref{thm:band:1sqrt} shows how multi-stage SGD with polynomial-decay bandwidth step-sizes converges to a stationary point. In the extreme case that $S=1$, the step-size reduces to $m/\sqrt{t}\leq \eta_1^t \leq M/\sqrt{t}$ and our result is comparable to the non-asymptotic optimal rate derived  for $m=M=\eta_0$ in~\cite[Theorem~3.5]{wang2021convergence}.
%
% }{However, to achieve the optimal error bound after $T$ iterations, the inner-loop length $S$ should be of order $\mathcal{O}(1)$.}
%then the bandwidth step-size can reach the non-asymptotic optimal rate which is comparable to %its boundary, i.e., $\eta^t = \eta_0/\sqrt{t}$ for $1 \leq t \leq T$ %\citep{wang2021convergence}.

{\bf Multi-stage vs traditional $1/\sqrt{t}$ step-size ($S=1$)} 
In general, during the initial iterations when the first term of (\ref{poly:sgd:band}) dominates the error bound, the multi-stage technique can accelerate the convergence by a larger step-size and longer inner-loop  $S$. However, a large $S$ will make the error bound worse when the noise term begins to dominate the bound. The next theorem analyzes an algorithm with a decreasing stage length.
%~\citep{ghadimi2013stochastic} we define a variable $S_t$ which decreases with the stage index $t$.
\begin{theorem}\label{thm:band:1sqrt:2}
Under Assumptions \ref{assumpt:func} and ~\ref{assump:gradient-oracle}(a),and assume that there exists a constant $\Delta_0>0$ such that $\E[f(x_1^t) - f^{\ast}] \leq \Delta_0$ for each $t \geq 1$. If the step-size $\eta_i^t \leq 1/((\rho+1) L)$,  the stage length $S_t = S_0/\sqrt{t}$ with $S_0=\sqrt{T}$, and $\delta(t) = 1/\sqrt{t}$ for each $1\leq t\leq N$, we have 
\begin{align*}
\E[\left\|\nabla f(\hat{x}_T)\right\|^2] \leq \frac{2\left(\Delta_0 + 2M^2L\sigma\right)}{m}\cdot \frac{1}{\sqrt{T}}.
\end{align*}
\end{theorem}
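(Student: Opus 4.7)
\textbf{Proof plan for Theorem \ref{thm:band:1sqrt:2}.} The plan is to apply the unified per-iterate descent bound for Algorithm \ref{alg:sgd} (the inequality stated as the conclusion of the bounding lemma used throughout Section \ref{sec:3:sgd} and Theorem \ref{thm:band:1sqrt}), namely
\[
\E[\|\nabla f(\hat x_T)\|^2]\;\le\;\frac{1}{\sum_{t=1}^{N} S_t\,\delta^{-1}(t)}\!\left(\sum_{t=1}^{N}\frac{2\bigl(\E[f(x_1^t)]-\E[f(x_1^{t+1})]\bigr)}{m\,\delta(t)^2}\;+\;\frac{M^2 L\sigma}{m}\,T\right),
\]
and then to choose $\delta(t)=1/\sqrt t$, $S_t=\sqrt T/\sqrt t$ and evaluate the two sums carefully. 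The main routine calculations are standard, so I will only outline them.

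First I would compute the normalizing denominator. With the specified choices, $S_t\,\delta^{-1}(t)=(\sqrt T/\sqrt t)\cdot\sqrt t=\sqrt T$, so $\sum_{t=1}^N S_t\delta^{-1}(t)=N\sqrt T$. The number of stages $N$ is implicitly determined by the budget constraint $\sum_{t=1}^N S_t=T$, i.e.\ $\sum_{t=1}^N t^{-1/2}=\sqrt T$; using the integral sandwich $2(\sqrt N-1)\le\sum_{t=1}^N t^{-1/2}\le 2\sqrt N-1$ yields $N=\Theta(T)$ and, more precisely, $N\ge T/4$ for $T$ large enough (a two-sided bound suffices for a clean constant).

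Next, the most delicate step is bounding the first sum, $\sum_{t=1}^{N}\frac{2t\,(\E[f(x_1^t)]-\E[f(x_1^{t+1})])}{m}$, because the weights $t=1/\delta(t)^2$ are growing and the bracketed differences are not sign-definite (the stochastic function values need not be monotone). Here I would use Abel summation (summation by parts): setting $a_t:=\E[f(x_1^t)-f^\ast]\ge 0$,
\[
\sum_{t=1}^N t\,(a_t-a_{t+1}) \;=\; a_1+\sum_{t=2}^{N} a_t - N a_{N+1} \;\le\; \sum_{t=1}^{N} a_t \;\le\; N\Delta_0,
\]
where the last inequality uses the uniform bound $\E[f(x_1^t)-f^\ast]\le\Delta_0$ granted by hypothesis. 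This converts the weighted telescope into a linear-in-$N$ quantity, which is exactly what is needed to pair with the $1/(N\sqrt T)$ normalization.

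Finally, I would assemble the pieces: the first term contributes at most $\frac{1}{N\sqrt T}\cdot\frac{2N\Delta_0}{m}=\frac{2\Delta_0}{m\sqrt T}$, while the noise term contributes $\frac{1}{N\sqrt T}\cdot\frac{M^2 L\sigma T}{m}=\frac{M^2 L\sigma\sqrt T}{mN}$; substituting the bound $N\ge T/4$ gives $\frac{4M^2 L\sigma}{m\sqrt T}$. Adding these yields the stated bound $\frac{2(\Delta_0+2M^2 L\sigma)}{m}\cdot\frac{1}{\sqrt T}$. The main obstacle is the Abel-summation step; once the monotone-in-expectation gap is replaced by a uniform $\Delta_0$ bound, the analysis is essentially mechanical, and the careful balancing of stage length $S_t\propto 1/\sqrt t$ against the inverse boundary $\delta^{-1}(t)=\sqrt t$ is what removes the extra $\sqrt S$ factor that appeared in Theorem \ref{thm:band:1sqrt} (the fixed-$S$ result), producing the optimal $\mathcal O(1/\sqrt T)$ rate.
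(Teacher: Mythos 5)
Your proposal is correct and follows essentially the same route as the paper: it invokes the unified bound of Lemma \ref{lem:sgd}, notes that $S_t\delta^{-1}(t)=\sqrt{T}$ so the denominator is $N\sqrt{T}$ with $N\geq T/4$, bounds the weighted telescoping sum by $N\Delta_0$ via the same Abel-summation rearrangement the paper uses in its inequality (\ref{inequ:1sqrt:1}), and assembles the two terms into the stated constant. No gaps.
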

{\bf Schedule of Theorem \ref{thm:band:1sqrt:2} vs \cite{chen2018universal} } The theorem establishes an optimal rate for multi-stage SGD with  $1/\sqrt{t}$ bandwidth step-size. 
%\red{This improves the result of Theorem \ref{thm:band:1sqrt} with a fixed inner-loop.} %Reference \cite{jain2019making}  exponentially decreases $S_t$ and also the $1/t$ step-size to remove the additional $\ln T$ term for the last iterate of SGD on convex problems. 
Note that \cite{chen2018universal} also analyzes a stagewise algorithm with varying stage length, but their step-size decays as $1/t$ and stage length increases with $t$. An  important novelty with our result is that it uses a long initial stage, $S_1=\sqrt{T}$ while a large stage length in~\cite{chen2018universal} requires a small initial step-size (of ${\mathcal O}(1/\sqrt{T}))$. 
Figure \ref{fig:sgd:stagelength} illustrates the performance of different step-size policies: 1) $1/\sqrt{t}$ with $S_t=1$; 2) $1/\sqrt{t}$ with $S_t = n/b$ where $n$ is total sample size and $b$ is the batch size; 3) $1/\sqrt{t}$ with time-decreasing $S_t = S_0/\sqrt{t}$ and $S_0=\sqrt{T}$; 4) and $1/t$ step-size with $S_t = S_0 t$ and $S_0=12$ from \cite{chen2018universal}. We can see that the step-size policies proposed in Theorem \ref{thm:band:1sqrt:2} are more stable and perform the best.

For completeness, we also compare the performance of step-size schedules proposed by Theorems \ref{thm:band:stepdecay} and \ref{thm:band:stepdecay:optimal}  in Figure \ref{fig:sgd:stagelength} (right). Although step-decay with time-increasing stage length has a superior theoretical convergence guarantee,  constant stage length performs better in this particular example. %\textcolor{red}{Numerical experiments: $1/\sqrt{t}$ ($S=1$, $\eta_t = \eta_0/\sqrt{t}$), $1/\sqrt{t}$ ($S= n/b$, $\eta_0=$), theorem 3.3 with $S_0=\sqrt{T}$ and $\eta_0=$, reference \cite{chen2018universal} with $1/t$ step-size (theory, $S_t = S_0 t $ with $S_0=12, \eta_0=$) }
\begin{figure}[h]
    \centering
     \centering
    \subfigure%[Testing loss]
      {\label{fig:sgd:1sqrt:stagelength} \includegraphics[width=0.42\textwidth,height=1.8in]{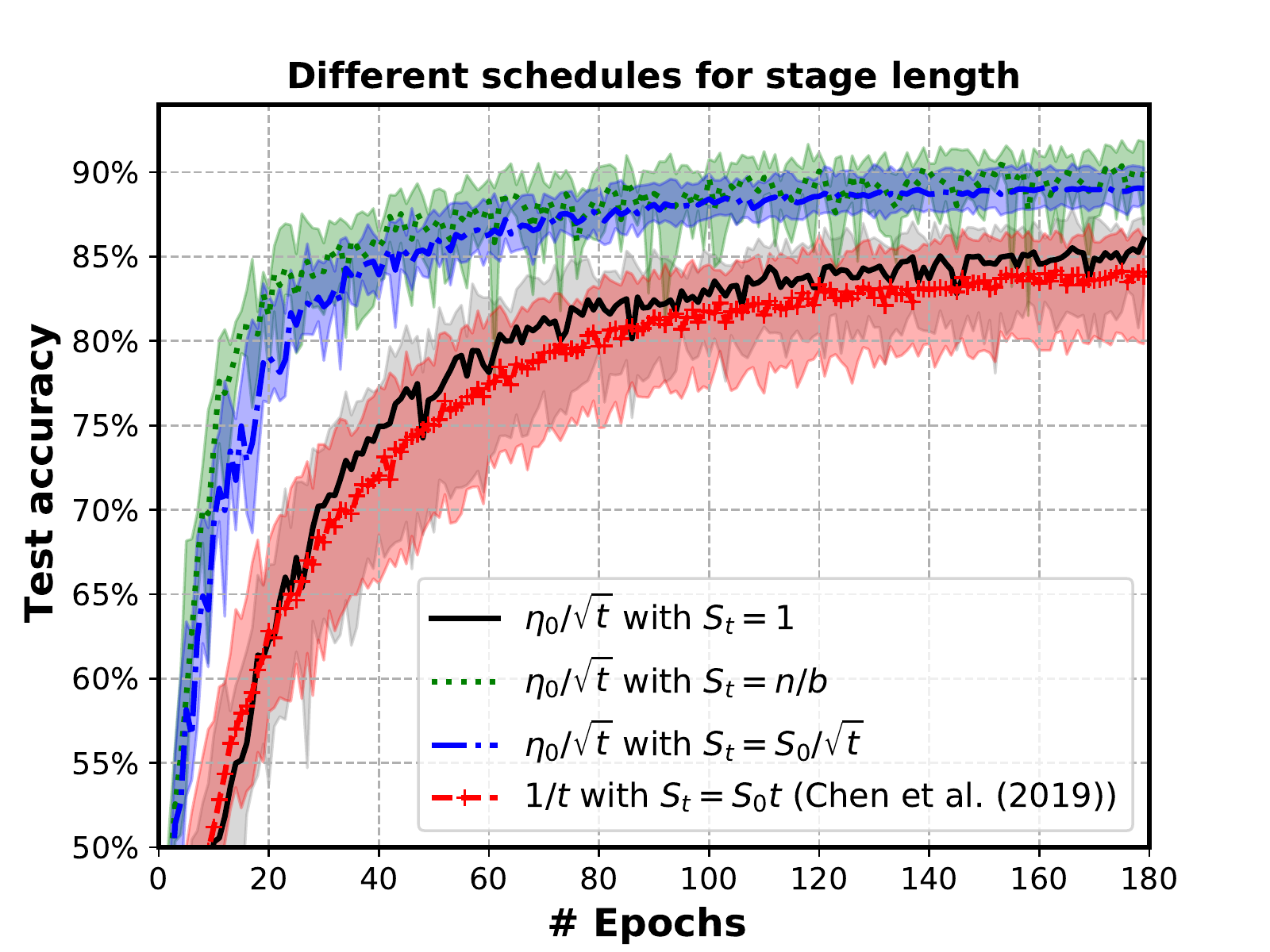}}
  \hfil
    \subfigure%[Testing accuracy]
      {\label{sgd:stepdecay-band:cifar10:testaccu} \includegraphics[width=0.42\textwidth,height=1.8in]{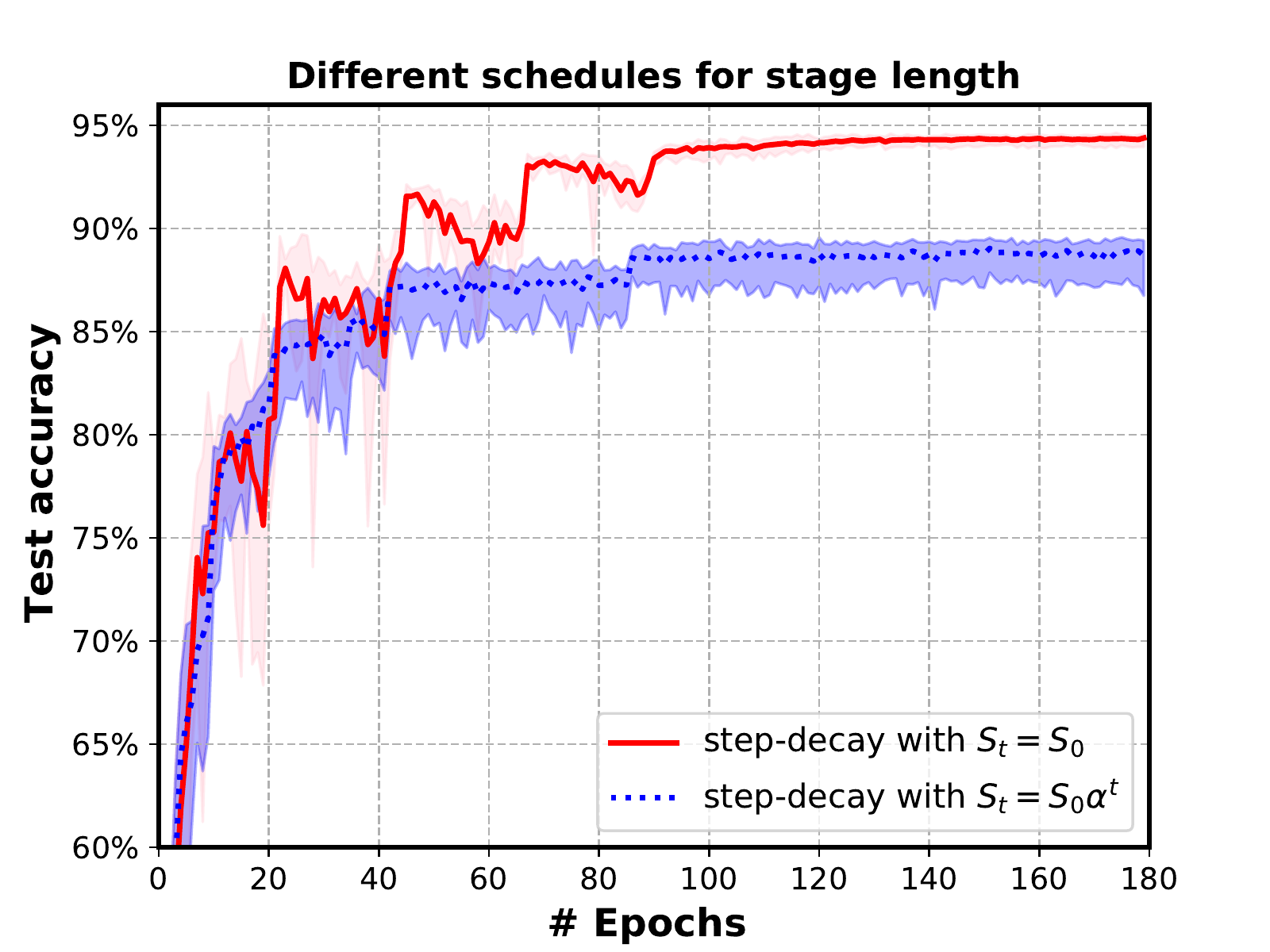}}
      \caption{SGD - CIFAR10 - ResNet18} 
      \label{fig:sgd:stagelength}
\end{figure}
%with decreasing inner loop length. $1/t$ step-size with linearly increasing $S_t$.  The %novelty of Theorem \ref{thm:band:1sqrt:2} is that it uses a much longer first stage, $S_1 = %\sqrt{T}$, which ensures that the $1/\sqrt{t}$ step-size does not drop too early and the %decreasing inner-loop length $S_t$ also make this schedule potentially favor from large %step-size for nonconvex optimization. 

\iffalse
\red{In \cite{jain2019making}, the author modifies the $\eta_0/t$ step-size (per-iterate) by exponentially decrease $\eta_0$ in a multi-stage manner with exponentially decreasing $S_t$ for strongly convex problems.  }
\fi

\section{Proofs of Lemma and Theorems in Section \ref{sec:3:sgd}}
\begin{lemma}\label{lem:sgd}
Suppose that Assumption~\ref{assumpt:func} and Assumption \ref{assump:gradient-oracle}(a) hold. If we run the Algorithm \ref{alg:sgd} with $T > 1$ and $\eta_i^t \leq 1/((\rho+1) L)$, we have
\begin{align}\label{eqn:unifiedBound}
\E[\left\|\nabla f(\hat{x}_T)\right\|^2] \leq  \frac{1}{\displaystyle \sum_{t=1}^N S_t \delta^{-1}(t)}\left(\sum_{t=1}^{N}\frac{2(\E[f(x_1^{t})] - \E[f(x_{1}^{t+1})])}{m\delta(t)^2} + \frac{M^2L\sigma}{m} T\right).
\end{align} 
\end{lemma}
\begin{proof}({\bf of Lemma \ref{lem:sgd}}) The $L$-smoothness of $f$ (see Assumption \ref{assumpt:func}), i.e., $\left\|\nabla f(x) -\nabla f(y) \right\| \leq L\left\|x-y \right\|$ for all $x, y \in \text{dom}(f)$ implies that 
\begin{align}
f(x) + \left\langle \nabla f(x), y-x\right\rangle - \frac{L}{2}\left\|x-y\right\|^2 \leq f(y) \leq f(x) + \left\langle \nabla f(x), y-x\right\rangle + \frac{L}{2}\left\|x-y\right\|^2.
\end{align}
Applying the $L$-smoothness property of $f$ and recalling Algorithm \ref{alg:sgd} at current iterate $x_i^t$, we have
\begin{align*}
f(x_{i+1}^t) & \leq f(x_i^t) + \left\langle \nabla f(x_i^t), x_{i+1}^t-x_i^t\right\rangle + \frac{L}{2}\left\|x_{i+1}^t -x_i^t\right\|^2 \\
& \leq  f(x_i^t)   - \eta_i^t \left\langle \nabla f(x_i^t), g_i^t\right\rangle + \frac{(\eta_i^t)^2L}{2}\left\|g_i^t\right\|^2.
\end{align*}
Taking conditional expectation of $\mathcal{F}_i^t$ on  the above inequality and due to the unbiased estimator $g_i^t$ such that $\E[g_i^t \mid \mathcal{F}_i^t] = \nabla f(x_i^t)$, we obtain that 
\begin{equation}\label{lem1:equ:3}
\E[f(x_{i+1}^t) | \mathcal{F}_i^t] \leq f(x_i^t) - \eta_i^t\left\|\nabla f(x_i^t)\right\|^2 + \frac{(\eta_i^t)^2L}{2}\E[\left\|g_i^t\right\|^2 | \mathcal{F}_i^t].
\end{equation}
By Assumption \ref{assump:gradient-oracle}(a) that  $\E[\left\|g_i^t - \nabla f(x_i^t)\right\|^2 \mid \mathcal{F}_i^t] \leq \rho \left\|\nabla f(x_i^t) \right\|^2 + \sigma$, we have
\begin{align}
    \E[\left\|g_i^t \right\|^2 \mid \mathcal{F}_i^t] & = \E[\left\|g_i^t - \nabla f(x_i^t) + \nabla f(x_i^t) \right\|^2] \notag \\
    & = \E[\left\|g_i^t - \nabla f(x_i^t)\right\|^2] +\E[\left\|\nabla f(x_i^t)\right\|^2] 
     \leq  (\rho+1)\left\| \nabla f(x_i^t)\right\|^2 + \sigma.
\end{align}
Then incorporating the above inequality  into (\ref{lem1:equ:3}) gives
\begin{align}\label{lem1:equ:1}
\E[f(x_{i+1}^t) | \mathcal{F}_i^t] \leq f(x_i^t) + \left(- \eta_i^t +\frac{(\eta_i^t)^2L(\rho+1)}{2} \right)\left\|\nabla f(x_i^t)\right\|^2 + \frac{(\eta_i^t)^2L\sigma}{2}.
\end{align}
If step-size $\eta_i^t \leq 1/((\rho+1) L)$, we have  $- \eta_i^t +\frac{(\eta_i^t)^2L(\rho+1)}{2}\leq -\eta_i^t/2$. For any $t\geq 1$, the inequality (\ref{lem1:equ:1}) can be estimated as:
\begin{equation}\label{lem1:equ:2}
\frac{\eta_i^t}{2} \left\|\nabla f(x_i^t)\right\|^2  \leq  f(x_i^t)-\E[f(x_{i+1}^t) | \mathcal{F}_i^t] + \frac{(\eta_i^t)^2L\sigma}{2}.
\end{equation}
Applying the assumption of step-size that $\eta_i^t = n(t,i)\delta(t)$ with $m \leq  n(t,i) \leq M$ for all $t \in \left\lbrace 1,2, \cdots, N\right\rbrace$ to
 (\ref{lem1:equ:2}) gives
\begin{align}
\frac{m\delta(t)}{2}\left\|\nabla f(x_i^t)\right\|^2 \leq f(x_i^t)-\E[f(x_{i+1}^t) | \mathcal{F}_i^t] + \frac{M^2L\sigma}{2}\delta(t)^2,
\end{align}
and dividing $m\delta(t)^2/2$ into the both sides, we have
\begin{align}\label{lem:inequ:1}
\delta^{-1}(t)\left\|\nabla f(x_i^t)\right\|^2 \leq \frac{2\left(f(x_i^t)-\E[f(x_{i+1}^t) | \mathcal{F}_i^t]\right)}{m\delta(t)^2} + \frac{M^2L\sigma}{m}.
\end{align}
 Recalling the output $\hat{x}_T$ of Algorithm \ref{alg:sgd} and then taking expectation, we have
\begin{align}\label{lem:inequ:core}
& \E[\left\|\nabla f(\hat{x}_T)\right\|^2] = \frac{1}{\sum_{t=1}^N S_t\delta^{-1}(t)}\sum_{t=1}^N\delta^{-1}(t)\cdot\sum_{i=1}^{S_t}\E\left[\left\|\nabla f(x_i^t)\right\|^2\right].
\end{align}
Applying (\ref{lem:inequ:1}) recursively from $i=1$ to $S_t$ and using the fact that $x_{1}^{t+1} = x_{{S_t}+1}^t$, the sum of (\ref{lem:inequ:core}) for $t \geq 1$ can be estimated as 
\begin{align}\label{lem:inequ:3}
\sum_{t=1}^{N}\delta^{-1}(t)\sum_{i=1}^{S_t}\E\left[\left\|\nabla f(x_i^t)\right\|^2\right] 
  & \leq \sum_{t=1}^{N}\frac{2(\E[f(x_1^{t})] - \E[f(x_{1}^{t+1})])}{m\delta(t)^2} + \frac{M^2L\sigma}{m}T.
\end{align}
Then plugging (\ref{lem:inequ:3}) into (\ref{lem:inequ:core}), we get 
\begin{align*}
 \E[\left\|\nabla f(\hat{x}_T)\right\|^2] 
\leq  & \frac{1}{\sum_{t=1}^NS_t\delta^{-1}(t)}\left(\sum_{t=1}^{N}\frac{2(\E[f(x_1^{t})] - \E[f(x_{1}^{t+1})])}{m\delta(t)^2} + \frac{M^2L\sigma}{m} T\right) 
\end{align*}
as desired.

\end{proof}

% \begin{proof}({\bf of Theorem \ref{thm:band:const}})
% In this case, the step size $\eta_i^t$ satisfies that $m/\sqrt{N}\leq \eta_i^t \leq M/\sqrt{N}$, i.e., the boundary functions $\delta_1(t) = \delta_2(t)=1/\sqrt{N}$. The inner-loop length $S_t = S \geq 1$ is fixed, we have 
% \begin{align*}
%     \sum_{t=1}^N \frac{S_t}{\delta_1(t)} = SN^{3/2}.
% \end{align*}
% Because all the assumptions of Lemma \ref{lem:sgd} are satisfied, then the result of Lemma \ref{lem:sgd} holds. Applying the boundary functions $\delta_1(t)$ and $\delta_2(t)$ into Lemma \ref{lem:sgd}, we have
% \begin{align*}
%   \E[\left\|\nabla f(\hat{x}_T) \right\|^2] & \leq \frac{1}{S N^{3/2}}\left(\frac{2N(f(x_1^{1})-\E[f(x_{1}^{N+1})])}{m} + \frac{M^2L\sigma}{m}\cdot SN\right) \notag \\
%   & \leq \frac{2\left(f(x_1^1) - f^{\ast}\right)}{mS\sqrt{N}} + \frac{M^2L\sigma}{m\sqrt{N}}.
% \end{align*}
% \end{proof}

\begin{proof}({\bf of Theorem \ref{thm:band:1sqrt}})
In this case, the step size $\eta_i^t$ satisfies that  $m/\sqrt{t}\leq \eta_i^t \leq M/\sqrt{t}$, where $\delta(t)=1/\sqrt{t}$. By the definition of $\delta(t)$, we have
\begin{align}
    \sum_{t=1}^N \frac{1}{\delta(t)} & \geq \int_{t=0}^{N}\frac{1}{\delta(t)}dt = \frac{2N^{\frac{3}{2}}}{3}.
\end{align}

Applying the assumption that $\E[f(x_1^{t}) - f^{\ast}]\leq \Delta_0$ where $f^{\ast} = \min_{x} f(x)$ and the definition of $\delta(t)$, we have 
\begin{align}\label{inequ:1sqrt:1}
\sum_{t=1}^{N}\frac{2(\E[f(x_1^{t})] - \E[f(x_{S+1}^t)])}{m\delta(t)^2} & \leq \frac{2}{m}\sum_{t=1}^{N}t(\E[f(x_1^{t}) - f^{\ast}] - \E[f(x_{1}^{t+1}) - f^{\ast}]) \notag \\
& \leq \frac{2}{m}\left(\Delta_0 + \sum_{t=2}^{N}\E[f(x_1^{t}) - f^{\ast}]\right)  \leq \frac{2 N\Delta_0}{m}.
\end{align}
Under Assumptions \ref{assumpt:func} and \ref{assump:gradient-oracle}(a) and $\eta_i^t < 1/((\rho+1) L)$, thus Lemma \ref{lem:sgd} holds. Incorporating these inequalities into Lemma \ref{lem:sgd} gives
\begin{align*}
\E[\left\|\nabla f(\hat{x}_T)\right\|^2] 
& \leq \frac{1}{S\sum_{t=1}^N\delta^{-1}(t)}\left[ \sum_{t=1}^{N}\frac{2(\E[f(x_1^{t})] - \E[f(x_{1}^{t+1})])}{m\delta(t)^2} + \frac{SM^2L\sigma}{m} T\right] \notag \\
& \leq \frac{3}{2SN^{3/2}}\left[\frac{2 N\Delta_0}{m} +  \frac{M^2L\sigma}{m} T\right] \\
& \leq  \frac{3\Delta_0}{m}\cdot \frac{1}{\sqrt{ST}} + \frac{3M^2L\sigma}{2m}\cdot \sqrt{\frac{S}{T}}.
\end{align*}
Then the proof is finished.
\end{proof}

\begin{proof}({\bf of Theorem \ref{thm:band:1sqrt:2}})
In this theorem, we consider SGD with the $1/\sqrt{t}$ bandwidth step-size, i.e., $\frac{m}{\sqrt{t}} \leq \eta_i^t \leq \frac{M}{\sqrt{t}}$ for $1 \leq t \leq N $, where the stage length $S_t = S_0/\sqrt{t}$ with $S_0=\sqrt{T}$, and the boundary function $\delta(t) = 1/\sqrt{t}$. By the relationship that $\sum_{t=1}^{N} S_t = T$, we get that 
\begin{align}
  \frac{T}{4} \leq  N  \leq  (\frac{\sqrt{T}}{2}+1)^2.
\end{align}
Under Assumptions \ref{assumpt:func} and ~\ref{assump:gradient-oracle}(a) and $\eta_i^t < 1/((\rho+1) L)$, thus Lemma \ref{lem:sgd} holds. Following the same process as Theorem \ref{thm:band:1sqrt}, the inequality (\ref{inequ:1sqrt:1}) also holds, that is \begin{align}
\sum_{t=1}^{N}\frac{2(\E[f(x_1^{t})] - \E[f(x_{S+1}^t)])}{m\delta(t)^2} 
\leq \frac{2 N\Delta_0}{m}.
\end{align}
Then incorporating the above inequality to Lemma \ref{lem:sgd}, we have
\begin{align*}
\E[\left\|\nabla f(\hat{x}_T)\right\|^2] 
& \leq \frac{1}{\sum_{t=1}^NS_t\delta^{-1}(t)}\left[ \frac{2 N\Delta_0}{m} + \frac{M^2L\sigma}{m}T\right] \notag \\
& \leq \frac{1}{\sum_{t=1}^N S_0/\sqrt{t}\cdot\sqrt{t}}\left[ \frac{2 N\Delta_0}{m} + \frac{M^2L\sigma}{m}\cdot T\right] \notag \\ 
& \leq \frac{1}{S_0N}\left[ \frac{2 N\Delta_0}{m} + \frac{M^2L\sigma}{m}\cdot T\right]\\
& \leq \frac{2\Delta_0}{mS_0} + \frac{4M^2L\sigma}{m S_0} = \frac{2\left(\Delta_0 + 2M^2L\sigma\right)}{m}\cdot \frac{1}{\sqrt{T}},
\end{align*}
which concludes the proof.
\end{proof}

\begin{proof} ({\bf of Theorem \ref{thm:band:stepdecay}})
In this case, the step-size $\eta_i^t$ exponentially decays every $S$ iterations. By the definition of $\eta_i^t$, we have $\delta(t) = 1/\alpha^{t-1}$ for all $ 1 \leq t \leq N$. The sum of $1/\delta(t)$ can be estimated as
\begin{align*}
\sum_{t=1}^N \frac{1}{\delta(t)} & = \sum_{t=1}^N \alpha^{t-1} = \frac{\alpha^{N}-1}{\alpha -1}.
\end{align*}
Recalling the assumption that $\E[ f(x_1^t) - f^{\ast}] \leq \Delta_0$ for all $t \geq 1$, we have
\begin{align*}
\sum_{t=1}^{N}\frac{2(\E[f(x_1^{t})] - \E[f(x_{1}^{t+1})])}{m\delta(t)^2} & \leq \frac{2}{m}\sum_{t=1}^{N}\alpha^{2(t-1)}(\E[f(x_1^{t})-f^{\ast}] - \E[f(x_{1}^{t+1}) - f^{\ast}])  \notag \\
& \leq \frac{2}{m}\left((\alpha^2-1)\sum_{t=2}^{N}\alpha^{2(t-2)}\E[f(x_1^{t}) -f^{\ast}] + \Delta_0 \right) \\
& \leq \frac{2\alpha^{2(N-1)} \Delta_0}{m}.
\end{align*}
Under Assumptions \ref{assumpt:func} and ~\ref{assump:gradient-oracle}(a) and $\eta_i^t < 1/((\rho+1) L)$, thus Lemma \ref{lem:sgd} holds. Then applying the result of Lemma \ref{lem:sgd} and incorporating the above inequalities into Lemma \ref{lem:sgd} gives
\begin{align*}
\E[\left\|\nabla f(\hat{x}_T)\right\|^2] 
& \leq \frac{1}{S\sum_{t=1}^N \delta_1^{-1}(t)}\left(\sum_{t=1}^{N}\frac{2(\E[f(x_1^{t})] - \E[f(x_{S+1}^t)])}{m\delta(t)^2} + \frac{SM^2L\sigma}{m}\cdot \sum_{t=1}^{N}\frac{\delta_2(t)^2}{\delta(t)^2}\right) \notag \\
& \leq  \frac{\alpha-1}{S(\alpha^{N}-1)}\left(  \frac{2\alpha^{2(N-1)} \Delta_0}{m} + \frac{M^2L\sigma}{m}SN\right).
\end{align*}
Substituting the specific values of $N=(\log_{\alpha} T)/2$, $S = 2T/\log_{\alpha} T$ into the above inequality, we have
\begin{align*}
\E[\left\|\nabla f(\hat{x}_T)\right\|^2] & \leq \frac{(\alpha-1)\log_{\alpha} T}{2T(\sqrt{T}-1)}\left( \frac{8\Delta_0}{\alpha m}\cdot T + \frac{M^2L\sigma}{m} T\right) \\
& \leq  \left(\frac{4\Delta_0}{\alpha m} + \frac{M^2L\sigma}{2m}\right)\frac{(\alpha-1)\log_{\alpha} T}{\sqrt{T}-1}.
\end{align*}
then transform the base $\log_{\alpha}$ to the natural logarithm $\ln$, we can get the desired result.
\end{proof}

\begin{proof}({\bf of Theorem \ref{thm:band:stepdecay:optimal}})
If $S_0 = \sqrt{T}$ and $S_t = S_0\alpha^{t-1}$, the stage length $N$ is $\log_{\alpha}((\alpha-1)\sqrt{T}+1)$. In this case, we have $\delta(t) = 1/\alpha^{t-1}$ for each $t\in [N]$. Under Assumptions \ref{assumpt:func} and ~\ref{assump:gradient-oracle}(a) and $\eta_i^t < 1/((\rho+1) L)$, thus Lemma \ref{lem:sgd} holds.
Before applying the result of Lemma \ref{lem:sgd}, we first give the following estimations:
\begin{align}
\sum_{t=1}^{N} S_t\delta^{-1}(t) = \sum_{t=1}^{N} S_0\alpha^{t-1} \cdot \alpha^{t-1} = \sqrt{T}\cdot \frac{1-\alpha^{2N}}{1-\alpha^2} = \frac{(\alpha-1)T^{\frac{3}{2}}+2T}{\alpha+1}
\end{align}
and also
\begin{align*}
\sum_{t=1}^{N}\frac{2(\E[f(x_1^{t})] - \E[f(x_{1}^{t+1}) ])}{m\delta(t)^2} & \leq  \frac{2}{m}\left((\alpha^2-1)\sum_{t=2}^{N}\alpha^{2(t-2)}\E[f(x_1^t) -f^{\ast}] + \Delta_0\right) \\
& \leq \frac{2\alpha^{2(N-1)}\Delta_0}{m} = \frac{2((\alpha-1)\sqrt{T}+1)^2\Delta_0}{\alpha^2m}.
\end{align*}
Applying these results into Lemma \ref{lem:sgd}, we have
\begin{align*}
\E[\left\|\nabla f(\hat{x}_T)\right\|^2] & \leq \frac{\alpha+1 }{(\alpha-1)T^{\frac{3}{2}}+2T}\left( \frac{2((\alpha-1)\sqrt{T}+1)^2 \Delta_0}{\alpha^2m} + \frac{M^2L\sigma}{m} \cdot T \right).
\end{align*}
Thus the proof is complete.
\end{proof}

\section{Proofs of Lemma and Theorems in Section \ref{sec:4:mom}}\label{sec:appendix:mom}

\begin{algorithm}[t]
	\caption{SGDM with Bandwidth-based Step-Size}\label{alg:mom}
	\begin{algorithmic}[1]			
		\STATE \textbf{Input:} initial point $x_1^1 \in \R^d$,  $v_1^1=\bm{0}$, \# iterations $T$, \# stages $N$, stage length $\left\lbrace S_t \right\rbrace_{t=1}^{N}$ such that $\sum_{t=1}^{N}S_t = T$, momentum parameter $\beta \in (0,1)$, the sequences $\left\lbrace \delta(t)\right\rbrace_{t=1}^{N} $ and $\left\lbrace\left\lbrace n(t, i)\right\rbrace_{i=1}^{S_t}\right\rbrace_{t=1}^N \in [m, M]$ with $0 < m \leq M$
		\FOR{ $t =  1: N$ }
		\FOR{ $i = 1: S_t$}
		\STATE 	Query a stochastic gradient oracle $\mathcal{O} $ at $x_i^t$ to get a vector $g_i^t$ such that $\E[g_i^t \mid \mathcal{F}_i^t] = \nabla f(x_i^t)$ \\
		\STATE Update step-size $\eta_i^t  = n(t, i) \delta(t)$, which belongs to the interval $ [m\delta(t), M\delta(t)]$ \\
		\STATE $v_{i+1}^{t} = \beta v_i^t + (1-\beta) g_i^t$
		\STATE $x_{i+1}^t = x_i^t - \eta_i^t v_{i+1}^t$
		\ENDFOR
		\STATE $ v_1^{t+1} = v_{S_t+1}^t$ and $x_{1}^{t+1} = x_{S_t+1}^{t}$
		\ENDFOR
 		\STATE {\bf Return:} $\hat{x}_T$ is uniformly chosen from $\left\lbrace x_{1}^{t^{\ast}}, x_{2}^{t^{\ast}}, \cdots, x_{S_{t^{\ast}}}^{t^{\ast}} \right\rbrace$, where the integer $t^{\ast} $ is randomly chosen from $\left\lbrace 1, 2, \cdots, N\right\rbrace$ with probability $P_t = \delta^{-1}(t)/(\sum_{l=1}^{N}\delta^{-1}(l))$
 	\end{algorithmic}
 \end{algorithm}

We recall the momentum scheme of Algorithm \ref{alg:mom} below
\begin{align}
\label{iterate:mom:vit} v_{i+1}^t & = \beta v_{i}^t + (1-\beta)g_i^t \\
 \label{iterate:mom:xit} x_{i+1}^t & = x_i^t -\eta_i^t v_{i+1}^t
\end{align}
where $\beta \in (0,1)$. 
Before given the proofs, we introduce an extra variable $z_i^t = \frac{x_i^t}{1-\beta} - \frac{\beta}{1-\beta}x_{i-1}^t$, then 
\begin{align}
z_i^t-x_i^t & = \frac{\beta}{1-\beta}(x_i^t -x_{i-1}^t), \\
z_{i+1}^t -x_i^t & = \frac{1}{1-\beta}(x_{i+1}^t -x_i^t).
\end{align}
 However, the bandwidth-based step-size $\eta_i^t$ in our analysis is time dependent and also possibly non-monotonic, so the commonly used equalities $x_{i+1}^t = x_i^t - \eta g_i^t + \beta(x_i^t - x_{i-1}^t)$ \citep{unified_momentum} or $z_{i+1}^t = z_i^t - \eta^t g_i^t$ (see lemma 3 of \citet{liu2020improved}) do not hold in our analysis. This significantly increases the level of difficulty of the analysis. The results of Lemma \ref{lem:mom:wit2} is based on a sequence of lemmas introduced below.
\begin{lemma}\label{lem:mom:fzit}
Suppose that the objective function $f$ satisfies Assumption \ref{assumpt:func}. At each stage $t$, the step-size $\eta_i^t$ is monotonically decreasing. Then, for $i \geq 2$, we have % = n(t, i)\delta(t)$ which belongs to $[m\delta(t), M\delta(t)]$, and
\begin{align*}
& \E[f(z_{i+1}^t) \mid \mathcal{F}_i^t] - f(z_i^t) + \frac{\beta}{1-\beta}\left(1 - \frac{\eta_i^t}{\eta_{i-1}^t}\right)\left( f(x_i^t) - f(x_{i-1}^t) \right) \notag  \\
& \leq -\eta_i^t\left\|\nabla f(x_i^t)\right\|^2   +\frac{\beta L}{2(1-\beta)^2}\left\|x_{i}^t - x_{i-1}^t\right\|^2 + \frac{L}{2(1-\beta)^2}\E[\left\|x_{i+1}^t -x_i^t \right\|^2\mid \mathcal{F}_i^t].
\end{align*}
\end{lemma}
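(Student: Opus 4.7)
The natural plan is to decompose $f(z_{i+1}^t)-f(z_i^t)=[f(z_{i+1}^t)-f(x_i^t)]+[f(x_i^t)-f(z_i^t)]$ and apply the $L$-smoothness of Assumption~\ref{assumpt:func} to each piece. Exploiting the identities $z_{i+1}^t-x_i^t=\frac{1}{1-\beta}(x_{i+1}^t-x_i^t)$ and $x_i^t-z_i^t=-\frac{\beta}{1-\beta}(x_i^t-x_{i-1}^t)$, the smoothness upper bound applied to the first bracket and the smoothness \emph{lower} bound applied to the second (i.e.\ $f(z_i^t)\ge f(x_i^t)+\langle\nabla f(x_i^t),z_i^t-x_i^t\rangle-\tfrac{L}{2}\|z_i^t-x_i^t\|^2$, negated) produce, in one line, the two square-norm pieces $\frac{L}{2(1-\beta)^2}\|x_{i+1}^t-x_i^t\|^2$ and $\frac{L\beta^2}{2(1-\beta)^2}\|x_i^t-x_{i-1}^t\|^2$, together with the two inner products $\frac{1}{1-\beta}\langle\nabla f(x_i^t),x_{i+1}^t-x_i^t\rangle$ and $-\frac{\beta}{1-\beta}\langle\nabla f(x_i^t),x_i^t-x_{i-1}^t\rangle$.

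Next I would take conditional expectation $\E[\,\cdot\,|\mathcal{F}_i^t]$. From the momentum recursion $v_{i+1}^t=\beta v_i^t+(1-\beta)g_i^t$ combined with the rearranged update $v_i^t=-(x_i^t-x_{i-1}^t)/\eta_{i-1}^t$, one obtains
\[
\E[x_{i+1}^t-x_i^t\mid\mathcal{F}_i^t]=\frac{\beta\eta_i^t}{\eta_{i-1}^t}(x_i^t-x_{i-1}^t)-\eta_i^t(1-\beta)\nabla f(x_i^t).
\]
Substituting this into the first inner product and merging with the second collapses them to $-\eta_i^t\|\nabla f(x_i^t)\|^2-\frac{\beta}{1-\beta}\bigl(1-\eta_i^t/\eta_{i-1}^t\bigr)\langle\nabla f(x_i^t),x_i^t-x_{i-1}^t\rangle$, so the time-varying step-size is exactly what introduces the residual inner product.

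The main obstacle is converting this residual inner product into the function-value gap $f(x_i^t)-f(x_{i-1}^t)$ that appears in the statement. Here the monotonicity hypothesis is decisive: $\eta_i^t\le\eta_{i-1}^t$ makes the coefficient $\frac{\beta}{1-\beta}(1-\eta_i^t/\eta_{i-1}^t)$ nonnegative, so I can apply the $L$-smoothness lower bound $\langle\nabla f(x_i^t),x_i^t-x_{i-1}^t\rangle\ge f(x_i^t)-f(x_{i-1}^t)-\tfrac{L}{2}\|x_i^t-x_{i-1}^t\|^2$ and flip it to an upper bound on the negative inner product. Moving the $f(x_i^t)-f(x_{i-1}^t)$ piece to the left side of the inequality produces exactly the third term of the lemma's left-hand side.

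Finally, a small bookkeeping step checks the coefficient of $\|x_i^t-x_{i-1}^t\|^2$: adding the $\frac{L\beta^2}{2(1-\beta)^2}$ contribution from the smoothness split and the $\frac{\beta L}{2(1-\beta)}(1-\eta_i^t/\eta_{i-1}^t)$ contribution from bounding the inner product, and using $1-\eta_i^t/\eta_{i-1}^t\le 1$, gives $\frac{\beta L[\beta+(1-\beta)]}{2(1-\beta)^2}=\frac{\beta L}{2(1-\beta)^2}$, matching the claim. No assumption on the stochastic gradient beyond unbiasedness is needed for this lemma, since Assumption~\ref{assump:gradient-oracle}(b) enters only through the $\E[\|x_{i+1}^t-x_i^t\|^2\mid\mathcal{F}_i^t]$ term that is left intact on the right-hand side.
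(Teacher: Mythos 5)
Your proposal is correct and follows essentially the same route as the paper's proof: the same $f(z_{i+1}^t)-f(x_i^t)$ / $f(x_i^t)-f(z_i^t)$ split with the smoothness upper and lower bounds, the same use of $v_i^t=-(x_i^t-x_{i-1}^t)/\eta_{i-1}^t$ to expose the coefficient $\frac{\beta}{1-\beta}\bigl(1-\eta_i^t/\eta_{i-1}^t\bigr)$, the same invocation of monotonicity to make that coefficient nonnegative before bounding the residual inner product by smoothness, and the same final coefficient merge $\frac{L\beta^2}{2(1-\beta)^2}+\frac{\beta L}{2(1-\beta)}=\frac{\beta L}{2(1-\beta)^2}$.
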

\begin{proof}
 Using the $L$-smoothness of $f$ (Assumption \ref{assumpt:func}) and taking conditional expectation gives
\begin{align*}
&  \E[f(z_{i+1}^t) \mid \mathcal{F}_i^t] - f(x_i^t) \notag \\
& \leq \E[\left\langle \nabla f(x_i^t), z_{i+1}^t - x_i^t \right\rangle \mid \mathcal{F}_i^t]  + \frac{L}{2}\E[\left\|z_{i+1}^t -x_i^t \right\|^2 \mid \mathcal{F}_i^t] \notag \\
 & \leq  \frac{1}{1-\beta}\E[\left\langle \nabla f(x_i^t), x_{i+1}^t - x_i^t \right\rangle \mid \mathcal{F}_i^t]+ \frac{L}{2(1-\beta)^2}\E[\left\|x_{i+1}^t -x_i^t \right\|^2\mid \mathcal{F}_i^t] \notag \\
 & \leq \frac{1}{1-\beta}\E\left\langle \nabla f(x_i^t), - \eta_i^t ((1-\beta)g_i^t +\beta v_i^t)\right\rangle \mid \mathcal{F}_i^t] + \frac{L}{2(1-\beta)^2}\E[\left\|x_{i+1}^t -x_i^t \right\|^2\mid \mathcal{F}_i^t]  \notag  \\
 & \leq  -\eta_i^t\left\|\nabla f(x_i^t)\right\|^2 - \frac{\eta_i^t \beta}{1-\beta} \left\langle \nabla f(x_i^t), v_i^t \right\rangle  + \frac{L}{2(1-\beta)^2}\E[\left\|x_{i+1}^t -x_i^t \right\|^2\mid \mathcal{F}_i^t] \notag \\
 & = -\eta_i^t\left\|\nabla f(x_i^t)\right\|^2 - \frac{\beta\eta_i^t}{(1-\beta)\eta_{i-1}^t}\left\langle \nabla f(x_i^t), x_{i-1}^t-x_i^t\right\rangle  +  \frac{L}{2(1-\beta)^2}\E[\left\|x_{i+1}^t -x_i^t \right\|^2\mid \mathcal{F}_i^t].
\end{align*}
Re-using the $L$-smoothness property of $f$ at $z_{i}^t$ and $x_i^t$ gives
\begin{align}
f(z_i^t) & \geq f(x_i^t) + \left\langle \nabla f(x_i^t), z_i^t - x_i^t\right\rangle - \frac{L}{2}\left\|z_i^t - x_i^t \right\|^2 \notag \\
& = f(x_i^t) + \frac{\beta}{1-\beta}\left\langle \nabla f(x_i^t), x_i^t -x_{i-1}^t\right\rangle - \frac{L\beta^2}{2(1-\beta)^2}\left\|x_i^t -x_{i-1}^t \right\|^2.
\end{align}
Then, combining the two inequalities above, we have
\begin{align}\label{inequ:lem:fzit}
 \E[f(z_{i+1}^t) \mid \mathcal{F}_i^t] & \leq f(z_i^t)  -\eta_i^t\left\|\nabla f(x_i^t)\right\|^2 + \frac{\beta}{1-\beta}\left(1 - \frac{\eta_i^t}{\eta_{i-1}^t}\right)\left\langle \nabla f(x_i^t), x_{i-1}^t - x_i^t\right\rangle \notag \\
 & \quad  +  \frac{L}{2(1-\beta)^2}\E[\left\|x_{i+1}^t -x_i^t \right\|^2\mid \mathcal{F}_i^t] + \frac{L\beta^2}{2(1-\beta)^2}\left\|x_i^t -x_{i-1}^t \right\|^2.
\end{align}
The step-size $\eta_i^t$ for each stage is monotonically decreasing, i.e. $\eta_i^t \leq \eta_{i-1}^t$ for $i\geq 2$, so
$\left(1- \frac{\eta_i^t}{\eta_{i-1}^t}\right) \geq  0$. By the $L$-smoothness of $f$, the inner product of (\ref{inequ:lem:fzit}) can be estimated as
\begin{align}\label{inequ:fzit:2}
   \left\langle \nabla f(x_i^t), x_{i-1}^t - x_i^t\right\rangle \leq f(x_{i-1}^t) - f(x_i^t) + \frac{L}{2}\left\|x_{i}^t - x_{i-1}^t\right\|^2.
\end{align}
Applying (\ref{inequ:fzit:2}) into (\ref{inequ:lem:fzit}), we find
\begin{align}
 & \E[f(z_{i+1}^t) \mid \mathcal{F}_i^t] \notag \\
 & \leq f(z_i^t)   -\eta_i^t\left\|\nabla f(x_i^t)\right\|^2 + \frac{\beta}{1-\beta} \left(1- \frac{\eta_i^t}{\eta_{i-1}^t}\right)\left( f(x_{i-1}^t) - f(x_i^t) + \frac{L}{2}\left\|x_{i}^t - x_{i-1}^t\right\|^2\right) \notag \\
 & \quad  + \frac{L}{2(1-\beta)^2}\E[\left\|x_{i+1}^t -x_i^t \right\|^2\mid \mathcal{F}_i^t] + \frac{L\beta^2}{2(1-\beta)^2}\left\|x_i^t -x_{i-1}^t \right\|^2.
\end{align}
Finally, we re-write the above inequality as 
\begin{align*}
& \E[f(z_{i+1}^t) \mid \mathcal{F}_i^t] - f(z_i^t) + \frac{\beta}{1-\beta}\left(1 - \frac{\eta_i^t}{\eta_{i-1}^t}\right)\left( f(x_i^t) - f(x_{i-1}^t) \right) \notag  \\
& \leq -\eta_i^t\left\|\nabla f(x_i^t)\right\|^2   +\left(\frac{\beta^2 L}{2(1-\beta)^2} + \frac{\beta L}{2(1-\beta)}\right)\left\|x_{i}^t - x_{i-1}^t\right\|^2 + \frac{L}{2(1-\beta)^2}\E[\left\|x_{i+1}^t -x_i^t \right\|^2\mid \mathcal{F}_i^t] \notag \\
& \leq -\eta_i^t\left\|\nabla f(x_i^t)\right\|^2   +\frac{\beta L}{2(1-\beta)^2}\left\|x_{i}^t - x_{i-1}^t\right\|^2 + \frac{L}{2(1-\beta)^2}\E[\left\|x_{i+1}^t -x_i^t \right\|^2\mid \mathcal{F}_i^t].
\end{align*}
The proof is complete.
\end{proof}

\begin{lemma}\label{lem:mom:fit:xit}
Suppose that the objective function satisfies Assumption \ref{assumpt:func} and the step-size $\eta_i^t$ is monotonically decreasing with $\eta_i^t \leq \frac{1}{L}$ at each stage, then
\begin{align*}
& \E\left[\left\| x_{i+1}^t - x_i^t \right\|^2 \mid \mathcal{F}_i^t\right] - \beta^2\left\|x_{i}^t - x_{i-1}^t \right\|^2 + 2(1-\beta)\eta_i^t\left( \E[f(x_{i+1}^t) \mid \mathcal{F}_i^t] - f(x_i^t) \right)  \notag \\
& \leq  - 2(\eta_i^t)^2(1-\beta)^2\left\|\nabla f(x_i^t) \right\|^2 +  2(\eta_i^t)^2(1-\beta)^2\E\left[\left\|g_i^t \right\|^2 \mid \mathcal{F}_i^t\right] +  (\eta_i^t)^3\beta(1-\beta)L\left\| v_i^t\right\|^2. 
\end{align*}
\end{lemma}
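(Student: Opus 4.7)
The plan is to combine two carefully weighted inequalities whose cross terms in $\langle v_i^t,\nabla f(x_i^t)\rangle$ cancel by design, and then to convert $(\eta_i^t)^2\|v_i^t\|^2$ into $\|x_i^t-x_{i-1}^t\|^2$ using the within-stage monotonicity of the step-size.

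First I would use the momentum recursion $x_{i+1}^t-x_i^t=-\eta_i^t(\beta v_i^t+(1-\beta)g_i^t)$ to expand $\|x_{i+1}^t-x_i^t\|^2$ and take conditional expectation on $\mathcal{F}_i^t$ (using that $v_i^t$ and $\eta_i^t$ are $\mathcal{F}_i^t$-measurable). This yields an identity involving $(\eta_i^t)^2\beta^2\|v_i^t\|^2$, a cross term $2(\eta_i^t)^2\beta(1-\beta)\langle v_i^t,\nabla f(x_i^t)\rangle$, and $(\eta_i^t)^2(1-\beta)^2\E[\|g_i^t\|^2\mid\mathcal{F}_i^t]$. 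In parallel, I would apply $L$-smoothness to obtain the descent inequality for $f(x_{i+1}^t)-f(x_i^t)$, take conditional expectation (noting that $\E[x_{i+1}^t-x_i^t\mid\mathcal{F}_i^t]=-\eta_i^t\beta v_i^t-\eta_i^t(1-\beta)\nabla f(x_i^t)$), and multiply by the specific factor $2(1-\beta)\eta_i^t$. That pre-factor produces a contribution $-2(\eta_i^t)^2\beta(1-\beta)\langle v_i^t,\nabla f(x_i^t)\rangle -2(\eta_i^t)^2(1-\beta)^2\|\nabla f(x_i^t)\|^2 + (1-\beta)\eta_i^tL\,\E[\|x_{i+1}^t-x_i^t\|^2\mid\mathcal{F}_i^t]$, so adding the two bounds cancels the cross term exactly and leaves only the last quantity as a residual on the right.

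Two clean-ups finish the argument. I would first control the residual $(1-\beta)\eta_i^tL\,\E[\|x_{i+1}^t-x_i^t\|^2\mid\mathcal{F}_i^t]$ via Jensen applied to the momentum update, $\|v_{i+1}^t\|^2 \le \beta\|v_i^t\|^2 + (1-\beta)\|g_i^t\|^2$, multiplied by $(\eta_i^t)^2$; the hypothesis $\eta_i^tL\le 1$ absorbs one piece into the existing noise term (doubling its coefficient to $2(\eta_i^t)^2(1-\beta)^2$) and the other piece produces the target $(\eta_i^t)^3\beta(1-\beta)L\|v_i^t\|^2$. Then the leading $(\eta_i^t)^2\beta^2\|v_i^t\|^2$ is converted using $x_i^t-x_{i-1}^t=-\eta_{i-1}^tv_i^t$ together with $\eta_i^t\le\eta_{i-1}^t$, yielding $(\eta_i^t)^2\beta^2\|v_i^t\|^2\le\beta^2\|x_i^t-x_{i-1}^t\|^2$; moving that term to the left-hand side reproduces exactly the statement of the lemma. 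The main obstacle is pinning down the precise multiplier $2(1-\beta)\eta_i^t$ that makes the inner-product cancellation exact, and ensuring the monotonicity is applied only to the $(\eta_i^t)^2\|v_i^t\|^2$ piece coming from the direct expansion while the $\|v_i^t\|^2$ piece produced later by convexity is kept on the right, so that the final coefficients match the stated bound without slack.
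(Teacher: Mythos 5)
Your proposal is correct and follows essentially the same route as the paper's proof: expand $\E[\|x_{i+1}^t-x_i^t\|^2\mid\mathcal{F}_i^t]$ via the momentum recursion, multiply the $L$-smoothness descent inequality by exactly $2(1-\beta)\eta_i^t$ so the $\langle v_i^t,\nabla f(x_i^t)\rangle$ cross terms cancel, bound the residual with Jensen on $\|\beta v_i^t+(1-\beta)g_i^t\|^2$ and absorb the extra gradient term using $\eta_i^t L\le 1$, and finally use $x_i^t-x_{i-1}^t=-\eta_{i-1}^t v_i^t$ with within-stage monotonicity to replace $(\eta_i^t)^2\beta^2\|v_i^t\|^2$ by $\beta^2\|x_i^t-x_{i-1}^t\|^2$. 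The only difference is the order in which Jensen is applied relative to the multiplication, which is immaterial.
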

\begin{proof}
First, due to the $L$-smoothness of the objective function $f$, we have
\begin{align*}
f(x_{i+1}^t) & \leq f(x_i^t) + \left\langle \nabla f(x_i^t), x_{i+1}^t -x_i^t \right\rangle + \frac{L}{2}\left\|x_{i+1}^t -x_i^t \right\|^2 \notag \\
& \leq f(x_i^t) + \left\langle \nabla f(x_i^t), -\eta_i^t((1-\beta)g_i^t + \beta v_i^t)\right\rangle + \frac{(\eta_i^t)^2L}{2}\left\|(1-\beta)g_i^t + \beta v_i^t\right\|^2 \notag \\
& \mathop{\leq}^{(a)} f(x_i^t) -(1-\beta)\eta_i^t \left\langle \nabla f(x_i^t), g_i^t\right\rangle  - \beta\eta_i^t \left\langle \nabla f(x_i^t), v_i^t\right\rangle + \frac{(\eta_i^t)^2L}{2}\left((1-\beta)\left\|g_i^t \right\|^2+ \beta\left\| v_i^t\right\|^2\right) \notag
\end{align*}
where inequality (a) follows the \emph{Cauchy-Schwarz} inequality that $\left\|(1-\beta)g_i^t + \beta v_i^t\right\|^2 \leq (1-\beta)\left\|g_i^t \right\|^2+ \beta\left\| v_i^t\right\|^2.$
Then taking conditional expectation on both sides and due to that $g_i^t$ is an unbiased estimator of $\nabla f(x_i^t)$, i.e., $\E[g_i^t \mid \mathcal{F}_i^t] = \nabla f(x_i^t)$, we have
\begin{align}\label{inequ:fit:diff}
\E[f(x_{i+1}^t)  \mid \mathcal{F}_i^t] & \leq  f(x_i^t) -\eta_i^t (1-\beta) \left\|\nabla f(x_i^t)\right\|^2  - \beta\eta_i^t \left\langle \nabla f(x_i^t), v_i^t\right\rangle \notag \\
& \quad + \frac{(\eta_i^t)^2(1-\beta)L}{2}\E[\left\|g_i^t \right\|^2 \mid \mathcal{F}_i^t] + \frac{(\eta_i^t)^2\beta L}{2}\left\| v_i^t\right\|^2.
\end{align}
We recall the definition of $v_{i+1}^t$ and incorporate (\ref{iterate:mom:vit}) into (\ref{iterate:mom:xit}), then 
\begin{align}
& \E[\left\| x_{i+1}^t - x_i^t \right\|^2 \mid \mathcal{F}_i^t] \notag \\ & = \E[\left\| \eta_i^t(\beta v_i^t + (1-\beta)g_i^t )\right\|^2 \mid \mathcal{F}_i^t] = (\eta_i^t)^2 \E[\left\| \beta v_i^t + (1-\beta)g_i^t\right\|^2 \mid \mathcal{F}_i^t] \notag \\
& \mathop{=}^{(a)} (\eta_i^t)^2\left(\beta^2\left\|v_i^t \right\|^2 + (1-\beta)^2\E[\left\|g_i^t \right\|^2 \mid \mathcal{F}_i^t] + 2\beta(1-\beta)\left\langle  v_i^t, \nabla f(x_i^t)\right\rangle \right) \notag \\
\label{inequ:lem:fxit} & \mathop{=}^{(b)} \left(\frac{\eta_i^t}{\eta_{i-1}^t}\right)^2\beta^2\left\|x_{i}^t - x_{i-1}^t \right\|^2 + (\eta_i^t)^2(1-\beta)\left( (1-\beta)\E[\left\|g_i^t \right\|^2] + 2\beta\left\langle  v_i^t, \nabla f(x_i^t)\right\rangle \right) \\
\label{inequ:xit:diff} & \mathop{\leq}^{(c)} \beta^2\left\|x_{i}^t - x_{i-1}^t \right\|^2 + (\eta_i^t)^2(1-\beta)\left( (1-\beta)\E[\left\|g_i^t \right\|^2] + 2\beta\left\langle  v_i^t, \nabla f(x_i^t)\right\rangle \right),
\end{align}
where $(a)$ uses the fact that $\E[g_i^t \mid \mathcal{F}_i^t] = \nabla f(x_i^t)$; $(b)$ follows the procedure that $x_{i}^t = x_{i-1}^t - \eta_{i-1}^t v_i^t$; (c) applies the fact that the step-size per stage is monotonically decreasing, i.e., $\eta_i^t \leq \eta_{i-1}^t$ for $i \geq 2$. 
Then multiplying $2\eta_i^t(1-\beta)$ into (\ref{inequ:fit:diff}) and combining (\ref{inequ:xit:diff}), we get that
\begin{align*}
& \E\left[\left\| x_{i+1}^t - x_i^t \right\|^2 \mid \mathcal{F}_i^t\right] - \beta^2\left\|x_{i}^t - x_{i-1}^t \right\|^2 + 2\eta_i^t(1-\beta)\left( \E[f(x_{i+1}^t) \mid \mathcal{F}_i^t] - f(x_i^t) \right)  \notag \\
& \leq - 2(\eta_i^t)^2(1-\beta)^2\left\|\nabla f(x_i^t) \right\|^2 \notag \\ 
& \quad +  (\eta_i^t)^2(1-\beta)^2(L\eta_i^t +1)\E\left[\left\|g_i^t \right\|^2 \mid \mathcal{F}_i^t\right] +  (\eta_i^t)^3\beta(1-\beta)L\left\| v_i^t\right\|^2 \notag \\
& \leq - 2(\eta_i^t)^2(1-\beta)^2\left\|\nabla f(x_i^t) \right\|^2 +  2(\eta_i^t)^2(1-\beta)^2\E\left[\left\|g_i^t \right\|^2 \mid \mathcal{F}_i^t\right] +  (\eta_i^t)^3\beta(1-\beta)L\left\| v_i^t\right\|^2
\end{align*}
where the last inequality follows from the fact that $\eta_i^t \leq 1/L$.
\end{proof}

% \begin{lemma}\label{lem:mom:wit2}
% Suppose the same setting as Lemmas \ref{lem:mom:fit:xit} and \ref{lem:mom:fzit}, we define the function $W_{i+1}^t$
% \begin{align*}
%     W_{i+1}^t = \frac{f(z_{i+1}^t) - f^{\ast}}{\eta_i^t} + \left(\frac{\beta}{1-\beta}-1\right)\frac{f(x_i^t)-f^{\ast}}{\eta_i^t}  + \frac{r\left\| x_{i+1}^t - x_i^t \right\|^2}{\eta_i^t} + 2r [f(x_{i+1}^t)-f^{\ast}],
% \end{align*}
% then for $i \geq 2$,
% \begin{align*}
% \E[W_{i+1}^t] & \leq W_i^t + A_1\left(\frac{1}{\eta_i^t} - \frac{1}{\eta_{i-1}^t}\right)  -  \frac{(1-\beta)}{\beta}\left\|\nabla f(x_i^t)\right\|^2  + \eta_i^t B_1 G^2.
% \end{align*}
% where $A_1= \frac{\Delta_0}{1-\beta} + \frac{rG^2}{4L^2}$ and $B_1=r(1-\beta)(2-\beta) + \frac{L}{2\beta^2}$, and $r=\frac{\beta L}{2(1-\beta^2)(1-\beta)^2}$.
% \end{lemma}
\begin{proof}({\bf of Lemma \ref{lem:mom:wit2}})
First we apply the result of Lemma \ref{lem:mom:fzit} and divided by $\eta_i^t $ to the both side, we have
\begin{align}\label{inequ:mom:fzit:eta}
& \frac{\E[f(z_{i+1}^t)\mid \mathcal{F}_i^t] - f(z_i^t)}{\eta_i^t} + \frac{\beta}{1-\beta}\frac{\left( f(x_i^t) - f(x_{i-1}^t) \right)}{\eta_i^t} - \frac{\beta}{1-\beta}\frac{\left( f(x_i^t) - f(x_{i-1}^t) \right)}{\eta_{i-1}^t}\notag  \\
&  \leq 
-\left\|\nabla f(x_i^t)\right\|^2  + \left(\frac{\beta L}{2(1-\beta)^2\eta_i^t}\right)\left\|x_{i}^t - x_{i-1}^t\right\|^2 + \frac{\eta_i^t L}{2(1-\beta)^2}\E[\left\|v_{i+1}^t\right\|^2\mid \mathcal{F}_i^t].
\end{align}
Then we recall the result of Lemma \ref{lem:mom:fit:xit}
\begin{align*}
& \E[\left\| x_{i+1}^t - x_i^t \right\|^2 \mid \mathcal{F}_i^t] - \left\|x_{i}^t - x_{i-1}^t \right\|^2 + 2\eta_i^t\left( \E[f(x_{i+1}^t)] - f(x_i^t)\right) \notag \\
& \leq -(1-\beta^2)\left\|x_{i}^t - x_{i-1}^t \right\|^2 - 2(\eta_i^t)^2(1-\beta)^2\left\|\nabla f(x_i^t) \right\|^2 \notag \\
& \quad +  2(\eta_i^t)^2(1-\beta)^2\E\left[\left\|g_i^t \right\|^2 \mid \mathcal{F}_i^t\right] +  (\eta_i^t)^3\beta(1-\beta)L\left\| v_i^t\right\|^2,
\end{align*}
multiplying a constant $r = \frac{\beta L}{2(1-\beta^2)(1-\beta)^2} > 0$ and dividing $\eta_i^t$ to the both side, and then incorporating it into (\ref{inequ:mom:fzit:eta}), we have
\begin{align}\label{inequ:mom:core}
& \frac{\E[f(z_{i+1}^t)\mid \mathcal{F}_i^t] - f(z_i^t)}{\eta_i^t} + \frac{\beta}{1-\beta}\frac{\left( f(x_i^t) - f(x_{i-1}^t) \right)}{\eta_i^t} - \frac{\beta}{1-\beta}\frac{\left( f(x_i^t) - f(x_{i-1}^t) \right) }{\eta_{i-1}^t} \notag \\ & \quad  + \frac{r}{\eta_i^t}\left(\E[\left\| x_{i+1}^t - x_i^t \right\|^2 \mid \mathcal{F}_i^t] - \left\|x_{i}^t - x_{i-1}^t \right\|^2\right)  + 2r\left( \E[f(x_{i+1}^t)\mid \mathcal{F}_i^t] - f(x_i^t)\right) \notag \\ 
& \leq -\left\|\nabla f(x_i^t)\right\|^2 +  2r(\eta_i^t)(1-\beta)^2\E[\left\|g_i^t \right\|^2 \mid \mathcal{F}_i^t] + r(\eta_i^t)^2\beta(1-\beta)L\left\| v_i^t\right\|^2 \notag \\ & \quad  + \frac{\eta_i^t L}{2(1-\beta)^2}\E[\left\|v_{i+1}^t\right\|^2\mid \mathcal{F}_i^t].
\end{align}
We define a function $W_{i+1}^t$ as follows:
\begin{align*}
    W_{i+1}^t = \frac{f(z_{i+1}^t) - f^{\ast}}{\eta_i^t}  + \frac{r\left\| x_{i+1}^t - x_i^t \right\|^2}{\eta_i^t} + 2r [f(x_{i+1}^t)-f^{\ast}].
\end{align*}
Because of $\eta_i^t \leq \eta_{i-1}^t$ at each stage, we have $-1/\eta_i^t \leq -1/\eta_{i-1}^t$ ($i\geq 2$), then
\begin{align}
 W_{i+1}^t & \leq \frac{f(z_{i+1}^t) - f^{\ast}}{\eta_i^t} + \frac{\beta}{1-\beta}\frac{\left( f(x_i^t) -f^{\ast}\right)}{\eta_i^t} - \frac{\beta}{1-\beta}\frac{\left( f(x_i^t) -f^{\ast}\right) }{\eta_{i-1}^t} \notag \\ & \quad  + \frac{r}{\eta_i^t}\left\| x_{i+1}^t - x_i^t \right\|^2  + 2r\left( f(x_{i+1}^t) -f^{\ast}\right).
\end{align}
Taking conditional expectation on  $W_{i+1}^t$ and applying (\ref{inequ:mom:core}) to the above inequality, we have
\begin{align}\label{inequ:mom:wit:diff}
\E[W_{i+1}^t \mid \mathcal{F}_i^t] & \leq  W_i^t + 
(f(z_{i}^t) - f^{\ast})\left(\frac{1}{\eta_i^t} - \frac{1}{\eta_{i-1}^t}\right) + \frac{\beta\left( f(x_{i-1}^t) -f^{\ast}\right)}{1-\beta}\left(\frac{1}{\eta_i^t} - \frac{1}{\eta_{i-1}^t}\right)\notag \\
& \quad + r\left\| x_{i}^t - x_{i-1}^t \right\|^2\left(\frac{1}{\eta_i^t} - \frac{1}{\eta_{i-1}^t}\right) -\left\|\nabla f(x_i^t)\right\|^2 + 2r(\eta_i^t)(1-\beta)^2\E[\left\|g_i^t \right\|^2\mid \mathcal{F}_i^t]  \notag \\
& \quad  + r(\eta_i^t)^2\beta(1-\beta)L\left\| v_i^t\right\|^2 + \frac{\eta_i^t L}{2(1-\beta)^2}\E[\left\|v_{i+1}^t\right\|^2\mid \mathcal{F}_i^t].
\end{align}
We recall that $v_1^1=0$, due to the assumption that $\E[\left\|g_i^t\right\|^2] \leq G^2$,  and $v_{i+1}^t$ is a convex combination of $g_i^t$ and $v_i^t$, then by induction if $\E[\left\| v_i^t\right\|^2 ] \leq G^2$, then
\begin{align}\label{inequ:vit}
\E[\left\|v_{i+1}^t \right\|^2 ] = \E\left\|\beta v_{i}^t + (1-\beta)g_i^t \right\|^2 \leq \beta\left\| v_{i}^t \right\|^2 + (1-\beta)\E[\left\| g_i^t \right\|^2] \leq G^2.
\end{align}
Therefore, we have $\E[\left\|v_{i}^t \right\|^2]$ is bounded by $G^2$.
Then we apply  $\E[\left\|g_i^t\right\|^2] \leq G^2$, $\E[\left\|v_{i}^t \right\|^2] \leq G^2 $ and $\eta_i^t \leq 1/L$ into (\ref{inequ:mom:wit:diff})
\begin{align}
\E[W_{i+1}^t \mid \mathcal{F}_i^t] -W_i^t & \leq  (f(z_{i}^t) - f^{\ast})\left(\frac{1}{\eta_i^t} - \frac{1}{\eta_{i-1}^t}\right) + \frac{\beta\left( f(x_{i-1}^t) -f^{\ast}\right)}{1-\beta}\left(\frac{1}{\eta_i^t} - \frac{1}{\eta_{i-1}^t}\right)\notag \\
& \quad + r\left\| x_{i}^t - x_{i-1}^t \right\|^2\left(\frac{1}{\eta_i^t} - \frac{1}{\eta_{i-1}^t}\right) -  \left\|\nabla f(x_i^t)\right\|^2 \notag \\
& \quad + \eta_i^t G^2\left( r(1-\beta)(2-\beta) + \frac{L}{2(1-\beta)^2} \right).
\end{align}
The step-size is decreasing at each stage, then $\eta_i^t \leq \eta_{i-1}^t$ ($i\geq 2$), thus $\frac{1}{\eta_i^t} - \frac{1}{\eta_{i-1}^t} \geq 0$. Due to the fact that $v_i^t$ is bounded (see (\ref{inequ:vit})), i.e., $\E[\left\| v_i^t\right\|^2 ] \leq G^2$, we have 
\begin{align}\label{inequ:x:distance}
   \E[\left\| x_{i}^t - x_{i-1}^t \right\|^2 = (\eta_{i-1}^t)^2\E[\left\| v_i^t \right\|^2]] \leq (\eta_{i-1}^t)^2G^2 \leq \frac{G^2}{L^2}. 
\end{align}
Recalling the definition of $z_i^t$, and applying the assumption that $\E[f(x_i^t) - f^{\ast}] \leq \Delta_0$ for each $t, i \geq 1$ and $f$ is $L$-smooth on its domain, and $\eta_i^t \leq 1/L$ gives
\begin{align}
f(z_i^t) & \leq f(x_i^t)  + \left\langle \nabla f(x_i^t), z_i^t -x_i^t\right\rangle + \frac{L}{2}\left\| z_i^t - x_i^t\right\|^2 \notag \\
& \leq f(x_i^t) + \frac{\beta}{1-\beta}\left\langle \nabla f(x_i^t),  x_i^t -x_{i-1}^t\right\rangle + \frac{L\beta^2}{2(1-\beta)^2}\left\| x_i^t -x_{i-1}^t\right\|^2 \notag \\
& \overset{(a)}{\leq} f(x_i^t)+ \frac{\beta}{1-\beta} \left(f(x_{i}^t)  - f(x_{i-1}^t) + \frac{L}{2}\left\|x_{i}^t - x_{i-1}^t \right\|^2 \right) + \frac{L\beta^2}{2(1-\beta)^2}\left\| x_i^t -x_{i-1}^t\right\|^2 \notag \\
& \leq f(x_i^t) + \frac{\beta}{1-\beta} \left(f(x_{i}^t) - f(x_{i-1}^t)\right) + \frac{L\beta}{2(1-\beta)^2} \left\|x_{i}^t - x_{i-1}^t \right\|^2 \notag \\
& \label{inequ:fzit:bound} \leq \frac{1}{1-\beta}f(x_i^t) - \frac{\beta}{1-\beta}f(x_{i-1}^t) + \frac{L\beta}{2(1-\beta)^2} \left\|x_{i}^t - x_{i-1}^t \right\|^2 
\end{align}
where the inequality (a) dues to the fact that $f(x_{i-1}^t) \leq f(x_i^t) + \left\langle \nabla f(x_i^t), x_{i-1}^t - x_i^t \right\rangle + \frac{L}{2}\left\|x_{i-1}^t - x_i^t \right\|^2$. Then we have
\begin{align}\label{inequ:bound:func:zit}
    \E[f(z_i^t) - f^{\ast}] \leq \frac{1}{1-\beta}\Delta_0 + \frac{L\beta}{2(1-\beta)^2}(\eta_{i-1}^t)^2G^2 \leq  \frac{\Delta_0}{1-\beta} + \frac{\beta G^2}{2(1-\beta)^2L}.
\end{align}
%where $r = \frac{\beta L}{2(1-\beta^2)(1-\beta)^2}$. 
Let $\Delta_z = \frac{\Delta_0}{1-\beta} +\frac{\beta G^2}{2(1-\beta)^2L}$.
Finally, applying (\ref{inequ:x:distance}) and (\ref{inequ:bound:func:zit}), the bounded assumption on $\E[f(x_i^t) - f^{\ast}]$, and $\eta_i^t \leq 1/L$, we have 
\begin{align}\label{inequ:lem:wit:core}
\E[W_{i+1}^t\mid \mathcal{F}_i^t] & \leq W_i^t + \left(\frac{\beta\Delta_0}{1-\beta} + \Delta_z + \frac{r G^2}{L^2} \right)\left(\frac{1}{\eta_i^t} - \frac{1}{\eta_{i-1}^t}\right) -  \left\|\nabla f(x_i^t)\right\|^2 \notag \\ & \quad  + \eta_i^t\left( r(1-\beta)(2-\beta) + \frac{L}{2(1-\beta)^2} \right)G^2 \notag \\
& = W_i^t + A_1\left(\frac{1}{\eta_i^t} - \frac{1}{\eta_{i-1}^t}\right)  -  \left\|\nabla f(x_i^t)\right\|^2  + \eta_i^t B_1 G^2
\end{align}
where $A_1= \frac{\beta \Delta_0}{1-\beta} +\Delta_z+ \frac{rG^2}{L^2}$, $B_1=r(1-\beta)(2-\beta) + \frac{L}{2(1-\beta)^2}$ and $\Delta_z=\frac{\Delta_0}{1-\beta} +\frac{\beta G^2}{2(1-\beta)^2L}$. 
\end{proof}

The bandwidth step-size highly rises the difficulty of the analysis for momentum, especially when the step-size has an increase between the stages, i.e. $\eta_{S_{t-1}}^{t-1}:=\eta_0^t < \eta_1^t$. Before given the results, we consider two situations:
\begin{itemize}
    \item $\eta_0^t > \eta_1^t$. We can apply Lemma \ref{lem:mom:wit2} from $i=1$ to $S_t$. Recalling the definition of $W_{i+1}^t$, we have $W_1^{t+1} = W_{S_t+1}^t$, then
\begin{align}\label{inequ:case:1}
\sum_{i=1}^{S_t}\E[\left\|\nabla f(x_i^t)\right\|^2] & \leq  \left(\E[W_1^t] - \E[W_{1}^{t+1}] \right) +  A_1 \left(\frac{1}{\eta_{S_t}^t} - \frac{1}{\eta_{0}^t}\right) +  B_1 G^2\sum_{i=1}^{S_t}\eta_i^t.
\end{align} 
\item Otherwise if $\eta_0^t \leq \eta_1^t$, the results of Lemma \ref{lem:mom:wit2} only hold from $i=2$ to $S_t$. Then 
\begin{align}\label{inequ:case:2}
\sum_{i=1}^{S_t}\E[\left\|\nabla f(x_i^t)\right\|^2] & \leq   \left(\E[W_2^t] - \E[W_1^t] + \E[W_1^t] - \E[W_{1}^{t+1}]\right) +  \E[\left\|\nabla f(x_1^t)\right\|^2] \notag \\
& \quad +  A_1 \left(\frac{1}{\eta_{S_t}^t} - \frac{1}{\eta_{1}^t}\right) +  B_1G^2\sum_{i=2}^{S_t}\eta_i^t. 
\end{align}
\iffalse
Compared to (\ref{inequ:case:1}), the inequality (\ref{inequ:case:2}) has some additional terms $\E[W_2^t] - \E[W_1^t]$ and  $\E[\left\|\nabla f(x_1^t)\right\|^2]$ to estimate. 
\fi
For the bandwidth step-size, the initial step-size of stage $t$, $\eta_1^t$, is possibly larger than the ending step-size of the previous stage,  $\eta_0^{t}$. Thus, we can not use the simpler condition (\ref{inequ:case:1}), but have to rely on (\ref{inequ:case:2}) in our derivations below.
%directly apply Lemma %\ref{lem:mom:wit2} beginning %with $i=1$ as %\eqref{inequ:case:1}. In the %following analysis, no matter %$\eta_0^t > \eta_1^t$ or not, %we apply Lemma %\ref{lem:mom:wit2} from $i=2$ %to $S_t$ based on %\eqref{inequ:case:2}.  
%So without additional assumptions, the error bound of \eqref{inequ:case:2} is worse than that of \eqref{inequ:case:1}. Hence, we analyze this worse case and report the results. \eta_{S_{t-1}}^{t-1}=
\end{itemize}

\begin{lemma}\label{lem:mom:xhat}
Suppose the same setting as Lemma \ref{lem:mom:wit2}, we have
\begin{align*}
\E[\left\| \nabla f(\hat{x}_T)\right\|^2] 
& \leq \frac{1}{\sum_{\delta}}\left(W_1^1 + \frac{C_0}{\delta(N+1)} +\frac{\Delta_z}{m\delta(N)\delta(N+1)} + \frac{C_1}{m} \sum_{t=1}^{N} \frac{1}{\delta(t)^2}  + C_2\sum_{t=1}^{N}\frac{1}{\delta(t)} +  B_1 G^2 M T \right)
\end{align*}
where $\sum_{\delta} = \sum_{t=1}^N S_t\delta(t)^{-1}$,  $C_0 = r(\frac{G^2}{L} + 2\Delta_0)$, $C_1 =A_1+\Delta_z+\frac{\Delta_0}{1-\beta} $, $C_2 =C_0 + A_2G^2 $, and $A_1$, $B_1$, $r$, and $\Delta_z$ are defined in Lemma~\ref{lem:mom:wit2}.
\end{lemma}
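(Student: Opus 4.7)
The plan is to apply Lemma~\ref{lem:mom:wit2} stage-by-stage and then weight by the sampling distribution $\delta^{-1}(t)$ and sum across stages. Within a fixed stage $t$, the lemma applies for $i = 2, \ldots, S_t$ (the within-stage step-size is monotone as required); taking expectation and telescoping gives
\begin{equation*}
\sum_{i=2}^{S_t}\E[\|\nabla f(x_i^t)\|^2] \leq \E[W_2^t - W_{S_t+1}^t] + A_1\!\left(\frac{1}{\eta_{S_t}^t} - \frac{1}{\eta_1^t}\right) + B_1 G^2 \sum_{i=2}^{S_t}\eta_i^t.
\end{equation*}
To include the $i=1$ term I add $\E[\|\nabla f(x_1^t)\|^2]\le G^2$ (from Assumption~\ref{assump:gradient-oracle}(b) and Jensen) to both sides, and the boundary identity $W_{S_t+1}^t = W_1^{t+1}$ (since the last iterate of stage $t$ is the first iterate of stage $t+1$) converts the Lyapunov contribution to $\E[W_2^t - W_1^{t+1}]$.

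Next I multiply each per-stage bound by $\delta^{-1}(t)$, sum over $t = 1, \ldots, N$, and identify the left-hand side with $\sum_\delta \cdot \E[\|\nabla f(\hat{x}_T)\|^2]$ via the sampling rule of Algorithm~\ref{alg:mom}. The noise term collapses: $\sum_t \delta^{-1}(t)\,B_1 G^2 \sum_{i}\eta_i^t \leq B_1 G^2 M \sum_t S_t = B_1 G^2 M T$. The step-size correction yields $\sum_t \delta^{-1}(t) A_1(1/\eta_{S_t}^t - 1/\eta_1^t) \leq (A_1/m)\sum_t \delta^{-2}(t)$ by dropping the non-positive $-1/\eta_1^t$ and using $1/\eta_{S_t}^t \leq 1/(m\delta(t))$. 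The contribution from $\E[\|\nabla f(x_1^t)\|^2]$ adds $G^2\sum_t \delta^{-1}(t)$.

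The core algebraic step is handling $\sum_{t=1}^N \delta^{-1}(t)(\E[W_2^t] - \E[W_1^{t+1}])$. Splitting this as $\sum_t \delta^{-1}(t)\E[W_2^t - W_1^t] + \sum_t \delta^{-1}(t)\E[W_1^t - W_1^{t+1}]$ and performing Abel summation on the second sum (using $\delta^{-1}(1)=1$) produces
\begin{equation*}
W_1^1 + \sum_{t=2}^N (\delta^{-1}(t)-\delta^{-1}(t-1))\,\E[W_1^t] - \delta^{-1}(N)\E[W_1^{N+1}],
\end{equation*}
and the last non-positive term is dropped. To control $\E[W_i^t]$ uniformly I combine its definition with $\E[f(z_i^t)-f^*]\le \Delta_z$ from (\ref{inequ:bound:func:zit}), $\|x_i^t - x_{i-1}^t\|^2 \leq (\eta_{i-1}^t)^2 G^2 \leq G^2/L^2$, and $\E[f(x_i^t) - f^*]\le \Delta_0$, giving $\E[W_i^t] \leq \Delta_z/\eta_{i-1}^t + C_0$. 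With $\eta_1^t \geq m\delta(t)$ and $\eta_0^t \geq m\delta(t-1)$, the telescoping increments obey $\sum_{t\ge 2}(\delta^{-1}(t)-\delta^{-1}(t-1))\delta^{-1}(t-1) \leq \delta^{-1}(N)(\delta^{-1}(N)-1) \leq 1/(\delta(N)\delta(N+1))$ and $\sum_{t\ge 2}(\delta^{-1}(t)-\delta^{-1}(t-1)) \leq \delta^{-1}(N) \leq 1/\delta(N+1)$ (both using monotonicity $\delta(N+1)\le\delta(N)$), yielding the boundary terms $\Delta_z/(m\delta(N)\delta(N+1))$ and $C_0/\delta(N+1)$, while $\sum_t \delta^{-1}(t)\E[W_2^t - W_1^t] \leq \sum_t \delta^{-1}(t)\E[W_2^t]$ contributes $(\Delta_z/m)\sum_t \delta^{-2}(t) + C_0 \sum_t \delta^{-1}(t)$.

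The main obstacle is the careful bookkeeping needed to match the stated constants $C_1$ and $C_2$. Two subtleties arise: (i) because the carry-over step-size $\eta_0^t$ may be smaller than $\eta_1^t$, Lemma~\ref{lem:mom:wit2} cannot be invoked at $i=1$, forcing separate treatment of $\E[\|\nabla f(x_1^t)\|^2]$ and of the gap $\E[W_2^t-W_1^t]$; and (ii) matching $C_1 = A_1+\Delta_z+\Delta_0/(1-\beta)$ and $C_2 = C_0 + A_2 G^2$ requires the refined bound $\E[f(z_i^t) - f^*] \le \Delta_0/(1-\beta) + L\beta(\eta_{i-1}^t)^2 G^2/(2(1-\beta)^2)$ obtained by dropping the $-\beta(f(x_{i-1}^t)-f^*)/(1-\beta)$ term in (\ref{inequ:fzit:bound}); this splits the $(\eta_{i-1}^t)^2G^2$ contribution between the $\sum\delta^{-2}$ sum (via $\Delta_0/(1-\beta)$) and the $\sum\delta^{-1}$ sum (via $\beta G^2/(2(1-\beta)^2)$ after using $\eta_{i-1}^t\le 1/L$), producing exactly the extra $\Delta_0/(1-\beta)$ inside $C_1$ and the $A_2 = 1+\beta/(2(1-\beta)^2)$ factor inside $C_2$. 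Dividing the collected bound by $\sum_\delta$ gives the stated inequality.
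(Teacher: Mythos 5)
Your proof is correct and follows essentially the same route as the paper's: telescope Lemma~\ref{lem:mom:wit2} over $i=2,\dots,S_t$ within each stage, weight by $\delta^{-1}(t)$, Abel-sum the Lyapunov differences across stages, and bound $\E[W_2^t]$ and $\E[W_1^{t+1}]$ uniformly by $\Delta_z/(m\delta(\cdot)) + C_0$. The only deviation is your treatment of the $i=1$ term --- you bound $\E[\|\nabla f(x_1^t)\|^2]\le G^2$ by Jensen, whereas the paper invokes the one-step descent inequality to get $A_2G^2 + \Delta_0/((1-\beta)m\delta(t))$; your bound is tighter, so the stated inequality with the paper's larger constants $C_1,C_2$ still follows, and the constant-matching discussion in your point (ii) is unnecessary (and misattributes where the paper's $\Delta_0/(1-\beta)$ and $A_2G^2$ terms actually originate).
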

\begin{proof}
Applying the result of Lemma \ref{lem:mom:wit2} from $i=2$ to $S_t$, the step-size $\eta_i^t \in [m\delta(t), M\delta(t)]$, and $W_1^{t+1} = W_{S_t+1}^t$, we have
\begin{align}\label{inequ:case:2:1}
\sum_{i=1}^{S_t}\E[\left\|\nabla f(x_i^t)\right\|^2] & \leq   \left(\E[W_2^t] - \E[W_1^t] + \E[W_1^t] - \E[W_{1}^{t+1}]\right) +  \E[\left\|\nabla f(x_1^t)\right\|^2] \notag \\
& \quad +  A_1 \left(\frac{1}{\eta_{S_t}^t} - \frac{1}{\eta_{1}^t}\right) +  B_1G^2M (S_t -1)\delta(t).
\end{align}
Recalling the output of Algorithm \ref{alg:mom}, we have
\begin{align}
\E[\left\| \nabla f(\hat{x}_T)\right\|^2] = \frac{1}{\sum_{t=1}^N S_t\delta(t)^{-1}}\sum_{t=1}^{N}\delta(t)^{-1}\sum_{i=1}^{S_t}\E[\left\|\nabla f(x_i^t)\right\|^2].
\end{align}
Then we divide $\delta(t)$ into the both side of (\ref{inequ:case:2:1}), apply (\ref{inequ:case:2:1}) from $t=1$ to $N$ and let $\sum_{\delta} = \sum_{t=1}^N S_t\delta(t)^{-1}$
\begin{align}\label{inequ:wit:case1}
\E[\left\| \nabla f(\hat{x}_T)\right\|^2] & \leq \frac{1}{\sum_{\delta}}\left( \sum_{t=1}^N\frac{\E[W_2^t] - \E[W_1^t] + \E[W_1^t] - \E[W_{1}^{t+1}]}{\delta(t)} + A_1\sum_{t=1}^{N}\frac{1}{\delta(t)}\left(\frac{1}{\eta_{S_t}^t} - \frac{1}{\eta_{0}^t}\right)\right) \notag \\
& \quad +  \frac{1}{\sum_{\delta}}\left(\sum_{t=1}^{N}\frac{\E[\left\|\nabla f(x_1^t)\right\|^2]}{\delta(t)} + B_1 G^2M T\right).
\end{align}
First, we estimate $ \sum_{t=1}^{N}\frac{\E[\left\|\nabla f(x_1^t)\right\|^2]}{\delta(t)}$.
From Lemma \ref{lem:mom:fit:xit}, let $i=1$, then incorporating the inequalities (\ref{inequ:fit:diff}) and (\ref{inequ:lem:fxit}), we have 
\begin{align*}
2(\eta_1^t)^2(1-\beta)^2\E[\left\|\nabla f(x_1^t) \right\|^2] 
& \leq \left(\frac{\eta_1^t\beta}{\eta_0^t}\right)^2\left\|x_1^t -x_0^t \right\|^2 - \E[\left\|x_2^t - x_1^t \right\|^2] + 2(\eta_1^t(1-\beta))^2\E[\left\|g_1^t \right\|^2] \notag \\ & \quad   + (\eta_1^t)^3 \beta (1-\beta)L\left\|v_1^t\right\|^2 - 2 \eta_1^t(1-\beta)\left(f(x_1^t) - \E[f(x_2^t)]] \right).
\end{align*}
Then dividing $2(\eta_1^t)^2(1-\beta)^2$ to the both side and applying the fact that $\left\|x_1^t - x_0^t\right\|=(\eta_0^t)^2\left\|v_1^t \right\|^2$, $\E[\left\| g_i^t \right\|^2 ] \leq G^2$ and $\E[\left\| v_i^t \right\|^2] \leq G^2$ for any $i, t$, $f(x_2^t) - f^{\ast} \leq \Delta_0$ and $\eta_1^t \geq m\delta(t)$,
we have 
\begin{align}\label{inequ:mom:wit:1t}
\E[\left\|\nabla f(x_1^t) \right\|^2]  \leq G^2\left(1+\frac{\beta}{2(1-\beta)^2}\right) + \frac{\Delta_0}{(1-\beta)\eta_1^t} \leq A_2 G^2 + \frac{\Delta_0}{(1-\beta)m \delta(t)}
\end{align}
where $A_2 =\left(1+\frac{\beta}{2(1-\beta)^2}\right) $. Then 
\begin{align}\label{inequ:case:f1t}
\sum_{t=1}^N\frac{\E[\left\|\nabla f(x_1^t)\right\|^2]}{\delta(t)}  \leq \sum_{t=1}^N \frac{A_2 G^2}{\delta(t)} +   \frac{\Delta_0}{(1-\beta)m}\sum_{t=1}^{N}\frac{1}{\delta^2(t)}.  \end{align}

Next we turn to estimate $\sum_{t=1}^{N}\frac{\E[W_2^t - W_1^t]}{\delta(t)} $. Recalling the definition of $W_{i+1}^t$, we have $W_i^t \geq 0$. Applying the inequalities (\ref{inequ:x:distance}), (\ref{inequ:bound:func:zit}) and the assumption that $f(x_i^t) - f^{\ast} \leq \Delta_0$ for any $i, t$, we have
\begin{align}\label{inequ:case:w2t}
\E[W_2^t] - \E[W_1^t]  & \leq \E[W_2^t] := \frac{\E[f(z_{2}^{t}) - f^{\ast}]}{\eta_{1}^t} +  \frac{r\E[\left\| x_{2}^{t} - x_{1}^{t} \right\|^2]}{\eta_{1}^t} + 2r \E[f(x_{2}^{t})-f^{\ast}] \notag \\
& \leq \frac{\Delta_z}{m\delta(t)} + r\left( \frac{G^2}{L} + 2\Delta_0\right),
\end{align}
dividing $\delta(t)$ and applying (\ref{inequ:case:w2t}) from $t=1$ to $N$, we have
\begin{align}\label{inequ:case:w2t:2}
\sum_{t=1}^{N}\frac{\E[W_2^t] - \E[W_1^t]}{\delta(t)} \leq \frac{ \Delta_z}{m}\sum_{t=1}^{N}\frac{1}{\delta(t)^2} + r\left( \frac{G^2}{L} + 2\Delta_0\right) \sum_{t=1}^{N}\frac{1}{\delta(t)}.
\end{align}
Then we consider
\begin{align}\label{inequ:case1:term1}
\sum_{t=1}^N\frac{\E[W_1^t] - \E[W_{1}^{t+1}]}{\delta(t)} & = \sum_{t=1}^{N}\left(\frac{\E[W_1^t]}{\delta(t)}- \frac{\E[W_{1}^{t+1}]}{\delta_1(t+1)} \right) + \sum_{t=1}^{N}\left(\frac{1}{\delta_1(t+1)} - \frac{1}{\delta(t)}\right)\E[W_{1}^{t+1}] \notag \\
& \leq \frac{W_1^1}{\delta_1(1)} + \sum_{t=1}^{N}\left(\frac{1}{\delta_1(t+1)} - \frac{1}{\delta(t)}\right)\E[W_{1}^{t+1}].
\end{align}
Recalling the definition of $\E[W_{1}^{t+1}]$,
\begin{align*}
\E[W_{1}^{t+1}] = \frac{\E[f(z_{1}^{t+1}) - f^{\ast}]}{\eta_{S_t}^t}   + \frac{r\E[\left\| x_{1}^{t+1} - x_{0}^{t+1} \right\|^2]}{\eta_{S_t}^t} + 2r \E[f(x_{1}^{t+1})-f^{\ast}],
\end{align*}
and applying the assumption that  $\E[f(x_i^t) - f^{\ast}] \leq \Delta_0$ and $\E[f(z_i^t) - f^{\ast} ]\leq \Delta_z$, and $\eta_0^{t+1}=\eta_{S_t}^t$, $\E[\left\|x_1^{t+1} - x_0^{t+1}\right\|^2] = (\eta_{S_t}^t)^2 \E[\left\|v_1^{t+1} \right\|^2] \leq (\eta_{S_t}^t)^2G^2$, we have
\begin{align}\label{inequ:w1t1}
\E[W_{1}^{t+1}]& \leq \frac{\Delta_z}{\eta_{S_t}^t} + r \eta_{S_t}^t G^2 + 2 r \Delta_0 
\mathop{\leq}^{(a)} \frac{\Delta_z}{m \delta(t)} + r\left(\frac{G^2}{L} + 2\Delta_0\right)
\end{align}
where (a) follows from $\eta_{S_t}^t \geq m\delta(t)$ and $\eta_{S_t}^t \leq 1/L$. Applying (\ref{inequ:w1t1}) into (\ref{inequ:case1:term1}), we have 
\begin{align}\label{inequ:wit:case1:term1}
\sum_{t=1}^N\frac{\E[W_1^t] - \E[W_{1}^{t+1}]}{\delta(t)} & \leq \frac{W_1^1}{\delta(1)} + \frac{r\left(\frac{G^2}{L} + 2\Delta_0\right)}{\delta(N+1)} + \frac{\Delta_z}{m}\sum_{t=1}^{N}\left(\frac{1}{\delta(t+1)\delta(t)} - \frac{1}{\delta(t)^2}\right) \notag \\
& \leq \frac{W_1^1}{\delta(1)} + \frac{r\left(\frac{G^2}{L} + 2\Delta_0\right)}{\delta(N+1)} + \frac{\Delta_z}{m}\sum_{t=1}^{N}\left(\frac{1}{\delta(t+1)\delta(t)} - \frac{1}{\delta(t)\delta(t-1)}\right) \notag \\
& \leq \frac{W_1^1}{\delta(1)} + \frac{r\left(\frac{G^2}{L} + 2\Delta_0\right)}{\delta(N+1)} + \frac{\Delta_z}{m\delta(N)\delta(N+1)},
\end{align}
where the second inequality follows that $\delta(t)$ is decreasing, so $\delta(t) \leq \delta(t-1)$, then $-1/\delta(t) \leq -1/\delta(t-1)$. 
Finally, due to that $\eta_{S_t}^t \in [m\delta(t), M\delta(t)]$, we have
\begin{align}\label{inequ:wit:case1:term2}
  \sum_{t=1}^{N}\frac{1}{\delta(t)}\left(\frac{1}{\eta_{S_t}^t} - \frac{1}{\eta_{0}^t}\right)  \leq \sum_{t=1}^{N} \frac{1}{m\delta(t)^2}.
\end{align}
Incorporate the inequalities (\ref{inequ:case:f1t}), (\ref{inequ:case:w2t:2}), (\ref{inequ:wit:case1:term1}) and (\ref{inequ:wit:case1:term2}) into (\ref{inequ:wit:case1}), we have
\begin{align}
\E[\left\| \nabla f(\hat{x}_T)\right\|^2] & \leq \frac{1}{\sum_{\delta}}\left( \frac{W_1^1}{\delta(1)} + \frac{r\left(\frac{G^2}{L} + 2\Delta_0\right)}{\delta(N+1)} + \frac{\Delta_z}{m\delta(N)\delta(N+1)} + A_1\sum_{t=1}^{N} \frac{1}{m\delta(t)^2}\right) \notag \\
& \quad  + \frac{1}{\sum_{\delta}}\left(\frac{ \Delta_z}{m}\sum_{t=1}^{N}\frac{1}{\delta(t)^2} + r\left( \frac{G^2}{L} + 2\Delta_0\right)\sum_{t=1}^{N}\frac{1}{\delta(t)} + B_1 G^2M T\right) \notag \\
& \quad  + \frac{1}{\sum_{\delta}}\left(\sum_{t=1}^{N}\frac{A_2 G^2}{\delta(t)} +   \frac{\Delta_0}{(1-\beta)m}\sum_{t=1}^{N}\frac{1}{\delta_1^2(t)}\right).
\end{align}
The above result can be re-written as (recall $\delta(1)=1$)
\begin{align*}
& \E[\left\| \nabla f(\hat{x}_T)\right\|^2] \notag \\ & \leq \frac{1}{\sum_{\delta}}\left( W_1^1+ \frac{r\left(\frac{G^2}{L} + 2\Delta_0\right)}{\delta(N+1)} + \frac{\Delta_z}{m\delta(N)\delta(N+1)} + \frac{(A_1+\Delta_z+\frac{\Delta_0}{1-\beta})}{m}\sum_{t=1}^{N} \frac{1}{\delta(t)^2}  \right) \notag \\
&\quad  + \frac{1}{\sum_{\delta}}\left( \left(r\left(\frac{G^2}{L} + 2\Delta_0\right) + A_2 G^2\right)\sum_{t=1}^{N}\frac{1}{\delta(t)} + B_1 G^2M T\right) \notag \\
& \leq \frac{1}{\sum_{\delta}}\left(W_1^1 + \frac{C_0}{\delta(N+1)} + +\frac{\Delta_z}{m\delta(N)\delta(N+1)} + \frac{C_1}{m} \sum_{t=1}^{N} \frac{1}{\delta(t)^2}  + C_2\sum_{t=1}^{N}\frac{1}{\delta(t)} +  B_1 G^2  M T \right)
\end{align*}
where  $C_0 = r(\frac{G^2}{L} + 2\Delta_0)$, $C_1 =A_1+\Delta_z+\frac{\Delta_0}{1-\beta} $, and $C_2 =C_0 + A_2G^2 $.
\end{proof}

\begin{proof}({\bf of Theorem \ref{thm:mom:stepdecay:0}})
In this case, given the total number iteration $T \geq 1$, the number of stages $N \geq 1$, $S_t = S = T/N$, $\delta(t) = 1/\alpha^{t-1}$ for each $1\leq t \leq N$, then the boundary function at the final stage $\delta(N) =  1/\alpha^{N-1}$ and $\delta(N+1) = 1/\alpha^N$. Applying the specific value of $\delta(t)$ and $N$ gives
\begin{align}
\label{inequ:sum:delta} \sum_{t=1}^{N} \frac{1}{\delta(t)}  & = \frac{\alpha^{N}-1}{\alpha-1} \\
\sum_{t=1}^{N} \frac{1}{\delta(t)^2}  &  = \frac{\alpha^{2N}-1}{\alpha^2-1}. \end{align}
Then by $S_t = T/N$ and  (\ref{inequ:sum:delta}), we easily get 
\begin{align}
 {\sum}_{\delta}  = \sum_{t=1}^{N}S_t/\delta(t) &  = \frac{T(\alpha^N-1)}{N(\alpha-1)}.
\end{align}
We then plug the above results into Lemma \ref{lem:mom:xhat}, 
\begin{align}
 & \E[\left\| \nabla f(\hat{x}_T)\right\|^2] \notag \\ & \leq  \frac{N(\alpha -1)}{T(\alpha^N-1)}\left(W_1^1+ C_0 \alpha^N + \frac{\Delta_z\alpha^{2N-1}}{m} + \frac{C_1(\alpha^{2N}-1)}{m (\alpha^2-1)}  + \frac{C_2(\alpha^N-1)}{\alpha-1} + M B_1G^2 T\right) \notag \\
 & \leq  W_1^1\frac{N}{T\alpha^{N-1}} + \left(\alpha C_0 + C_2\right) \frac{N}{T} + \frac{\left(\Delta_z +  C_1\right)}{m }\frac{N\alpha^N}{T}  + M B_1G^2\frac{N }{\alpha^{N-1}}.
\end{align}
where $C_0 = r(\frac{G^2}{L} + 2\Delta_0)$, $C_1 =A_1+\Delta_z+\frac{\Delta_0}{1-\beta} $, and $C_2 =C_0 + A_2G^2 $, $A_2 =1+\frac{\beta}{2(1-\beta)^2}$, and $W_1^1$, $A_1$, $B_1$, $\Delta_z$, and $r$ are defined in Lemma \ref{lem:mom:wit2}.

Especially, we consider the number of outer-stage $N = (\log_{\alpha}T)/2$, the stage length $S_t = 2T/\log_{\alpha} T$, and the boundary functions $\delta(t) = 1/\alpha^{t-1}$ for all $ t \in \left\lbrace1,2,\cdots, N \right\rbrace$. Let $N= (\log_{\alpha} T)/2$, we have
\begin{align}
\E[\left\| \nabla f(\hat{x}_T)\right\|^2] & \leq  \frac{\alpha W_1^1}{2\ln \alpha}\frac{\ln T}{T^{3/2}} + \frac{(\alpha C_0+C_2)}{2\ln \alpha}\frac{\ln T}{T} + \frac{(\Delta_z+C_1)}{2m\ln\alpha }\frac{\ln T}{\sqrt{T}} + \frac{\alpha M B_1G^2}{2\ln \alpha} \frac{\ln T }{\sqrt{T}}.
\end{align}
Therefore, we complete the proof. 
\end{proof}

\begin{proof}({\bf of Theorem \ref{thm:mom:stepdecay:2}})
In this theorem, we consider the boundary functions $\delta(t) = 1/\alpha^{t-1}$, and the stage length $S_t = S_0\alpha^{t-1}$ with $S_0=\sqrt{T}$, then we have the  number of stages $N = \log_{\alpha}((\alpha-1)\sqrt{T}+1)$ and $\delta_1(N) = \alpha/((\alpha-1)\sqrt{T}+1)$. Next we estimate $\sum_{\delta} = \sum_{t=1}^{N}S_t\delta^{-1}(t) = (\alpha-1)^2 T^{3/2} + 2(\alpha-1)T$. Then applying these results into Lemma \ref{lem:mom:xhat}, we have
\begin{align}
\E[\left\| \nabla f(\hat{x}_T)\right\|^2] \leq \mathcal{O}\left(\frac{W_1^1}{T^{3/2}} + \frac{C_0}{T} + \frac{\Delta_z}{m\sqrt{T}} + \frac{C_1}{m\sqrt{T}} +  \frac{M B_1 G^2}{\sqrt{T}} \right).
% \sqrt{T} + \frac{T}{m} + M T + \frac{\sqrt{T}}{m}
\end{align}

\end{proof}

\section{Supplementary Convergence Results}

\subsection{Convergence of SGDM with Constant and $1/\sqrt{t}$ Bandwidth Decaying step-size}
\label{suppl:mom:1sqrt}
We focus on a single stage that $N=1$. In this case, we first consider the constant step-size $\eta_i^t = \eta_0/\sqrt{T}$. Recalling the result of Lemma \ref{lem:mom:wit2} and letting $\eta_i^t = \eta_0/\sqrt{T}$ for each $i \geq 1$, we have
\begin{align*}
\frac{1}{T}\sum_{i=1}^{T}\E[\left\|\nabla f(x_i^1)\right\|^2 ] \leq \left(\frac{W_1^1}{T}  +  \frac{B_1G^2 \eta_0}{\sqrt{T}}\right)
\end{align*}
where $B_1=r(1-\beta)(2-\beta) + \frac{L}{2(1-\beta)^2}$ and $r = \frac{\beta L}{2(1-\beta^2)(1-\beta)^2}$.

Next we turn to analyze the $1/\sqrt{t}$ bandwidth step-size $\eta_i^1 \in [m/\sqrt{i}, M/\sqrt{i}]$ (which is also monotonic decreasing).  Recalling the result of Lemma \ref{lem:mom:wit2} 
\begin{align}
\E[W_{i+1}^t \mid \mathcal{F}_i^t] & \leq W_i^t + A_1\left(\frac{1}{\eta_i^t} - \frac{1}{\eta_{i-1}^t}\right)  -  \left\|\nabla f(x_i^t)\right\|^2  + \eta_i^t B_1 G^2
\end{align}
and applying the result from $i=1$ to $T$, we have
\begin{align*}
\frac{1}{T}\sum_{i=1}^{T}\E[\left\|\nabla f(x_i^1)\right\|^2 ] \leq \frac{1}{T}\left(\left(W_1^1 - \E[W_{T+1}^1] \right) + A_1\left( \frac{1}{\eta_{T}^1} - \frac{1}{\eta_0^1}\right) +  B_1G^2\sum_{i=1}^{T} \eta_i^1\right).
\end{align*}
Then applying the step-size $\eta_i^1 \in [m/\sqrt{i}, M/\sqrt{i}]$ gives
\begin{align}
 \frac{1}{T}\sum_{i=1}^{T}\E[\left\|\nabla f(x_i^1)\right\|^2 ] \leq  \left(\frac{W_1^1}{T} + \frac{A_1}{m\sqrt{T}} + \frac{2M B_1G^2}{\sqrt{T}} \right). 
\end{align}
Thus, we can achieve an $\mathcal{O}(1/\sqrt{T})$ optimal rate for SGDM with  $1/\sqrt{t}$ bandwidth step-size on nonconvex problems. When $M=m$, then the step-size reduces to $\eta_i^1 = m/\sqrt{i}$, we also provide the convergence guarantee for the commonly used $1/\sqrt{i}$ decaying step-size.
\subsection{Convergence Guarantees for Cyclical Step-Sizes}
\label{suppl:cyclical}
In \cite{loshchilov2016sgdr}, the authors proposed a cosine annealing step-size
\begin{align}
 \eta^t = \eta_{\min}^t + \frac{1}{2}\left(\eta_{\max}^t -\eta_{\min}^t \right) (1+\cos(T_{cur} \cdot \pi/T_t)).
\end{align}
where $\eta_{\min}^t$ and $\eta_{\max}^t$ are ranges of the step-size, and $T_{cur}$ accounts for how many epochs since the beginning of the current stage and $T_t$ accounts for the current stage length (epoch). At each stage $t$, the step-size is monotonically decaying within the range $\eta_{\min}^t$ and $\eta_{\max}^t$. In this paper, we propose a general bandwidth framework for step-size which can cover this situation as long as $m\delta(t) \leq \eta_{\min}^t$ and $\eta_{\max}^t \leq M\delta(t)$. If the ranges $\left\lbrace \eta_{\min}^t, \eta_{\max}^t \right\rbrace$ and the stage length are chosen as for example $1/\sqrt{t}$ in Theorems \ref{thm:band:1sqrt} and \ref{thm:band:1sqrt:2} or step-decay in Theorems \ref{thm:band:stepdecay} and \ref{thm:band:stepdecay:optimal}, the theoretical convergence of SGD under the cosine annealing step-size is guaranteed by our analysis in Section \ref{sec:3:sgd}. Moreover, because the cosine annealing is monotonic at each stage, so the convergence of SGD with momentum under the cosine annealing policy is also guaranteed by the analysis of Section \ref{sec:4:mom} as long as the ranges $\eta_{\min}^t, \eta_{\max}^t$ are within our bands. To the best of our knowledge, \cite{li2020exponential} provides a convergence guarantee for cosine step-size. However, to achieve a near-optimal rate for the general smooth (non-convex) problems, the initial step-size is required to be bounded by $\mathcal{O}(1/\sqrt{T})$ which is obviously impractical when the total number of iteration $T$ is large (also discussed in related work). In our framework, the cosine step-size is allowed to start from a  larger step-size and gradually decay. Besides, our results (e.g., Theorems \ref{thm:band:stepdecay:optimal} and \ref{thm:mom:stepdecay:2}) provide state-of-art convergence guarantees for cosine step-size which remove the $\log T$ term of \cite{li2020exponential}.

Another interesting example is triangular cyclical step-size proposed by \cite{smith2017cyclical}, which sets minimum $\eta_{\min}^t$   and maximum $\eta_{\max}^t$ boundaries and the
learning rate cyclically varies (linearly increasing then linearly decreasing) in these bounds. In each stage, the step-size is non-monotonic. In their paper, the author also consider a variant which cuts $\eta_{\min}^t$ and $\eta_{\max}^t$ in half after each stage. This is exactly the step-decay boundary we discussed. Our analysis in Section \ref{sec:3:sgd} can provide the convergence guarantees for such kinds of step-sizes. However, such cyclical step-size is not monotonic in each stage, so our analysis for SGDM in Section \ref{sec:4:mom} is not suitable for this situation.

\subsection{Discussion about the Trust-region-ish algorithm by \citet{curtis2019stochastic} }
\label{rem:trust-region}
\cite{curtis2019stochastic} proposed a trust-region-ish algorithm which uses a careful normalization to adapt the learning rate based on the norm of stochastic gradient:  given sequences $\gamma_1^t \geq \gamma_2^t > 0$, $\left\lbrace \alpha_t \right\rbrace > 0$
\begin{equation}
\eta^t = 
 \begin{cases}
 \gamma_1^t \alpha_t & \text{if}\, \left\| g_t\right\| \in [0, \frac{1}{\gamma_1^t})  \\
 \frac{\alpha_t}{\left\| g_t\right\|}  & \text{if}\, \left\|g_t \right\|\in [\frac{1}{\gamma_1^t}, \frac{1}{\gamma_2^t}] \\
 \gamma_2^t \alpha_t & \text{if} \, \left\|g_t \right\| \in (\frac{1}{\gamma_2^t}, \infty)
 \end{cases}
\end{equation}
We observe that the step-size satisfies  $\gamma_2^t \alpha_t \leq \eta^t \leq \gamma_1^t \alpha_t$, i.e. is a bandwidth step-size with boundary functions $\delta(t)=\alpha_t$ and $\gamma_2^t, \gamma_1^t \in [m, M]$. Therefore, our analysis can easily provide convergence guarantees for such trust-region-ish algorithm if the step-size is selected as we do. \citet[Theorem 5]{curtis2019stochastic} provided the convergence guarantee $\E[\left\| \nabla f(x)\right\|^2] \leq \frac{\theta_2}{\theta_1 \alpha_0}  \approx \mathcal{O}(1) + \mathcal{O}(\alpha_0)$ under Non-PL condition with constant $\alpha_t =\alpha_0$. %which is close to the solution in proportional to $\alpha$ with a constant %error term. 
However, in Lemma \ref{lem:sgd} if  $\delta(t) = \delta $ is a constant, then we can get $\E[\left\| \nabla f(x)\right\|^2] \leq \mathcal{O}(\frac{1}{Tm\delta} + \frac{M^2\delta}{m})$. Letting $\delta = 1/\sqrt{T}$, we can achieve a convergence guarantee of $\mathcal{O}(1/\sqrt{T})$, which is clearly stronger than that of \cite{curtis2019stochastic}.

\section{Additional Details of the Experiments on Bandwidth Step-Sizes}\label{numerical:suppl}
In this section, we provide additional details about the numerical experiments in Section~\ref{sec:numerical}.

\subsection{How to Design the Bandwidth Step-Sizes and Select Parameters}\label{numerical:suppl:band}

To better understand the bandwidth step-sizes tested in the numerical experiments, we visualize the step-size $\eta_i^t$ (y-axis is $\log(\eta_i^t)$) vs the number of epochs in Figure~\ref{stepsize:band}. We first consider the popular ``step-decay'' policy as the baseline. During the first stage, the bandwidth step-size follows the lower bound. From the second stage and on, we let the initial step-size in each stage to be equal to the upper bound and the last step-size in the stage to reach the lower bound. Our numerical experience has shown that the best performance is obtained when the initial step-size of each stage is larger than the final step-size of the previous stage, which means that the step-size experiences a sudden increase before it decreases again. For the step-decay band, we consider four decay modes: $1/i$, $1/\sqrt{i}$, linearly, and according to a cosine function~\citep{loshchilov2016sgdr} and update the step size each epoch. If the training size is $n$ and sample size per iteration is $b$, then one epoch is $n/b$ iterations.

We also adopt the polynomial $1/\sqrt{t}$ step-size as the boundary function, named $1/\sqrt{t}$-band, and update the step-size every epoch. We add similar perturbation as for the step-decay band, but we do not apply the perturbation per stage. Otherwise, the perturbation is too frequent and just increases the variance of the iterates. We tune the frequency of the perturbations (denoted as $N_0$) to undergo a similar number of cycles as the step-decay perturbations. In the first cycle, the bandwidth step-sizes agree with their lower bound, just as for step-decay. From the second cycle, we begin to add the decreasing perturbations, e.g., $1/\sqrt{i}$, $1/i$ and linearly. As discussed above, these perturbations are only adjusted between stages. Several different $1/\sqrt{t}$ bandwidth step-sizes are shown in Figure \ref{stepsize:band}.

 \begin{figure}[h]
\begin{center}
\centerline{\subfigure{ \label{sqrt:band} \includegraphics[width=0.35\textwidth,height=1.8in]{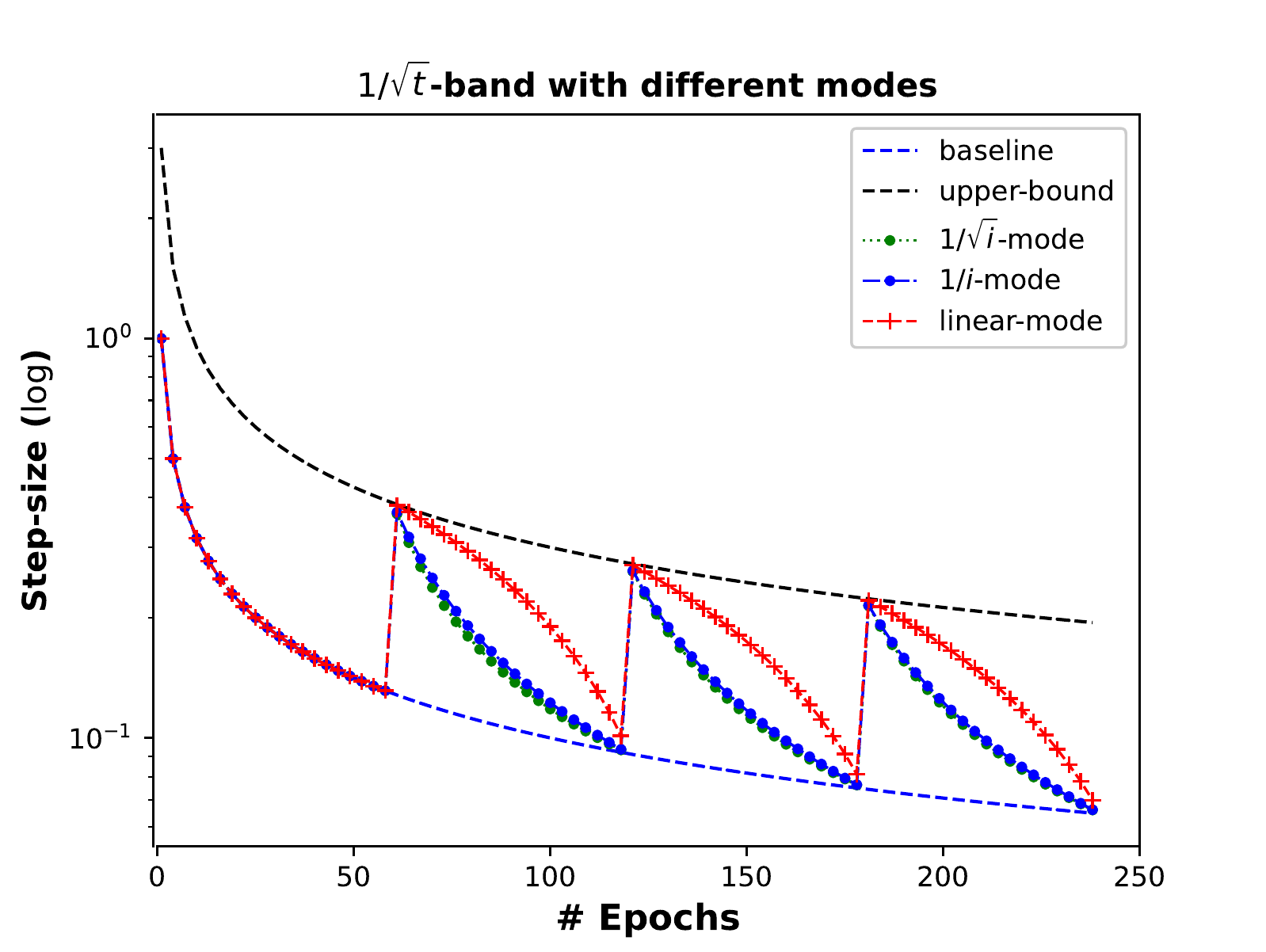}}
 \subfigure{ \label{stepdecay:band} \includegraphics[width=0.35\textwidth,height=1.8in]{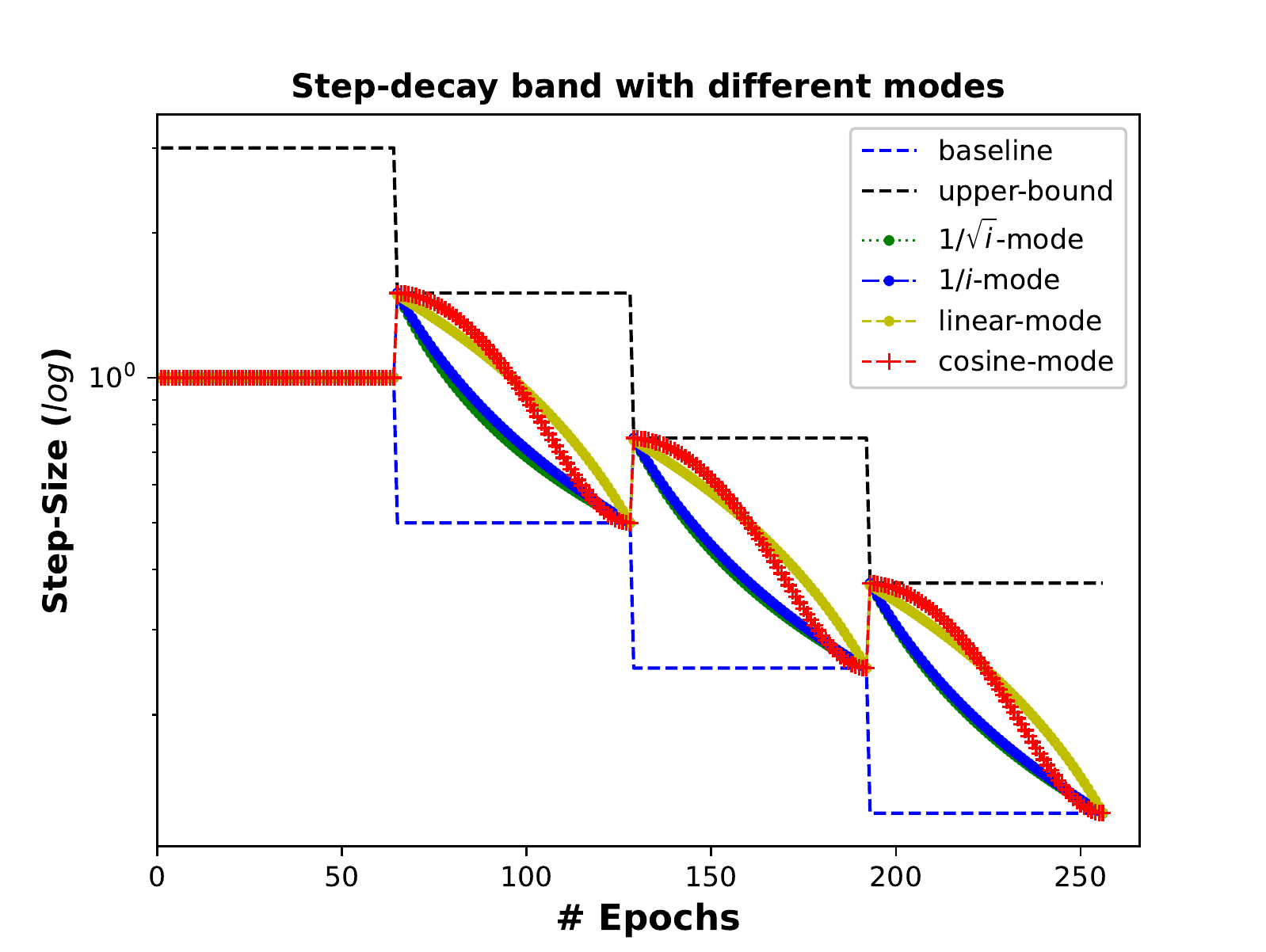}}
}
 \caption{Bandwidth step-sizes: $1/\sqrt{t}$-band (left); step-decay-band (right)}
 \label{stepsize:band}
\end{center}
 \end{figure}
 
We perform a grid search for the initial step-size $\eta_0 \in \left\lbrace 0.01, 0.05, 0.1, 0.5, 1, 5\right\rbrace$ of the baseline step-sizes. For step-decay step-size (baseline), we select the decay factor $\alpha$ from $\left\lbrace 1.5, 2, 3, 4, \cdots, 12\right\rbrace$ and set the number of stages $N = \lfloor\log_{\alpha} T/2\rfloor$ according to Theorems \ref{thm:band:stepdecay} and \ref{thm:mom:stepdecay:0}. We choose the lower bound parameter $m = \eta_0$ to agree with the baseline. The bandwidth $s = M/m = \alpha \theta$ where $\theta \in \left\lbrace 0.5, 0.8, 1, 1.2, 1.3, 1.5, 1.8\right\rbrace$. If $\theta > 1$, it means that the starting step-size at the current stage is larger than the ending step-size from the previous stage. For the $1/\sqrt{t}$-band step-sizes, we choose $\eta_i^t = \eta_0/(1+a\sqrt{t})$ as the baseline , and select the best $a > 0$ to make the final step-size reach the interval $\left\lbrace 0.001, 0.005, 0.01, 0.05, 0.1, 0.5, 1\right\rbrace$. Moreover, we select the lower bound parameter $m=\eta_0$ to make sure that the lower bound agrees with the baseline. For the upper bound parameter $M$, we do a grid search for $s=M/m \in \left\lbrace 2, 3, 4, 5, 6\right\rbrace$. The number of perturbation cycles $N_0$ for the $1/\sqrt{t}$-band step-sizes is chosen from $\left\lbrace 1, 2, 3, 4 \right\rbrace$. All the hyper-parameters are selected to work best according to their performance on the test dataset.

\subsection{Experiments Details on CIFAR10 and CIFAR100 Datasets}
\label{cifar:suppl}

In this subsection, we will give the implementation details for the experiments on CIFAR10 and CIFAR100. The benchmark datasets CIFAR10 and CIFAR100~\citep{KrizWeb} both consist of 60000 colour images (50000 training images and the rest 10000 images for testing). The maximum epochs called for the two datasets is 180 and the batch size is $128$. All the experiments on CIFAR datasets are implemented in Python 3.7.4 and run on 2 x Nvidia Tesla V100 SXM2 GPUs with 32GB RAM. All experiments are repeated 5 times to eliminate the effect of the randomness. The performance of different algorithms is evaluated in terms of their loss function value and classification accuracy on the test dataset. All the results for test accuracy and test loss are reported in Figures~\ref{fig:sgd:mom} and \ref{fig:sgd:mom:testloss}, respectively.

For CIFAR 10, we train an 18-layer Resident Network model \citep{he2016deep}  called ResNet-18. We first test the vanlia SGD with a weight decay of 0.0005. 
The initial step-size $\eta_0 = 1$ and $a=1.41618$ for $1/\sqrt{t}$ step-size (baseline). For $1/\sqrt{t}$-band step-sizes, we set:  $s=2$ and $N_0=3$ for $1/\sqrt{i}$ mode;  $s=3$ and $N_0=3$ for $1/i$ mode; and  $s=4$ and $N_0=3$ for linear mode.  For the step-decay step-size (baseline) and also the bandwidth step-sizes, the initial step-size $\eta_0=0.5$ and decay factor $\alpha=6$. For step-decay band step-sizes, the parameter $\theta$ is $1.3$ for the $1/\sqrt{i}$, $1/i$ and linear modes, and is $1.2$ for cosine mode. We also implement the SGD with momentum (SGDM) algorithm, with the momentum parameter of $ 0.9$ and a weight decay of $0.0005$. For the step-decay step-size, the initial step-size $\eta_0$ is 0.05 and the decay factor $\alpha$ is $6$. We choose the same initial step-size and decay factor for the step-decay bandwidth step-sizes. The best $\theta$ is $1.3$ for the four decay modes.

In a similar way, we also detail our parameter selection for the experiments on CIFAR100. On this data set, we train a $28\times 10$ wide residual network (WRN-28-10) \citep{zagoruyko2016wide}. We first implement vanilla SGD with a weight decay of $0.0005$. For the baseline of  the $1/\sqrt{t}$ band, we set $\eta_0=0.5$ and $a=3.65224$. For the $1/\sqrt{t}$-band step-sizes we use: $s=2$ and $N_0=3$ for $1/\sqrt{i}$-mode; $s=4$ and $N_0=3$ for $1/i$-mode; $s=4$ and $N_0=2$ for linear-mode. For the step-decay band, we choose $\eta_0=0.5$ and $\alpha=6$. The parameter $\theta$ is set to $1.3$ for $1/i$ mode and $1.2$ for the other modes. Then we also apply the step-decay band step-sizes on the SGD with momentum (SGDM) algorithm, where the momentum parameter is $0.9$ and the weight decay parameter is $0.0005$. We set the initial step-size $\eta_0=0.1$ and $\alpha=6$ for the baseline and other step-decay band step-sizes; $\theta=1.2$ for $1/\sqrt{i}$ and linear modes; and $\theta=1.3$ for $1/i$ and cosine modes.

\begin{figure}[t]
    \centering
\begin{minipage}{1\textwidth}
 \centering
 \includegraphics[width=0.32\textwidth,height=1.8in]{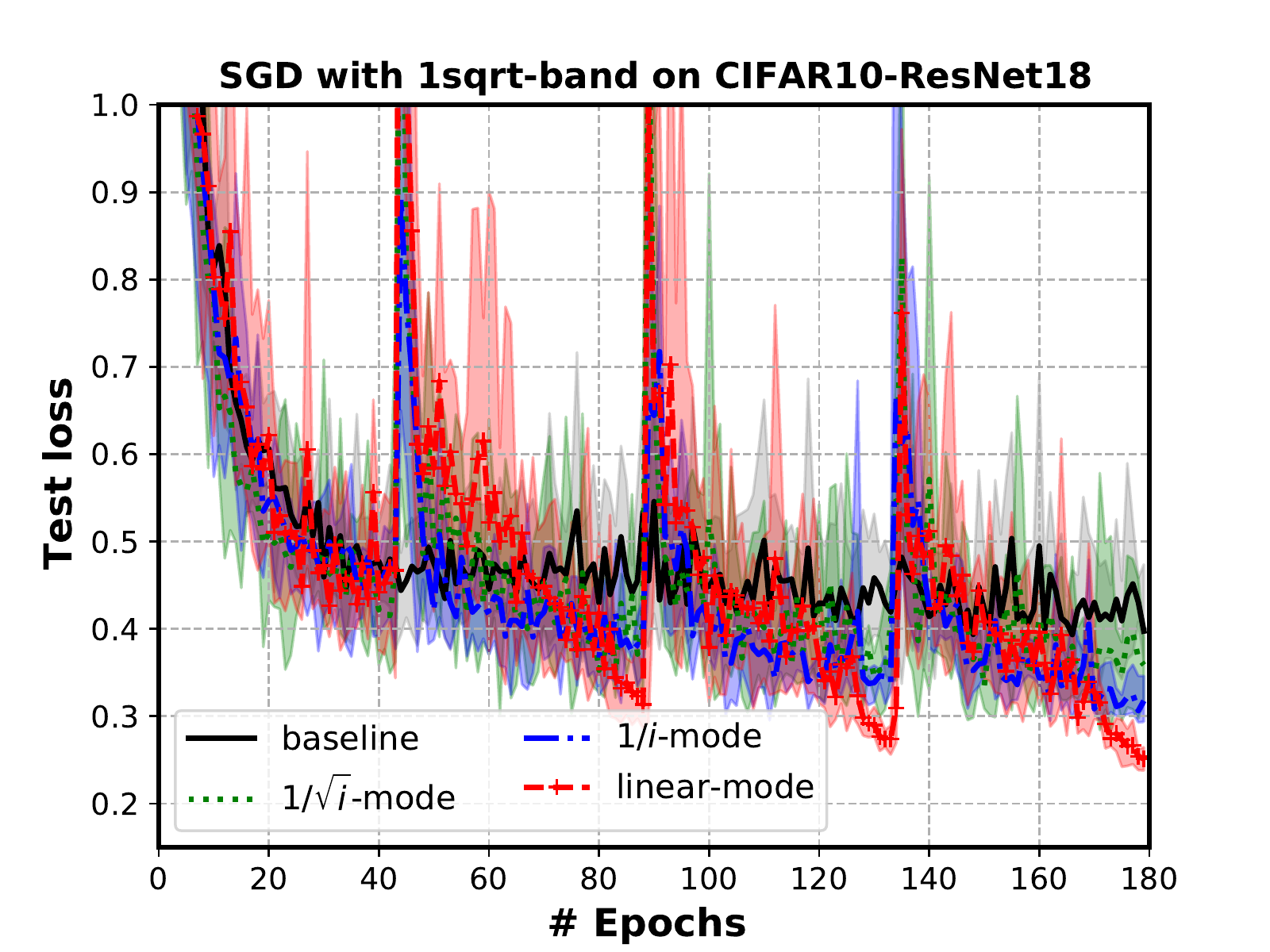} 
 \includegraphics[width=0.32\textwidth,height=1.8in]{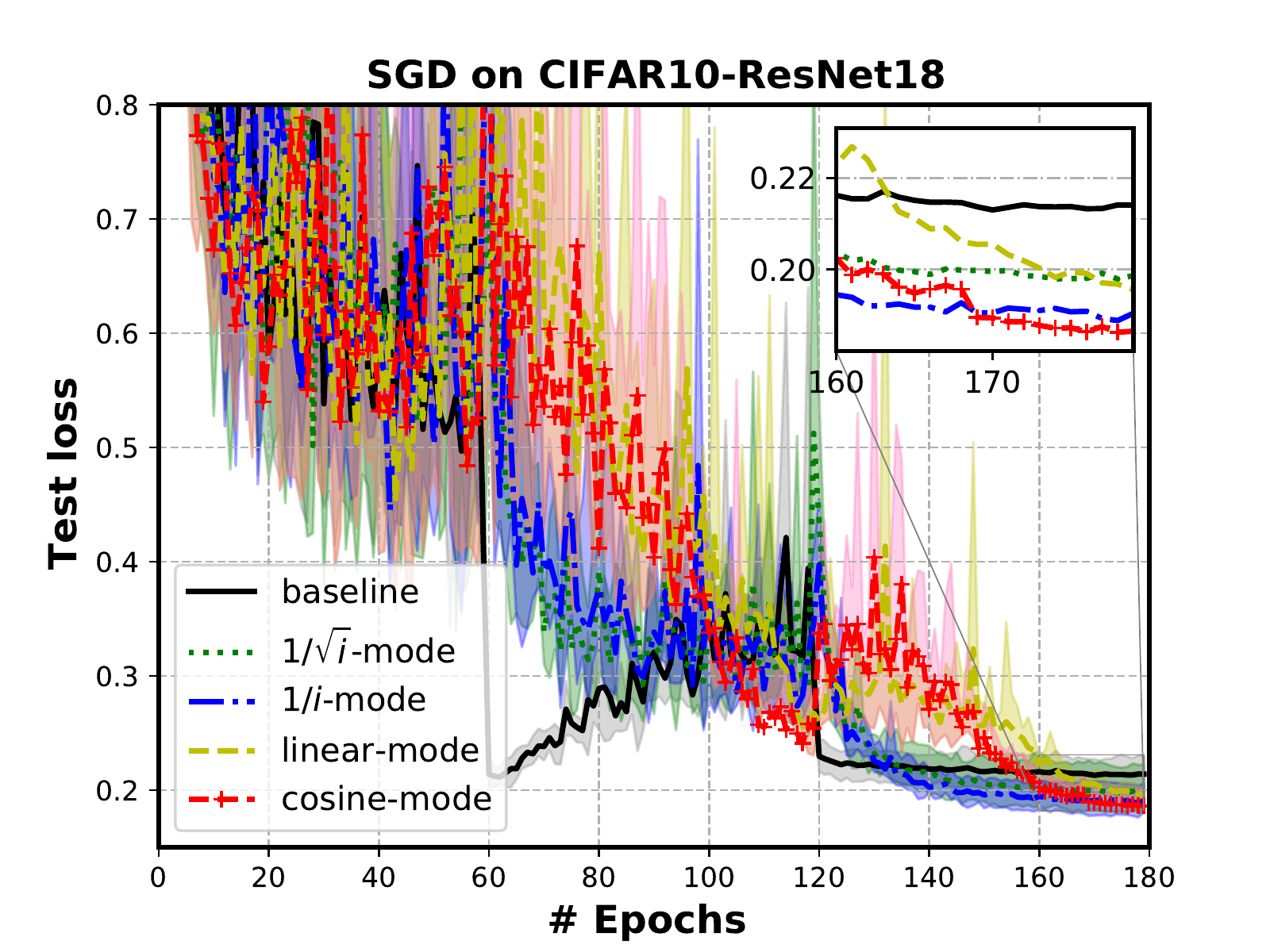}
\includegraphics[width=0.32\textwidth,height=1.8in]{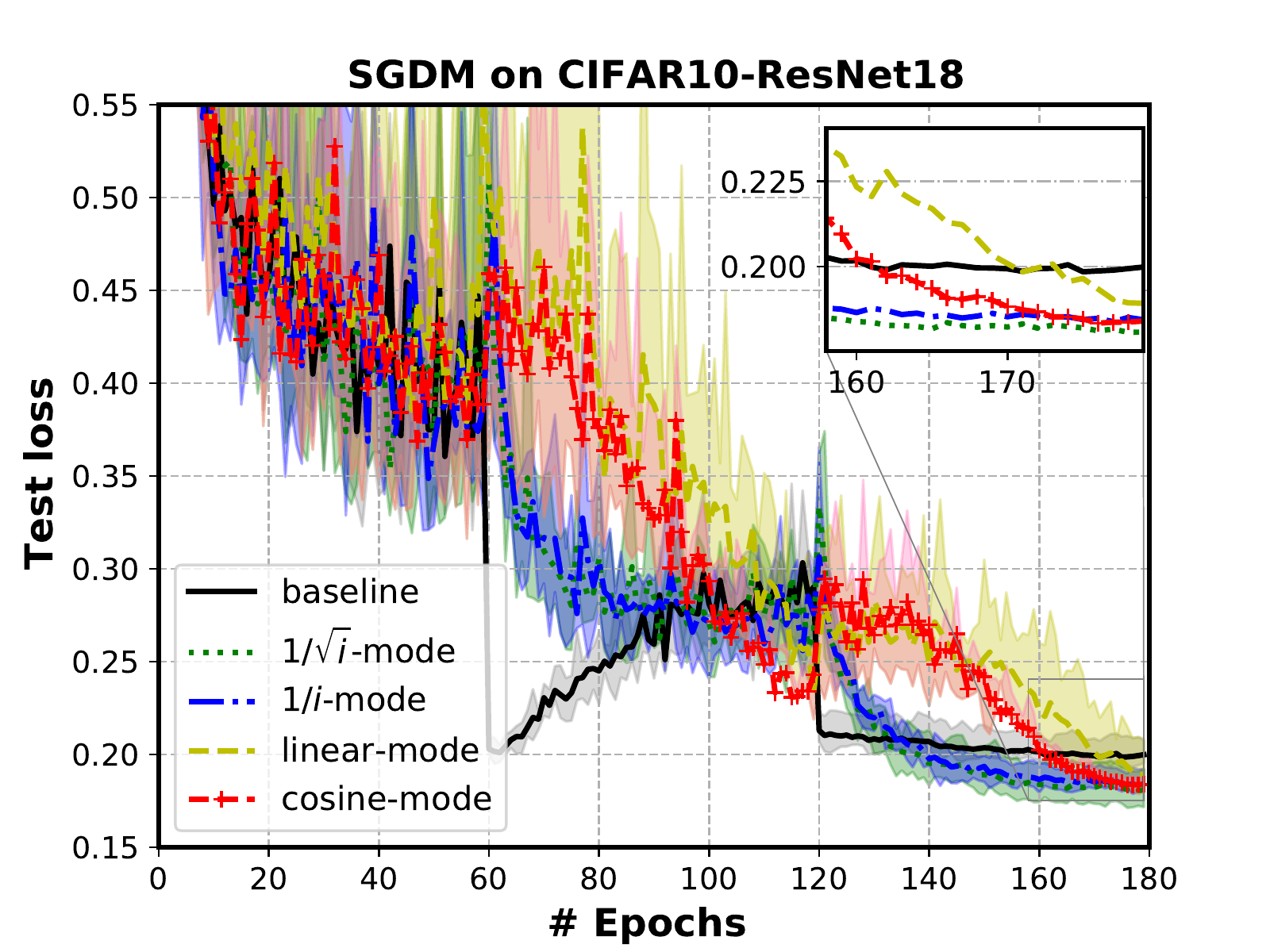}
\end{minipage} 
\\
\begin{minipage}{1\textwidth}
 \centering
 \includegraphics[width=0.32\textwidth,height=1.8in]{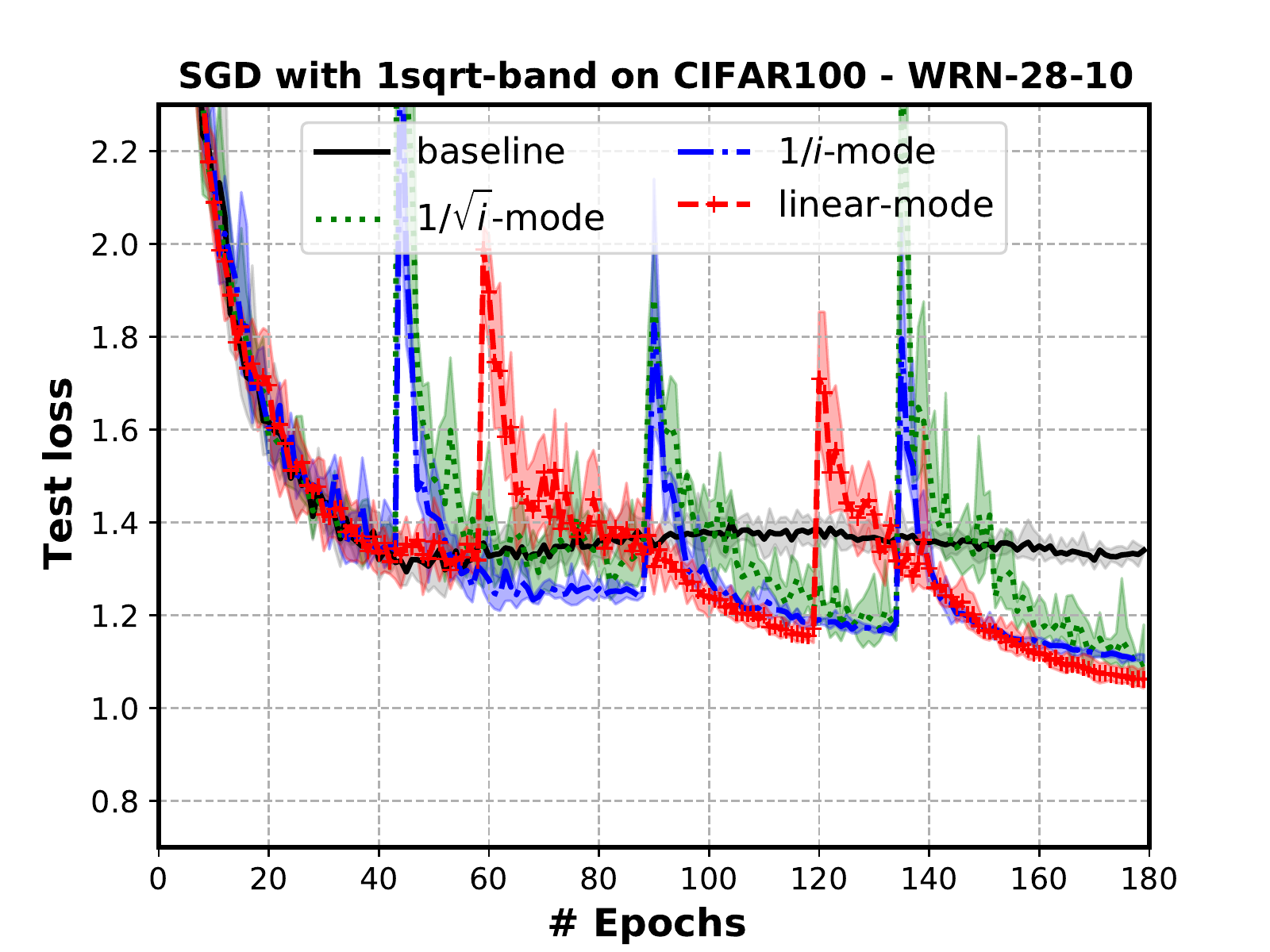}
       \includegraphics[width=0.32\textwidth,height=1.8in]{mom_stepdecay_band_cifar10_testloss_2.pdf}
          \includegraphics[width=0.32\textwidth,height=1.8in]{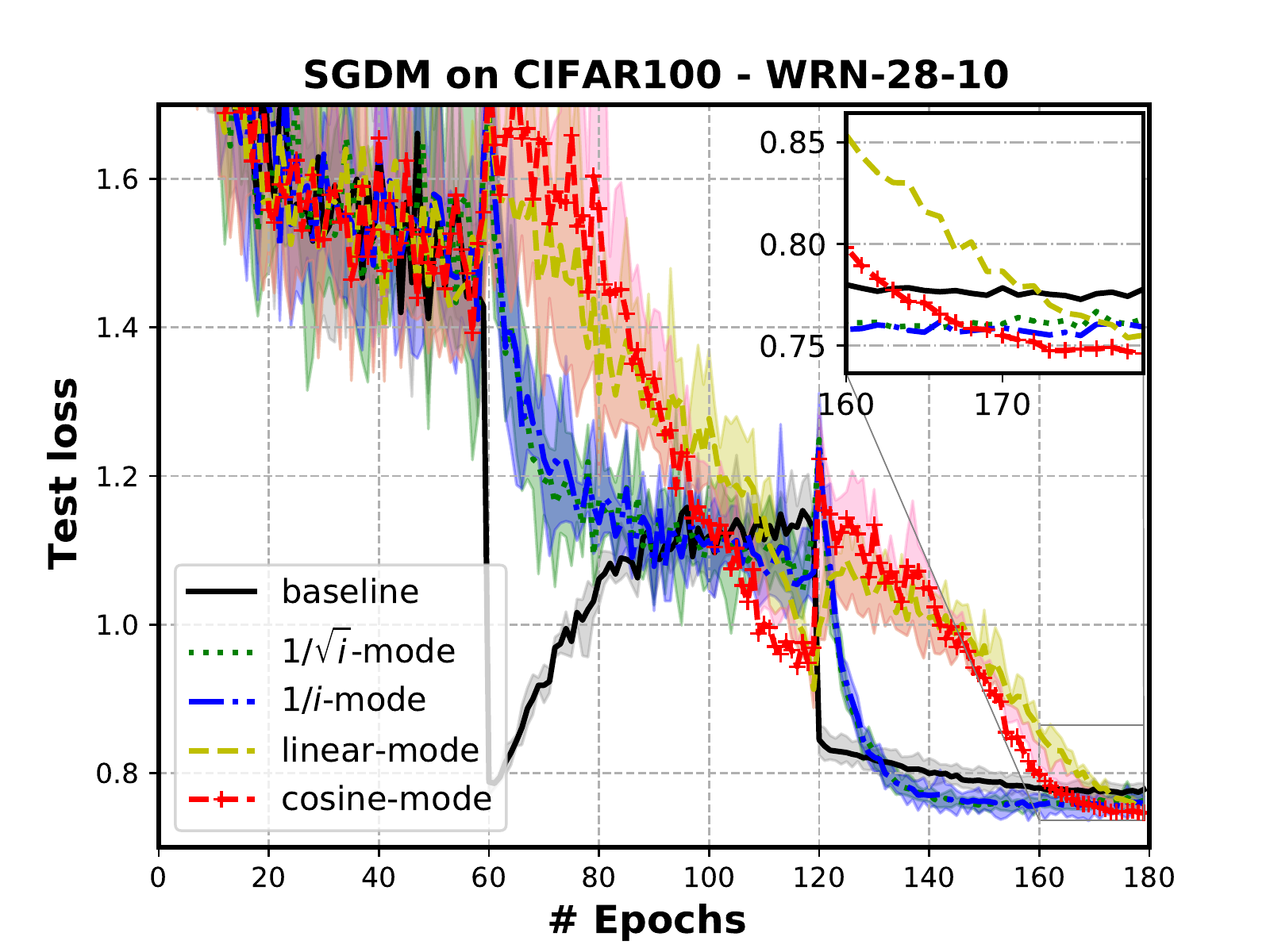}   
\end{minipage}
\caption{The test loss of $1/\sqrt{t}$-band (left column), step-decay band for SGD (middle column), and step-decay band for SGDM (right column)}
\label{fig:sgd:mom:testloss}
% \begin{minipage}{.48\textwidth}
%     \centering
%     \subfigure%[Testing loss]
%       {\label{sgd:sqrt-band:cifar100:testloss} \includegraphics[width=0.45\textwidth,height=1.2in]{images/1sqrt_band_cifar100_testloss_2.pdf}} \hfil
%     \subfigure%[Testing accuracy]
%       {\label{sgd:sqrt-band:cifar100:testaccu}
%       \includegraphics[width=0.45\textwidth,height=1.2in]{images/1sqrt_band_cifar100_accuracy_2.pdf}}
%       \caption{SGD - CIFAR100 - $1/\sqrt{t}$-band } 
%       \label{fig:sgd:1sqrt:cifar100}
% \end{minipage} 
\end{figure}

% \begin{figure}[ht]
%     \centering
% \begin{minipage}{1\textwidth}
%         \subfigure%[Testing loss]
%       {\label{sgd:stepdecay-band:cifar10:testloss} \includegraphics[width=0.3\textwidth,height=1.5in]{images/sgd_stepdecay_band_cifar10_testloss_2.pdf}}
%   \hfil
%     \subfigure%[Testing loss]
%       {\label{sgd:stepdecay-band:cifar100:testloss} \includegraphics[width=0.3\textwidth,height=1.5in]{images/}}
% \end{minipage}

% \begin{minipage}{1\textwidth}
%         \centering
%     \subfigure%[Testing loss]
%       {\label{mom:stepdecay-band:cifar10:testloss}
%       \includegraphics[width=0.3\textwidth,height=1.5in]{images/mom_stepdecay_band_cifar10_testloss_2.pdf}}
%     \hfil
%     \subfigure%[Testing loss]
%       {\label{mom:stepdecay-band:cifar100:testloss}
%       \includegraphics[width=0.3\textwidth,height=1.5in]{images/mom_stepdecay_band_cifar100_testloss_2.pdf}}
%       \caption{SGDM - CIFAR10 - Step-decay-band}  \label{fig:mom:stepdecay:cifar10}
% \end{minipage}
% \end{figure}

\subsection{Experiments Results of the Toy Example on Bandwidth Step-Size}
\label{toy:suppl}
In this subsection, we describe the toy example from the introduction in detail  (see Figure \ref{fig:contour:bandwidth} and Table \ref{tab:rates}). We also report additional results using other bandwidth step-sizes. 

The loss-function is the two-dimensional ($x, y \in \R$ are the variables) non-convex function: $$f(x, y) = \left((x + 0.7)^2 + 0.1\right)(x - 0.7)^2 + (y + 0.7)^2
\left((y - 0.7)^2 + 0.1\right)$$ which has four local minima (denoted by $\circled{1}$ to $\circled{4}$\footnote{Notation: $\circled{1}$ denotes the local minima at $(-0.7, 0.7)$; \circled{2} denotes the local minima at $(0.7, 0.7)$ ; \circled{3} denotes the local minima at $(-0.7, -0.7)$; \circled{4} denotes the global minima at $(0.7, -0.7)$. }), one of which  is global ($\circled{4}$). We execute 10000 algorithm runs with an initial point $(-0.9, 0.9)$. The total number of iterations is set to $T=3000$. The gradient noise is drawn from the standard normal distribution. The setting of the experiments follows~\citep{shi2020learning}. The step-sizes we tested in this part are similar to Section \ref{numerical:suppl:band}. The difference is that here we update the step-size per iterate instead of per epoch, as we did on the CIFAR datasets.  We report the percentage (\%) of the final iterate close to each local minima in Table \ref{tab:toy:long}. Note that the results for constant (large and small) step-size, step-decay (baseline), and step-decay with linear have already been presented in the introduction of the main document. The large constant step-size is $0.1$ and the small constant step-size is $0.05$. As we can see, $1/\sqrt{t}$-band with $1/\sqrt{i}$, $1/i$ and linear modes more likely to escape the bad local minima and find the global solution than their baseline. Except the result of step-decay with linear (shown in Figure \ref{fig:contour:bandwidth} and Table \ref{tab:rates}), we also find that other bandwidth step-sizes achieve good performance and work better than the baseline. 

% Next we give the details about the parameters. The large constant step-size $\eta_0=0.1$ and small constant step-size is $0.05$. The initial step-size $\eta_0 = 0.5$ for $1\sqrt{t}$-band step-sizes. We set $s=5$ and $N_0=6$ for $1/\sqrt{t}$ with $1/i$ mode; 

\begin{table}[t]
\caption{The percentage (\%) of the final iterate close to each local minima}
\vspace{0.2em}
    \centering
    \begin{tabular}{cccccc}
    \toprule
  step-size     & type  & \circled{1} &  \circled{2} &  \circled{3} &  \circled{4} \\\midrule
 \multirow{2}{*}{const} & small & {\bf 29.61} & 24.66 & 25.13  & 20.60 \\
&  large & 0.12 & 3.45 & 3.28 & {\bf 93.15} \\\midrule
\multirow{4}{*}{$1/\sqrt{t}$-band}   & baseline   &  {\bf 54.93} & 18.65 & 19.48 & 6.94  \\
& $1/\sqrt{i}$-mode &10.92  & 22.25  & 23.51 & {\bf 42.96} \\
 & $1/i$-mode & 6.92 & 21.57 & 22.63 & {\bf 45.74} \\
    & linear-mode & 3.75 & 15.86 & 16.37 & {\bf 63.97} \\\midrule
\multirow{5}{*}{step-decay-band}   & baseline & 0.40 & 6.89 & 7.55 & {\bf 85.16}  \\
  & $1/\sqrt{i}$-mode &  0.09 & 3.73 &  4.16 & {\bf 92.02} \\
 & $1/i$-mode & 0.09  & 3.55 & 3.85 & {\bf 92.51} \\
  & linear-mode & 0.14 &  2.95  &  2.99 & {\bf 93.92} \\
     & cosine-mode & 0.18 & 3.36 & 3.48  & {\bf 92.98  }
    \\\bottomrule
     \end{tabular}
    \label{tab:toy:long}
\end{table}

\end{document}